\newcommand*{\addFileDependency}[1]{
  \typeout{(#1)}
  \@addtofilelist{#1}
  \IfFileExists{#1}{}{\typeout{No file #1.}}
}
\DeclareMathOperator*{\argmax}{arg\,max}
\DeclareMathOperator*{\argmin}{arg\,min}
\newtheorem{theorem}{Theorem}
\newtheorem{definition}{Definition}
\newtheorem{lemma}{Lemma}
\newtheorem{assumption}{Assumption}
\title{Regret Minimization Experience Replay in Off-Policy Reinforcement Learning}
\author{%
   Xu-Hui Liu\thanks{Equal contribution}~,\ Zhenghai Xue\footnotemark[1]~, Jing-Cheng Pang, Shengyi Jiang, Feng Xu,\ Yang Yu\thanks{Corresponding author} \\
  National Key Laboratory of Novel Software Technology \\
  Nanjing University, Nanjing 210023, China \\
  \texttt{liuxh@lamda.nju.edu.cn, xuezh@smail.nju.edu.cn} \\
  \texttt{\{pangjc, jiangsy, xufeng\}@lamda.nju.edu.cn, yuy@nju.edu.cn}
}
\begin{document}
\maketitle

\begin{abstract}
In reinforcement learning, experience replay stores past samples for further reuse. Prioritized sampling is a promising technique to better utilize these samples. Previous criteria of prioritization include TD error, recentness and corrective feedback, which are mostly heuristically designed. In this work, we start from the regret minimization objective, and obtain an optimal prioritization strategy for Bellman update that can directly maximize the return of the policy. The theory suggests that data with higher hindsight TD error, better on-policiness and more accurate Q value should be assigned with higher weights during sampling. Thus most previous criteria only consider this strategy partially. We not only provide theoretical justifications for previous criteria, but also propose two new methods to compute the prioritization weight, namely ReMERN and ReMERT. ReMERN learns an error network, while ReMERT exploits the temporal ordering of states. Both methods outperform previous prioritized sampling algorithms in challenging RL benchmarks, including MuJoCo, Atari and Meta-World.
\end{abstract}

\section{Introduction}\label{introduction}
Reinforcement learning (RL)~\cite{sutton} has achieved great success in sequential decision making problems. Off-policy RL algorithms~\cite{sac, td3, rainbow, dqn, c51} have the ability to learn from a more general data distribution than on-policy counterparts, and often enjoy better sample efficiency. This is critical when the data collection process is expensive or dangerous. Experience Replay ~\cite{er} enables data reuse and has been widely used in off-policy reinforcement learning. 
Previous work~\cite{diff} points out that emphasizing on important samples in the replay buffer can benefit off-policy RL algorithms. Prioritized Experience Replay (PER)~\cite{per} quantifies such importance by the magnitude of temporal-difference (TD) error. Based on PER, many sampling strategies ~\cite{sequence, discor, gan} are proposed to perform prioritized sampling. They are either based on TD error~\cite{per, sequence, gan} or focused on the existence of corrective feedback~\cite{discor}. However, these are all proxy objectives and different from the objective of RL, i.e., minimizing policy regret. They can be suboptimal in some cases due to this objective mismatch.

In this paper, we first give examples to illustrate the objective mismatch in previous prioritization strategies. Experiments show that lower TD error or more accurate Q function can not guarantee better policy performance. To tackle this issue, we first formulate an optimization problem that directly minimizes the regret of the current policy with respect to prioritization weights. We then make several approximations and solve this optimization problem. An optimal prioritization strategy is obtained and indicates that we should pay more attention to experiences with higher hindsight TD error, better on-policiness and more accurate Q value.  To the best of our knowledge, this paper is the first to optimize the sampling distribution of replay buffer theoretically from the perspective of regret minimization. 

We then provide tractable approximations to the theoretical results. The on-policiness can be estimated by training a classifier to distinguish recent transitions, which are generally more on-policy, from early ones, which are generally more off-policy. The oracle Q value is inaccessible during training, so we can not calculate the accuracy of Q value directly. Inspired by DisCor ~\cite{discor}, we propose an algorithm named ReMERN which estimates the suboptimality of Q value with an error network updated by Approximate Dynamic Programming (ADP).

ReMERN outperforms previous methods in environments with high randomness, e.g. with stochastic target positions or noisy rewards. However, the training of an extra neural network can be slow and unstable. We propose another estimation of Q accuracy based on a temporal viewpoint. With Bellman updates, the error in Q value accumulates from the next state to the previous one all across the trajectory. 
The terminal state has no bootstrapping target and low Bellman error. Therefore, states fewer steps away from the terminal state will have lower error in the updated Q value because of the more accurate Bellman target. This intuition is verified both empirically and theoretically. We then propose Temporal Correctness Estimation (TCE) based on the distance of each state to a terminal state, and name the overall algorithm ReMERT.

Similar to PER, ReMERN and ReMERT can be a plug-in module to all off-policy RL algorithms with a replay buffer, including but not limited to DQN~\cite{dqn} and SAC~\cite{sac}. Experiments show that ReMERN and ReMERT substantially improve the performance of standard  off-policy RL methods in various benchmarks.

\section{Background}

\subsection{Preliminaries}
A Markov decision process (MDP) is denoted $(\mathcal S, \mathcal A, T, r, \gamma, \rho_0)$, where $\mathcal S$  is the state space, and $\mathcal A$ is the action space. $T(s'|s, a)$ and $r(s, a)\in [0, \textnormal{R}_{\max}]$ are the transition and reward function. $\gamma\in (0, 1)$ is the discounted factor and $\rho_0(s)$ is the distribution of the initial state. The target of reinforcement learning is to find a policy that maximizes the expected return:
$\eta(\pi)=\mathbb E_\pi[\sum_{t\geq 0}\gamma^t r(s_t,a_t)]$, where the expectation is calculated from trajectories sampled from $s_0\sim \rho_0$, $a_t\sim \pi(\cdot|s_t)$, and $s_{t+1}\sim T(\cdot|s_t,a_t)$ for $t\geq 0$.

For a fixed policy, an MDP becomes a Markov chain, where the discounted stationary state distribution is defined as $d^\pi(s)$. With a slight abuse of notation, the discounted stationary state-action distribution is defined as $d^\pi(s,a)=d^\pi(s)\pi(a|s)$. Then the expected return can be rewritten as $\eta(\pi)=\frac{1}{1-\gamma}\mathbb E_{d^\pi(s,a)}[r(s, a)]$. We assume there exists an optimal policy $\pi^*$ such that $\pi^*=\argmax_\pi \eta(\pi)$. We use the standard definition of the state-action value function, or Q function: $Q^\pi(s, a)=\mathbb E_\pi[\sum_{t\geq 0}\gamma^t r(s_t, a_t)|s_0=s, a_0=a].$
 Let $Q^*$ be the shorthand for $Q^{\pi^*}$. $Q^*$ satisfies the Bellman equation $Q^*(s,a)=\mathcal B^*(Q^*(s,a))$, where $\mathcal{B}^* : \mathbb R^{\mathcal{S}\times\mathcal{A}} \to \mathbb R^{\mathcal{S}\times\mathcal{A}}$ is the Bellman optimal operator:
$(B^* f)(s, a):=r(s, a)+\gamma \max_{a'}\mathbb E_{s'\sim P(s, a)}f(s', a')$, where $ f\in \mathbb R^{\mathcal{S}\times\mathcal{A}}.$ 

The regret of policy $\pi$ is defined as $\textnormal{Regret}(\pi)=\eta(\pi^*)-\eta(\pi).$ It measures the expected loss in return by following policy $\pi$ instead of the optimal policy. Since $\eta(\pi^*)$ is a constant, minimizing the regret is equivalent to maximizing the expected return, and thus it can be an alternative objective of reinforcement learning.

\subsection{Related Work}
Extensive researches have been conducted on experience replay and replay buffer. The most frequently considered aspect is the sampling strategy. Various techniques have achieved good performance by performing prioritized sampling on the replay buffer. In model-based planning, Prioritized Sweeping~\cite{sweep1, sweep2, sweep3} selects the next state updates according to changes in value. Prioritized Experience Replay (PER)~\cite{per} prioritizes samples with high TD error. Taking PER one step further, Prioritized Sequence Experience Replay (PSER)~\cite{sequence} considers information provided by transitions when estimating TD error. Emphasizing Recent Experience (ERE)~\cite{ere} and Likelihood-Free Importance Weighting (LFIW)~\cite{gan} prioritizes the correction of TD errors for frequently encountered states. Distribution Correction (DisCor)~\cite{discor} assigns higher weights to samples with more accurate target Q value because these samples provide ``corrective feedback''. DisCor uses a neural network to estimate the accuracy of target Q value. Inspired by DisCor, SUNRISE~\cite{lee2020sunrise} proposes to use the variance of ensembled Q functions as a surrogate for the accuracy of Q value. Adversarial Feature Matching ~\cite{bottleneck}  focuses on sampling uniformly among state-action pairs.

Instead of proposing a new strategy of sampling,~\cite{equivalence} proves that there exists a relationship between sampling strategy and loss function, and weighted value loss can serve as a surrogate for prioritized sampling. Other works focus on buffer capacity~\cite{deeper, revisit}. They point out that a proper buffer capacity can accelerate value estimation and lead to better learning efficiency and performance. In fact, this can be thought of as a specific example of prioritization strategies, i.e., assigning zero weights to the samples exceeding the proper buffer capacity.

\section{Optimal Prioritization Strategy via Regret Minimization}\label{theory}
\subsection{Revisiting Existing Prioritization Methods}
\label{motivation}
PER and DisCor are two representative algorithms of prioritized sampling. PER prioritizes state-action pairs with high TD error, while DisCor prefers to perform Bellman update on state-action pairs that have more accurate Bellman targets. However, both criteria are different from the target of RL algorithms, which is to maximize the expected return of the policy. Such difference can slow down the training process in some cases. For example, when the Bellman target is inaccurate, minimizing TD error does not necessarily improve the optimality of Q value.
\usetikzlibrary{arrows.meta,bending,positioning,decorations.text}
\tikzset{
leftvecArrow/.style={
  draw, black, {Triangle[bend, angle=40:2pt 3]}-, shorten >=2pt, shorten <=2pt, postaction=decorate, decoration={text along path, text color=black, text align=center, text={|\scriptsize|#1 ||}}
},
rightvecArrow/.style={
  draw, black, -{Triangle[bend, angle=40:2pt 3]}, shorten >=2pt, shorten <=2pt, postaction=decorate, decoration={text along path, text color=black, text align=center,  text={|\scriptsize|#1 ||}}
}
}

\begin{figure}
\label{fig:mdp}
\centering
\subfigure[]{\label{fig:mdp_ex}
    \begin{tikzpicture}[label distance = 5mm]
    \tikzstyle{every node}=[scale=0.8]
      \node[circle, draw, fill=black!50, line width=0.3mm] (a) {$s_T$};
      \node[circle, draw, line width=0.25mm, node distance=0.5cm, right=of a] (b) {$s_0$};
      \node[circle, draw, line width=0.3mm, , node distance=0.5cm, right=of b] (c) {$s_1$};
      \node[circle, draw, line width=0.3mm, node distance=0.5cm, right=of c] (d) {$s_2$};
      \node[circle, draw, line width=0.3mm, node distance=0.5cm, right=of d] (e) {$s_3$};
      \path [leftvecArrow=+2] (a.north east) to[in=165,out=20] (c.north);
      \path [leftvecArrow=+2] (a.north) to[in=165,out=20] (e.north);
      \path [leftvecArrow=+2] (a.south east) to[in=200,out=345] (b.south);
      \path [leftvecArrow=+2] (a.south) to[in=200,out=345] (d.south);

      \path [rightvecArrow=+1] (b.east) to[in=180,out=0] (c.west);
      \path [rightvecArrow=+1] (c.east) to[in=180,out=0] (d.west);
      \path [rightvecArrow=+1] (d.east) to[in=180,out=0] (e.west);
    \end{tikzpicture}
    }
\subfigure[]{
    \label{fig:mdp_per}
    \includegraphics[width=0.27\textwidth]{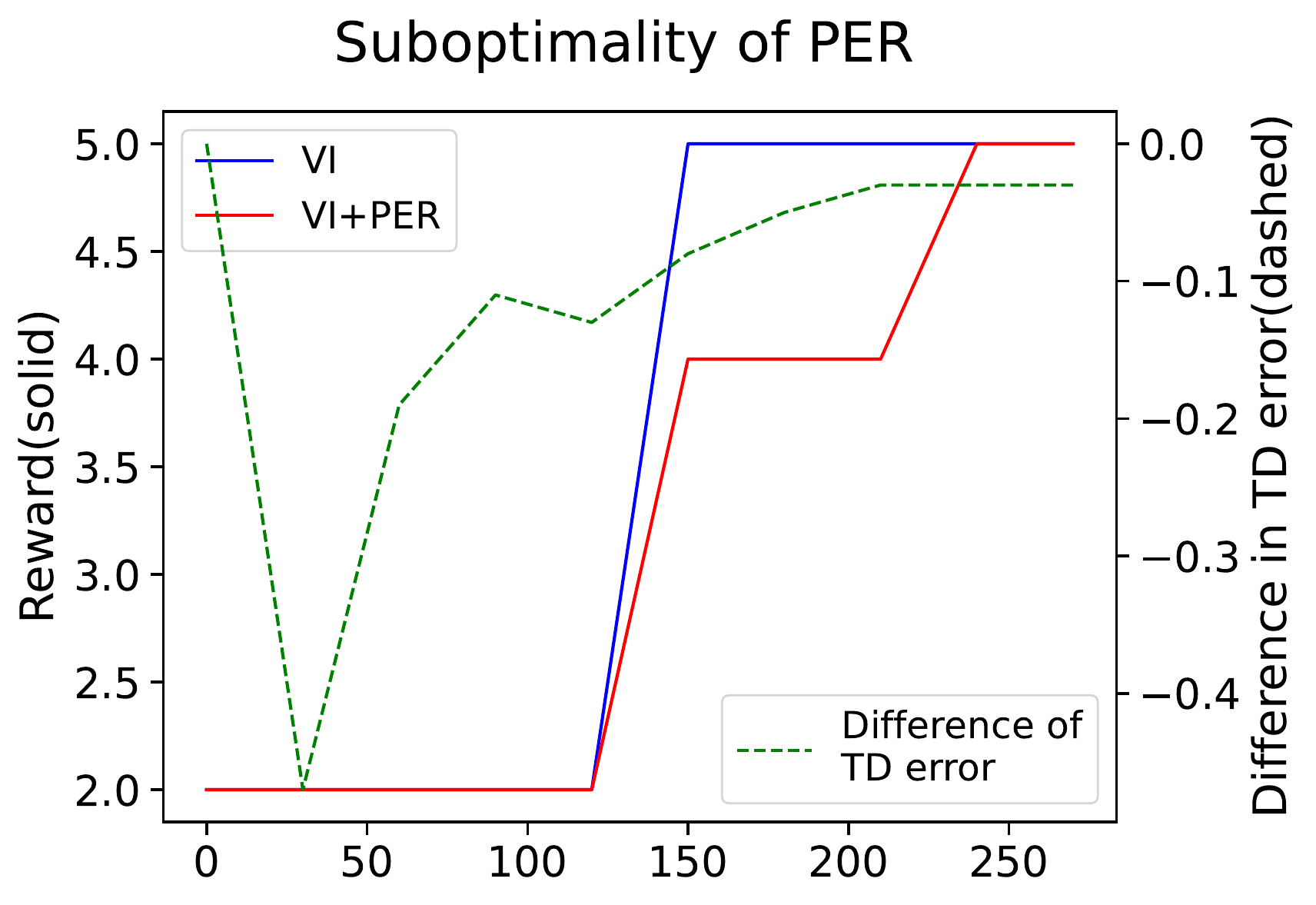}
    }\subfigure[]{
    \label{fig:mdp_discor}
    \includegraphics[width=0.28\textwidth]{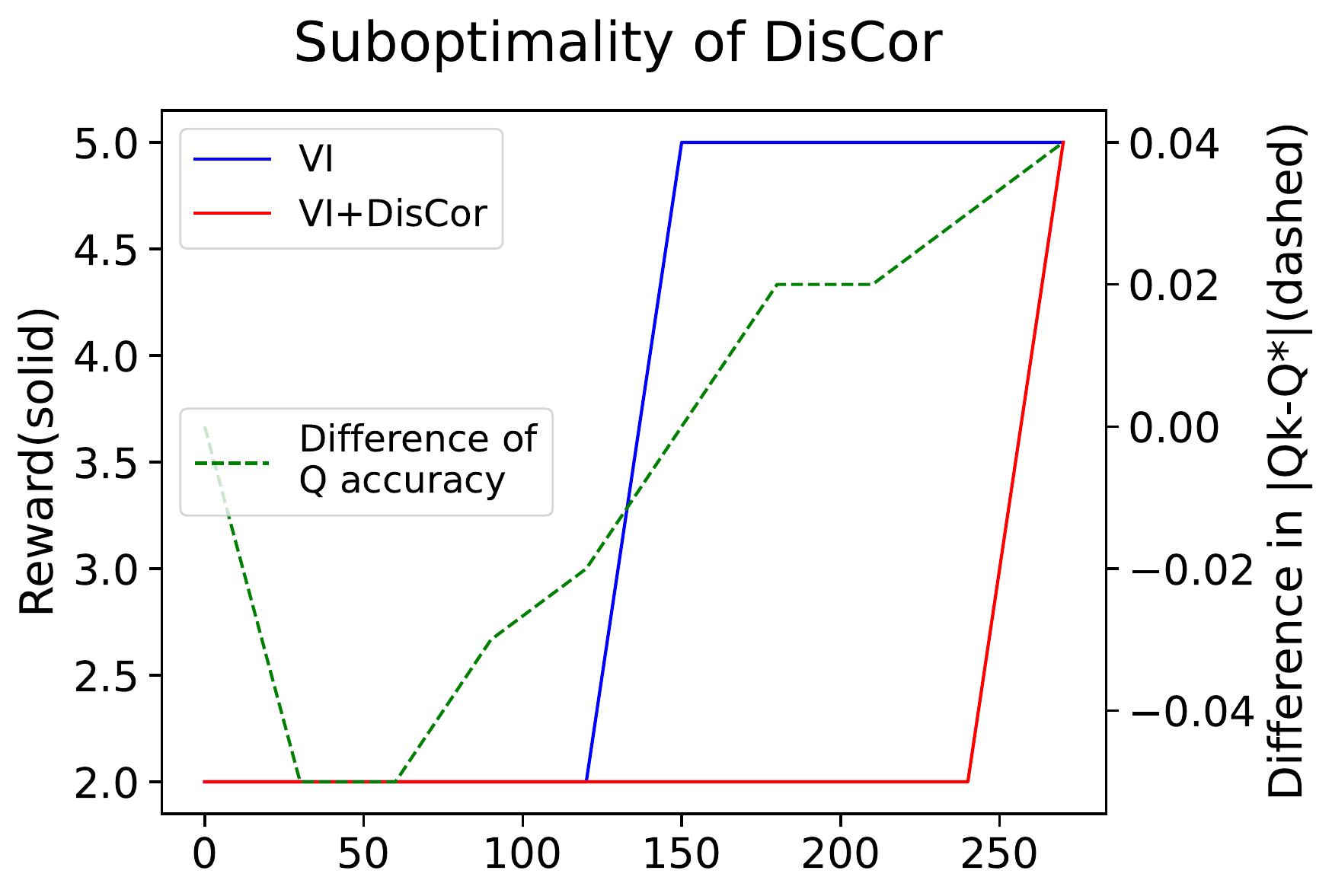}
    }

\caption{A simple MDP showing the objectives of PER and DisCor can slow down the training process. (a)  A 5-state MDP with initial state $s_0$ and terminal state $s_T$. Except for $s_3$ and $s_T$, there are two available actions, \textit{left} and \textit{right}. Turning \textit{left} leads to the terminal state $s_T$ and +2 reward, while turning \textit{right} leads to the next state and +1 reward. The optimal policy is to keep turning \textit{right} until reaching $s_3$, then reach $s_T$. (b) Relationship between TD error (dashed line) and performance (solid line) of VI and VI+PER. (c) Relationship between  Q error (dashed line) and performance (solid line) of VI and VI+DisCor. }
\label{fig:mdp}
\end{figure}

To illustrate the aforementioned problems, we provide an example MDP shown in the left part of Fig.~\ref{fig:mdp_ex}. This is a five-state MDP with two actions: turning left and right. The optimal policy is to turn \textit{right} in all states, receiving a total reward of $5$. Suppose the Q values for all $(s,a)$ pairs are initialized to zero. The reward of turning \textit{left} is higher than turning \textit{right} in all states, so the \textit{left} action has a higher TD error. As a result, PER prefers states with the \textit{left} action, which is not the fastest training process to achieve the optimal policy. 
Also, since there is no bootstrapping error for the terminal state $s_T$, transitions with $s_T$ as the next state have an accurate target Q value. Therefore, DisCor also focuses on state-actions pairs with \textit{left} action, which is again not optimal. 

We perform Value Iteration (VI) on this MDP. To simulate function approximation in Deep RL and avoid convergence in few iterations, the learning rate is set to $0.1$. Prioritized sampling is substituted by weighted Bellman update, as introduced in~\cite{equivalence}. The results are shown in Fig.~\ref{fig:mdp_per} and Fig.~\ref{fig:mdp_discor}. According to the results, PER indeed minimizes TD error more efficiently, and DisCor results in a more accurate estimation of Q value, as indicated  by their objectives. However, they both need more iterations to converge than value iteration without prioritization. According to this MDP, the objective of previous prioritization methods can be inefficient in certain cases. 

\subsection{Problem Formulation of Regret Minimization}
As shown in Section~\ref{motivation}, an indirect objective can cause slower convergence of value iteration. In this section, we aim to find an optimal prioritization weight $w_k$ that can directly minimize the policy regret $\eta(\pi^*) - \eta(\pi_{k})$. The weight is multiplied to the Bellman error $(Q-\mathcal{B}^*Q_{k-1})^2$ and $\pi_k$ can be obtained from the updated Q function. To facilitate further derivations, we only consider the best Q function of the Bellman update, which is calculated by the $\argmin$ operator. Therefore, the optimization problem with respect to $w_k$ can be written as:
\begin{equation}\label{eq_regret}
\begin{aligned}
    &\min_{w_k} \qquad \qquad \eta(\pi^*)-\eta(\pi_{k})\\
    &~\textnormal{s.t.}\quad Q_k=\argmin_{Q\in\mathcal{Q}}~\mathbb E_{\mu}[w_k(s,a)\cdot(Q-\mathcal B^* Q_{k-1})^2(s,a)],\\
    &\qquad \ \ \mathbb E_\mu[w_k(s,a)]=1, \quad w_k(s,a)\geq 0,
\end{aligned}
\end{equation}
where $\pi_{k}(s)=\frac{\exp(Q_k(s,a))}{\sum_{a'}\exp(Q_k(s,a'))}$ is the policy corresponding to $Q_k$. $\mathcal{Q}$ is the function space of Q functions and $\mu$ is the data distribution of the replay buffer. $Q_k$ is the estimate of Q value after the Bellman update at iteration $k$.



We then manage to solve this optimization problem. To get started, we introduce \textit{recurring probability} which serves as an upper bound of the error term in our solution.

\begin{definition}[Recurring Probability]
\label{def_recurring}
The recurring probability of a policy $\pi$ is defined as $\epsilon_\pi = \sup_{s,a}\sum_{t=1}^\infty \gamma^t\rho^{\pi}(s,a,t)$, where $\rho$ is the probability of the agent starting from $(s,a)$ and coming back to $s$ at time step $t$ under policy $\pi$, i.e.,  $\rho^{\pi}(s, a, t)=\textnormal{Pr}(s_0=s, a_0=a, s_t=s, s_{1:t-1}\neq s; \pi)$.
\end{definition}

We then present the solution to the optimization problem \ref{eq_regret} in Thm.~\ref{thm_optimization}. The formal version of the theorem and detailed proof are in Appendix \ref{proof}.
\begin{theorem}[Informal]\label{thm_optimization}
Under mild conditions, the solution $w_k$ to a relaxation of the optimization problem \ref{eq_regret} in MDPs with discrete action spaces is
\begin{equation}\label{eq_discrete_theorem}
    w_k(s,a )=\frac{1}{Z_1^*}\left( E_k(s,a)+\epsilon_{k,1}(s,a)\right). 
\end{equation}
In MDPs with continuous action spaces, the solution is
\begin{equation}\label{eq_continuous_theorem}
    w_k(s,a)=\frac{1}{Z_2^*}\left( F_k(s,a)+\epsilon_{k,2}(s,a)\right).  
\end{equation}
where $$E_k(s,a)=\underbrace{\frac{d^{\pi_{k}}(s,a)}{\mu(s,a)}}_{(a)}\underbrace{(2-\pi_{k}(a|s))}_{(b)}\underbrace{\exp \left(-\left|Q_{k}-Q^{*}\right|(s, a)\right)}_{(c)}\underbrace{\left|Q_{k}-\mathcal{B}^* Q_{k-1}\right|(s, a)}_{(d)}$$
$$F_k(s,a)=2\underbrace{\frac{d^{\pi_{k}}(s,a)}{\mu(s,a)}}_{(a)}\underbrace{\exp \left(-\left|Q_{k}-Q^{*}\right|(s, a)\right)}_{(c)}\underbrace{\left|Q_{k}-\mathcal{B}^* Q_{k-1}\right|(s, a)}_{(d)},$$
$Z_1^*$, $Z_2^*$ are normalization factors, $\epsilon_{k,1}(s,a)$ and $\epsilon_{k,2}(s,a)$ satisfy $\max\left\{\frac{\epsilon_{k,1}(s,a)}{E_k(s,a)}, \frac{\epsilon_{k,2}(s,a)}{F_k(s,a)}\right\}\leq \epsilon_{\pi_{k}}$.
\end{theorem}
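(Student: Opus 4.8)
The plan is to first convert the regret objective into a weighted sum of pointwise quantities that expose the four factors (a)--(d), and only then solve the resulting constrained program in $w_k$ by a Lagrangian/KKT analysis. First I would apply a performance-difference identity to rewrite $\eta(\pi^*)-\eta(\pi_k)=\frac{1}{1-\gamma}\mathbb E_{d^{\pi_k}}\!\left[\sum_a \pi_k(a|s)\,(V^*(s)-Q^*(s,a))\right]$, which already isolates the on-policy density $d^{\pi_k}$; factor (a), the ratio $d^{\pi_k}(s,a)/\mu(s,a)$, then appears once this is re-expressed as an expectation under the buffer distribution $\mu$. Because $\pi_k$ is the softmax of $Q_k$, I would perturbation-expand both the suboptimality gap $V^*(s)-Q^*(s,a)$ and the softmax weights around $Q_k=Q^*$. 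Tracking the direct and cross derivatives of the softmax is what should produce the sensitivity factor $(2-\pi_k(a|s))$ in the discrete case — and the bare constant $2$ in the continuous case, where the per-action probability mass vanishes — together with the damping factor $\exp(-|Q_k-Q^*|)$ of term (c) that down-weights states whose value estimate is far from optimal.

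Next I would relate the Q-error $|Q_k-Q^*|$ to the hindsight Bellman error $|Q_k-\mathcal B^*Q_{k-1}|$ of term (d). Writing $Q_k-Q^*=(Q_k-\mathcal B^*Q_{k-1})+(\mathcal B^*Q_{k-1}-\mathcal B^*Q^*)$ and using the $\gamma$-contraction of $\mathcal B^*$, the second summand is the error propagated from the previous iterate. Unrolling this recursion along trajectories and collecting only the contributions that return to the originating state, I would bound the leftover by the recurring probability $\epsilon_{\pi_k}$ of Definition~\ref{def_recurring}. This is precisely the quantity that gets packaged into the correction terms $\epsilon_{k,1},\epsilon_{k,2}$ and yields the stated relative bound $\max\{\epsilon_{k,1}/E_k,\ \epsilon_{k,2}/F_k\}\le \epsilon_{\pi_k}$.

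Finally I would treat \eqref{eq_regret} as a constrained optimization in $w_k$. The inner $\argmin$ constraint has to be differentiated through: in the relaxed (tabular, single-projection) model the weighted least-squares solution makes $Q_k$, and hence $\pi_k$ and $d^{\pi_k}$, an explicit function of $w_k$, so the envelope / implicit-function theorem supplies $\partial\,\textnormal{Regret}/\partial w_k$. Forming the Lagrangian with the normalization $\mathbb E_\mu[w_k]=1$ and multipliers for the constraint $w_k\ge 0$, the stationarity (KKT) condition solves to $w_k\propto E_k$ in the discrete case and $w_k\propto F_k$ in the continuous case, with $Z_1^*,Z_2^*$ absorbing the normalization.

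I expect the main obstacle to be differentiating the regret through the inner $\argmin$: the dependence of $Q_k$ (and therefore of $d^{\pi_k}$ and $\pi_k$) on $w_k$ is implicit, so making the gradient computation rigorous requires exactly the relaxation hinted at in the statement, plus careful control of the feedback of the induced state distribution on itself — which is where the recurring-probability bound becomes indispensable to keep the residual controlled rather than divergent. The softmax perturbation that separates the discrete factor $(2-\pi_k(a|s))$ from the continuous-action constant $2$ is the second delicate point, since it hinges on correctly combining the on-diagonal and off-diagonal derivatives of the softmax map.
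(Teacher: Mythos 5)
Your overall skeleton (performance-difference lemma, then a Lagrangian/KKT analysis of the relaxed program with the inner $\argmin$ differentiated via the implicit function theorem) matches the paper, but two of the four factors are attributed to mechanisms that would not actually produce them. The central gap is the origin of the factor $\exp\left(-\left|Q_k-Q^*\right|\right)$: you claim it falls out of a perturbation expansion of the softmax and of the gap $V^*(s)-Q^*(s,a)$ around $Q_k=Q^*$, but a Taylor expansion yields polynomial corrections, not an exponential. In the paper this factor is introduced \emph{deliberately} as a second relaxation step: after bounding the regret by $\frac{2}{1-\gamma}\bigl(\mathbb E_{d^{\pi_k}}\left|Q_k-Q^*\right|+1\bigr)$ (the additive $+1$ coming from a separate lemma bounding the suboptimality of the softmax policy), Jensen's inequality gives $\mathbb E_{d^{\pi_k}}[|Q_k-Q^*|]\leq -\log \mathbb E_{d^{\pi_k}}[\exp(-|Q_k-Q^*|)]$, and it is this $-\log\mathbb E[\exp(\cdot)]$ objective whose stationarity condition carries the exponential into $w_k$. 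Without that step your KKT system simply does not produce term (c). Relatedly, term (d) does not come from the triangle-inequality/contraction unrolling of $|Q_k-Q^*|$ you describe (that argument is essentially the paper's proof of its \emph{other} theorem, on temporal structure); it comes from the IFT gradient of the weighted least-squares constraint, $\partial Q_k/\partial p_k=-[\operatorname{Diag}(p_k)]^{-1}[\operatorname{Diag}(Q_k-\mathcal B^*Q_{k-1})]$, which injects the hindsight Bellman error directly into the stationarity condition.

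The error terms $\epsilon_{k,1},\epsilon_{k,2}$ are likewise misattributed. They do not arise from "contributions that return to the originating state" in a Bellman unrolling; they arise because the chain rule $\frac{\partial d^{\pi_k}(s,a)}{\partial p_k}=\frac{\partial d^{\pi_k}}{\partial \pi_k}\frac{\partial \pi_k}{\partial Q_k}\frac{\partial Q_k}{\partial p_k}$ requires controlling $\partial d^{\pi_k}(s)/\partial \pi_k(s)$, i.e.\ the sensitivity of the discounted visitation of $s$ to a local change of the policy at $s$. The paper bounds this by $\epsilon_{\pi_k}d_1^{\pi_k}(s)$ via the decomposition of $d^{\pi}$ into $i$-th-visit distributions, and that is exactly what yields the multiplicative structure making $\epsilon_k/D_k\leq\epsilon_{\pi_k}$. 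Your instinct in the final paragraph about "feedback of the induced state distribution on itself" is the right one, but the second paragraph's trajectory-unrolling route would not recover the stated ratio bound. Finally, the $(2-\pi_k(a|s))$ factor is not purely a softmax Jacobian effect: it is the sum of a $\pm 1$ contribution from differentiating $\exp(-|Q_k-Q^*|)$ (with $\operatorname{sgn}(Q_k-Q^*)$ fixed by the approximation $Q_k\lesssim Q^*$) and a $(1-\pi_k(a|s))$ contribution from the softmax derivative inside $\partial d^{\pi_k}/\partial p_k$; the continuous-action constant $2$ then follows from $\pi_k(a|s)=0$, together with a state-aggregation limiting argument that you would also need to supply.
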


With regard to the error terms,  there are two cases where $\epsilon_{\pi_{k}}$ is low by its definition: the probability of coming back to the states that have been visited is small, or the number of steps an agent takes to come back to the visited states is large. In most tasks, either of these cases holds. We conduct experiments in several Atari games and show the  verification results in Appendix \ref{experiments}. The low probability leads to small $\epsilon_{\pi_{k}}$ and implies the terms $\epsilon_{k,1}(s,a)$ and $\epsilon_{k,2}(s,a)$ are negligible. 

Therefore, Thm.~\ref{thm_optimization} suggests that state-action tuples in the replay buffer should be assigned with higher importance if they have the following properties:
\begin{itemize}[leftmargin=*]
    \item \textbf{Higher hindsight Bellman error} ( from $|Q_k - \mathcal{B}^* Q_{k-1}|(s,a)$). $Q_k$ is the estimate of Q value after the Bellman update. This term describes the difference between the estimated hindsight Q value and the Bellman target. It is similar to the prioritization criterion of PER \cite{per}, but PER concerns more about the historical Bellman error, i.e., $|Q_{k-1}-\mathcal{B}^* Q_{k-2}|(s,a)$. 
    \item \textbf{More on-policiness} ( from $\frac{d^{\pi_{k}}(s,a)}{\mu(s,a)}$). 
    An efficient update of $\pi$ requires $w_k$ to be on-policy, i.e., focusing on state-action pairs which are more likely to be visited by the current policy. Such prioritization strategy has been empirically illustrated in LFIW~\cite{gan} and BCQ~\cite{bcq}, while we obtain it directly from our theorem.
    \item \textbf{Closer value estimation to oracle} ( from $\exp \left(-\left|Q_{k}-Q^{*}\right|(s, a)\right)$ ). This term indicates that state-action pairs with less accurate Q values after the Bellman update should be assigned with lower weights. Intuitively, state-action pairs that lead to suboptimal updates of the estimator of Q value should be down-weighted. Such suboptimality may arise from incorrect target Q values or the error of function approximation in deep Q networks. 
    \item \textbf{Smaller action likelihood} (from $2 - \pi_{k}(a|s)$). This term only exists in MDPs with a discrete action space. It offsets the effect of the on-policy term $d^{\pi_{k}}$ to some extent and is similar to $\varepsilon$-greedy strategy in exploration.
\end{itemize}

Our theoretical analysis indicates that existing prioritization strategies only consider the problem partially, neglecting other terms in minimizing the regret. For example, DisCor fails to consider the on-policiness and PER ignores the accuracy of Q value.
In the remaining part of this section, we present practical approximations to each term in Eq.~(\ref{eq_discrete_theorem}) and (\ref{eq_continuous_theorem}).

Term (a) is the importance weight between the current policy and the behavior policy. We can calculate this term using Likelihood-Free Importance Weighting (LFIW,~\cite{gan}). LFIW divides the replay buffer into two parts, a fast buffer $\mathcal D_f$ and a slow buffer $\mathcal D_s$. It initializes a neural network $\kappa_\psi(s,a)$ and optimizes the network according to the following loss function:
\begin{equation}\label{eq_lfiw}
L_{\kappa}(\psi):=\mathbb{E}_{\mathcal{D}_{\mathrm{s}}}\left[f^{*}\left(f^{\prime}\left(\kappa_{\psi}(s, a)\right)\right)\right]-\mathbb{E}_{\mathcal{D}_{\mathrm{f}}}\left[f^{\prime}\left(\kappa_{\psi}(s, a)\right)\right],
\end{equation}
where $f'$ and $f^*$ is the derivative and convex conjugate of function $f$. The updated $\kappa_\psi$ is the desired importance weight.

For term (b) and (d), since $\pi_{k}$ and $Q_k$ are the policy and the estimate of Q value after the update, they cannot be calculated directly. Therefore, we approximate them by the upper and lower bounds. For term (b), $1\leq 2-\pi_{k}(a|s)\leq 2$. For term (d), a viable approximation is to bound it between the minimum and maximum Bellman errors obtained at the previous iteration, $c_1=\min_{s,a}|Q_{k-1}-\mathcal B^*Q_{k-2}|$ and $c_2=\max_{s,a}|Q_{k-1}-\mathcal B^*Q_{k-2}|$. As shown in DisCor, we can restrict the support of state-action pairs $(s, a)$ used to compute $c_1$ and $c_2$ in the support of replay buffer, to ensure that both $c_1$ and $c_2$ are finite. With these approximation, we can derive a lower bound for $w_k$, which will be detailed in Sec.~\ref{sec:remern} and Sec.~\ref{temporal}.

In the next two subsections, we will provide two practical algorithms to estimate $|Q_k-Q^*|$.

\subsection{Regret Minimization Experience Replay Using Neural Network (ReMERN)}
\label{sec:remern}
DisCor shows $\Delta_k$ can be a surrogate of $|Q_k-Q^*|$, which is defined as:
\begin{align}
\Delta_{k} &=\sum_{i=1}^{k} \gamma^{k-i}\left(\prod_{j=i}^{k-1} P^{\pi_{j}}\right)\left|Q_{i}-\mathcal{B}^{*} Q_{i-1}\right| \\
\Longrightarrow \Delta_{k} &=\left|Q_{k}-\mathcal{B}^{*} Q_{k-1}\right|+\gamma P^{\pi_{k-1}} \Delta_{k-1} \label{eq_delta}
\end{align}

According to Eq.~(\ref{eq_delta}), $\gamma [P^{\pi_{k-1}}\Delta_{k-1}](s, a) + c_2$ is an upper bound of $|Q_k-Q^*|$. This is because $\Delta_k$ is proven to be the upper bound of $|Q_k-Q^*|$ ~\cite{discor} and $c_2$ is the upper bound of $|Q_k-\mathcal{B}^*Q_{k-1}|$. Recall that $2-\pi_{k}(a|s)\geq 1$ and $|Q_{k-1}-\mathcal{B}^*Q_{k-2}|\geq c_1$, and we can derive the final expression for this tractable approximation for $w_k(s,a)$ by simplifying all constants: 
\begin{equation}\label{eq_discor_cwk}
    w_k(s, a)\propto \frac{d^{\pi_{k}}(s, a)}{\mu(s, a)}\exp \left(-\gamma\left[P^{\pi_{k-1}} \Delta_{k-1}\right](s, a)\right),
\end{equation}
This approximation applies to MDPs with discrete action space and MDPs with continuous action space. Using the lower bound of $w_k(s,a)$ may down-weight some transitions, but will never up-weight a transition by mistake~\cite{discor}.

We use a neural network to estimate $\Delta_{k-1}$. As shown in Eq.~(\ref{eq_delta}), $\Delta_{k-1}$ can be calculated from a bootstrapped target, which inspires us to use ADP algorithms to update it. 
We name this method ReMERN (\textbf{Re}gret \textbf{M}inimization \textbf{E}xperience \textbf{R}eplay using \textbf{N}eural Network). The pseudo code for ReMERN is presented in Appendix \ref{algorithms}. ReMERN is applicable to all value-based off-policy algorithms with replay buffer. 

\subsection{Regret Minimization Experience Replay Using Temporal Structure (ReMERT)}\label{temporal}
ReMERN uses neural network as the estimator of $|Q_k-Q^*|$. However, training a neural network is time consuming and suffers from large estimation error without adequate iterations. To mitigate this issue, we propose another estimation of $|Q_k-Q^*|$ from a different perspective. 
\subsubsection{The Temporal Property of Q Error}

$|Q_k-Q^*|$ can be decomposed with the triangle inequality: $|Q_k-Q^*|\leq |Q_k-\mathcal{B}^*Q_{k-1}|+|\mathcal{B}^*Q_{k-1}-Q^*|.$ 
The first term is the projection error depending on the Q function space $\mathcal{Q}$. This error is usually small thanks to the strong expressive power of neural networks. In the second term, $\mathcal{B}^*Q_{k-1}$ is the estimate of target Q value, and $|\mathcal{B}^*Q_{k-1}-Q^*|$ is the distance from the target Q value to the ground-truth Q value. The target Q value at the terminal state consists of the reward only, so there is no bootstrapping error and $|\mathcal{B}^*Q_{k-1}-Q^*|=0$. Moving backward through the trajectory, the accuracy of the Q value estimation decreases as the error of Bellman update accumulates. These Q values are then utilized to compute the target Q value, leading to more erroneous Bellman updates and larger $|\mathcal{B}^*Q_{k-1}-Q^*|$. Such error can accumulate through the MDP. Consequently, states closer to the terminal state tend to have a more accurate Bellman target. 
 This motivates us to estimate the incorrectness of the estimated Q value using the temporal information of a given state-action tuple $(s_t, a_t)$. 

\begin{wrapfigure}[12]{R}{0.4\textwidth}
\vspace{-1.7em}
\centering
\includegraphics[width=0.3\textwidth, trim=20 0 0 0, clip]{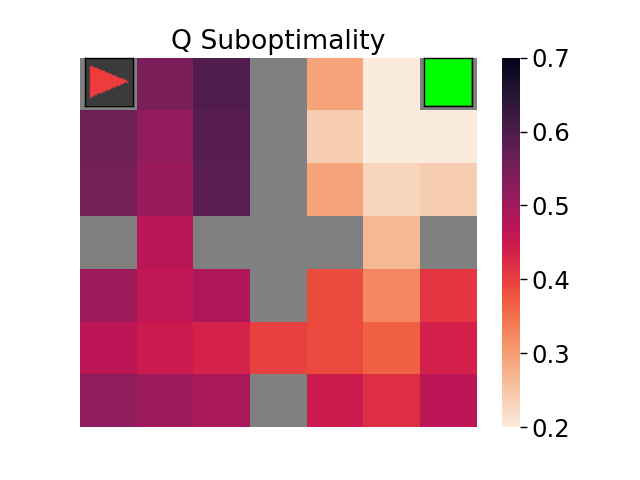}
\vspace{-1.5em}
\caption{The visualized error of target Q value in a GridWorld Environment. The Q error is visualized by the color of the grid. }
\label{fig:maze}
\end{wrapfigure}

To verify our intuition on the temporal property of Q error, we use a gridworld MDP from~\cite{gym_minigrid} and visualize the mean error of the target Q value (i.e., $|\mathcal{B}^*Q_{k-1}-Q^*|$) across different actions in Fig.~\ref{fig:maze}. We use DQN to update Q values. In this gridworld MDP, an agent starts at the red triangle on the top-left and terminates at the green rectangle on the top-right. The agent can't go through the wall, which is plotted as gray grids. The darker a grid is, the higher error of Q function it has. This figure illustrates that states closer to the terminal state has lower Q error, corresponding to our intuition that $|\mathcal{B}^*Q_{k-1}-Q^*|$ is related to the position of $(s,a)$ in the trajectory.

To formalize this intuition, we first define \textit{Distance to End}.

\begin{definition}[Distance to End]
Given a MDP $\mathcal{M}$, $\tau=\{s_t, a_t\}_{t=0}^T$ is a trajectory generated by policy $\pi$ in $\mathcal{M}$. The distance to end of $(s_t, a_t)$, denoted by $h^\pi_\tau(s_t, a_t)$, is $T-t$ in this trajectory. 
\end{definition}
Our intuition states that the value of $|Q_k-Q^*|$ has a positive correlation with distance to end. Based on this intuition, we propose the following theorem. 
\begin{theorem}[Informal]\label{thm_step}
Under mild conditions, with probability at least $1-\delta$, we have
\begin{equation}
\begin{aligned}
&\quad \ |Q_k(s,a)-Q^*(s,a)|\\
&\leq \mathbb E_{\tau}\bigg(f(h^{\pi_{k}}_\tau(s,a))\big(L_{Q_{k-1}}+c\big)+\gamma^{h^{\pi_{k}}_\tau(s,a)+1}c\bigg)+g(k, \delta)
\end{aligned}
\end{equation}
where $c=\max_{s, a}\big(Q^*(s,a^*)-Q^*(s,a)\big)$, $f(t)=\frac{\gamma-\gamma^{t}}{1-\gamma}$, $L_{Q_{k-1}}=\mathbb E[|Q_{k-1}-\mathcal{B}^*Q_{k-2}|]$ and $g(k, \delta)$ decreases exponentially as $k$ increases.
\end{theorem}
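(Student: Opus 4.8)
The plan is to control the two pieces produced by the triangle inequality already displayed in the text, $|Q_k-Q^*|\le |Q_k-\mathcal B^*Q_{k-1}| + |\mathcal B^*Q_{k-1}-Q^*|$, and to show that the second piece --- the \emph{target error} --- shrinks geometrically as the state approaches the terminal one. The first piece is the one-step projection error, whose expectation is essentially $L_{Q_{k-1}}=\mathbb E[|Q_{k-1}-\mathcal B^*Q_{k-2}|]$ up to the discrepancy between iterations $k$ and $k-1$; I would absorb that discrepancy, together with all sampling fluctuations and the decaying initial-condition error, into the residual $g(k,\delta)$ through a concentration argument described at the end.

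For the target error I would unfold $\mathcal B^*$ along a trajectory $\tau\sim\pi_k$. Writing $Q^*=\mathcal B^*Q^*$ and using $|\max_{a'}f-\max_{a'}g|\le\max_{a'}|f-g|$, one step yields $|\mathcal B^*Q_{k-1}-Q^*|(s_t,a_t)=\gamma\,\mathbb E_{s_{t+1}}|\max_{a'}Q_{k-1}(s_{t+1},a')-\max_{a'}Q^*(s_{t+1},a')|$. The crucial move is to pass from the $\max$ over actions to the action $a_{t+1}$ actually taken by $\pi_k$ on the trajectory: since $\pi_k$ need not be greedy, replacing $\max_{a'}Q^*(s_{t+1},a')$ by $Q^*(s_{t+1},a_{t+1})$ costs at most the suboptimality gap $c=\max_{s,a}(Q^*(s,a^*)-Q^*(s,a))$, while $Q_{k-1}$ at $(s_{t+1},a_{t+1})$ is within one Bellman error of $\mathcal B^*Q_{k-2}$. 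Iterating this substitution marches the recursion one state forward toward $s_T$, each step contributing a factor $\gamma$ together with an additive $(L_{Q_{k-1}}+c)$; at the terminal state the target error vanishes because the Bellman target is the reward alone. Summing the resulting geometric series of length $h^{\pi_k}_\tau(s,a)$ produces exactly $f(h)=\frac{\gamma-\gamma^{h}}{1-\gamma}$ multiplying $(L_{Q_{k-1}}+c)$ plus the boundary remainder $\gamma^{h+1}c$, and taking the expectation over $\tau$ gives the $\mathbb E_\tau(\cdots)$ in the statement.

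Finally I would assemble $g(k,\delta)$ from two sources. The first is the initial-value error $|Q_0-Q^*|$ transported $k$ times through the $\gamma$-contraction $\mathcal B^*$, which decays like $\gamma^{k}$ and supplies the advertised exponential decrease in $k$. The second is the gap between the empirical per-sample Bellman errors appearing in the unfolding and their expectation $L_{Q_{k-1}}$, which I would bound uniformly with a Hoeffding-type estimate and a union bound, yielding a term of order $\sqrt{\log(1/\delta)/k}$ that holds with probability at least $1-\delta$. Combining these two contributions gives a $g(k,\delta)$ that decreases in $k$, dominated by the $\gamma^k$ term.

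The main obstacle is the action-mismatch step: the trajectory is generated by $\pi_k$ whereas the Bellman operator maximizes over actions, so one must route \emph{both} directions of the absolute value through the on-trajectory action $a_{t+1}$ in order that each unfolding step pays only a single $c$ and a single Bellman error rather than an uncontrolled $\max$ over future states. Making the bookkeeping of $\gamma$-powers line up with $f(h)$ and the boundary term $\gamma^{h+1}c$, and verifying that the substitution terminates cleanly at $s_T$, is where the care is required; the remainder is a geometric summation and a standard concentration estimate.
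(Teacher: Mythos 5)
Your trajectory-unfolding argument for the main term is essentially the paper's Lemma~\ref{thm_cumulative}: the same triangle-inequality split, the same one-step expansion of $\mathcal B^*$ with the action-mismatch cost $c=\max_{s,a}(Q^*(s,a^*)-Q^*(s,a))$ paid at each step, the same geometric summation producing $f(h)(\cdot+c)+\gamma^{h+1}c$, and the same termination at $s_T$. That part is sound.

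The genuine gap is in your construction of $g(k,\delta)$, and it would cause the proof to fail to deliver the stated property that $g(k,\delta)$ \emph{decreases exponentially} in $k$. You propose to control the fluctuation of the empirical per-sample Bellman errors around $L_{Q_{k-1}}$ by a Hoeffding estimate plus a union bound, yielding a term of order $\sqrt{\log(1/\delta)/k}$. That term decays only polynomially, so your assembled $g(k,\delta)$ would be dominated by it and would not be exponentially small --- the $\gamma^k$ contribution you cite cannot rescue this. The paper avoids this by a different concentration argument: it applies Azuma's inequality to the Doob martingale $Y_h=\mathbb E[\sum_{t=1}^{h}\gamma^t(\mathcal BQ)_k(s_t,a_t)\,|\,\mathcal F_h]$ along a single trajectory, and the crucial point is that the martingale increments are bounded by $\gamma^h(1+\epsilon_{\pi_k})\phi_k$ where $\phi_k=(\alpha\gamma+1-\alpha)^k\|\mathcal B^*Q_0-Q_0\|_\infty$. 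Hence the deviation term is proportional to $\phi_k\sqrt{\log(2/\delta)}$ and itself decays exponentially in $k$. The source of that exponential factor is a lemma you do not have: under the learning-rate update $Q_k=\alpha(\mathcal B^*Q_{k-1}-Q_{k-1})+(1-\alpha)Q_{k-1}$, one shows $\|\mathcal B^*Q_k-Q_k\|_\infty\leq(\alpha\gamma+1-\alpha)^k\|\mathcal B^*Q_0-Q_0\|_\infty$ and $\|\mathcal B^*Q_{k-1}-Q_k\|_\infty\leq(1-\alpha)(\alpha\gamma+1-\alpha)^{k-1}\|\mathcal B^*Q_0-Q_0\|_\infty$; the latter also supplies the bound on the leading projection term $|Q_k-\mathcal B^*Q_{k-1}|$, which you leave as a vaguely ``absorbed'' discrepancy. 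Your alternative reasoning --- transporting $|Q_0-Q^*|$ through $k$ applications of the $\gamma$-contraction --- does not apply as stated, since $Q_k$ is not $(\mathcal B^*)^kQ_0$ here; the contraction you need is the one in the learning-rate lemma, at rate $\alpha\gamma+1-\alpha$, not $\gamma$. To repair the proof, replace the Hoeffding-over-samples step with the per-trajectory Azuma argument whose increment bounds inherit the $(\alpha\gamma+1-\alpha)^k$ decay.
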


The formal version of the theorem and its proof are in Appendix \ref{proof2}. The theorem states that $|Q_k-Q^*|$ is upper bounded by a function of distance to end and expected Bellman error with high probability.

\subsubsection{A Practical Implementation}
In Thm.~\ref{thm_step} we derive the upper bound of $|Q_k-Q^*|$, which can serve as a surrogate to $|Q_k-Q^*|$. Using an upper bound as the surrogate may down-weight some transitions, but will never up-weight a transition that should not be up-weighted~\cite{discor}. We call this Temporal Correctness Estimation (TCE):

\begin{equation}\label{eq_tau}
\begin{aligned}
    |Q_k(s,a)&-Q^*(s,a)|\approx \mathbb E_{\tau}\textnormal{TCE}_c(s,a) \\
    &=\mathbb E_{\tau}\bigg(f(h^{\pi_{k-1}}_\tau(s,a))\big(L_{Q_{k-1}}+c\big)+\gamma^{h^{\pi_{k-1}}_\tau(s,a)+1}c\bigg),
\end{aligned}
\end{equation}


Similar to the derivation of ReMERN, we can simplify the expression of $w_k(s,a)$ as: 
\begin{equation}\label{eq_new}
    w_k(s,a)\propto \frac{d^{\pi_{k}}(s, a)}{\mu(s, a)}\exp \Big(-\mathbb E_{\tau}\text {TCE}_c(s,a)\Big)
\end{equation}




This approach of computing prioritization weights is named ReMERT (\textbf{Re}gret \textbf{M}inimization \textbf{E}xperience \textbf{R}eplay using \textbf{T}emporal Structure). Its pseudo code is presented in Appendix \ref{algorithms}. In practice, we record the \textit{distance to end} of a state-action pair when it is sampled by the policy and stored in the replay buffer. The expectation with respect to $\tau$ is computed based on the record and Monte-Carlo estimation.

\subsection{Comparison between ReMERN and ReMERT}
ReMERT can estimate $|Q_k-Q^*|$ directly from the temporal ordering of states, which often provides more efficient and more accurate estimation than ReMERN. However, The expectation with respect to trajectory $\tau$ in Eq.~(\ref{eq_new}) induces statistical error. In some environments, the \textit{distance to end} of a certain state-action pair $(s,a)$ can vary widely across different trajectories, which is usually caused by the randomness of environments. For example, in environments with stochastic goal positions, the state may be near the goal in one episode but far away from it in another. In such cases, prioritization weights provided by ReMERT have large variance and can be misleading.
In contrast, ReMERN need to train an error net but is irrelevant to the \textit{distance to end}. Therefore, ReMERN suffers estimation error of neural network but is robust to the randomness of environments. We test their property in the following section.

\section{Experiments}
In this section, we conduct experiments to evaluate ReMERN and ReMERT\footnote{Codes are available at \href{https://github.com/AIDefender/ReMERN-ReMERT}{https://github.com/AIDefender/ReMERN-ReMERT}.}. We choose SAC and DQN as the baseline algorithms for continuous and discrete action space respectively and incorporate ReMERN and ReMERT as the sampling strategy.  We first compare the performance of ReMERN and ReMERT to prior sampling methods in continuous control benchmarks including Meta-World~\cite{metaworld}, MuJoCo~\cite{mujoco} and Deepmind Control Suite (DMC)~\cite{tassa2020dmcontrol}. We also evaluate our methods in Arcade Learning Environments with discrete action spaces. Then, we dive into our algorithms and design several experiments, such as Gridworld tasks and MuJoCo with reward noise, to demonstrate some key properties of ReMERN and ReMERT. A detailed description of the environments and experimental details are listed in Appendix \ref{experiments}.

\begin{figure}[bt!]
\centering
    \includegraphics[width=0.8\textwidth   ]{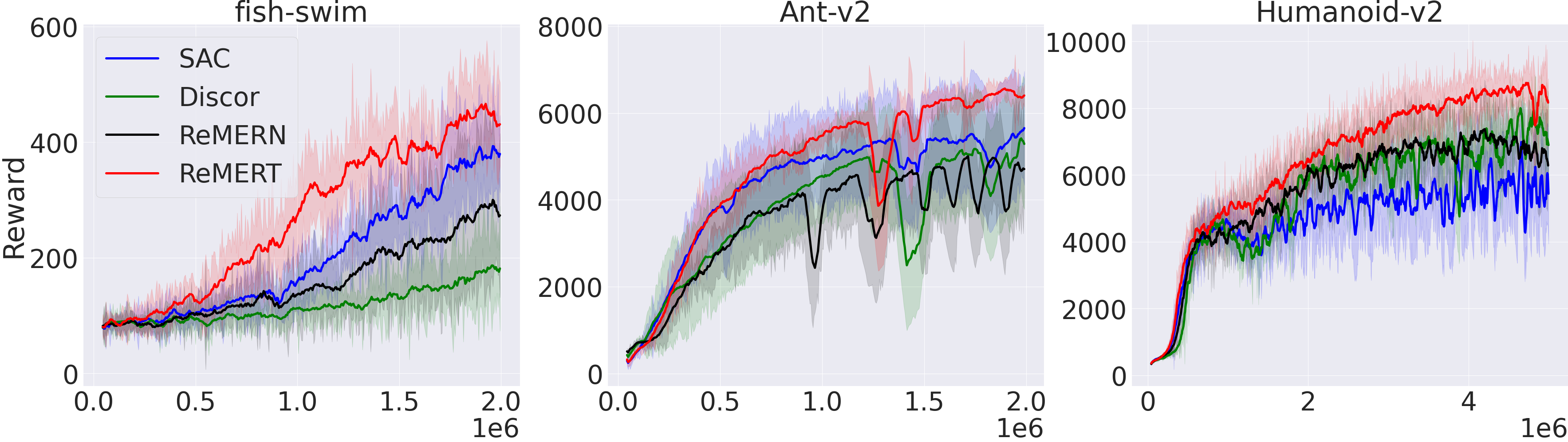}
    \includegraphics[width=0.8\textwidth]{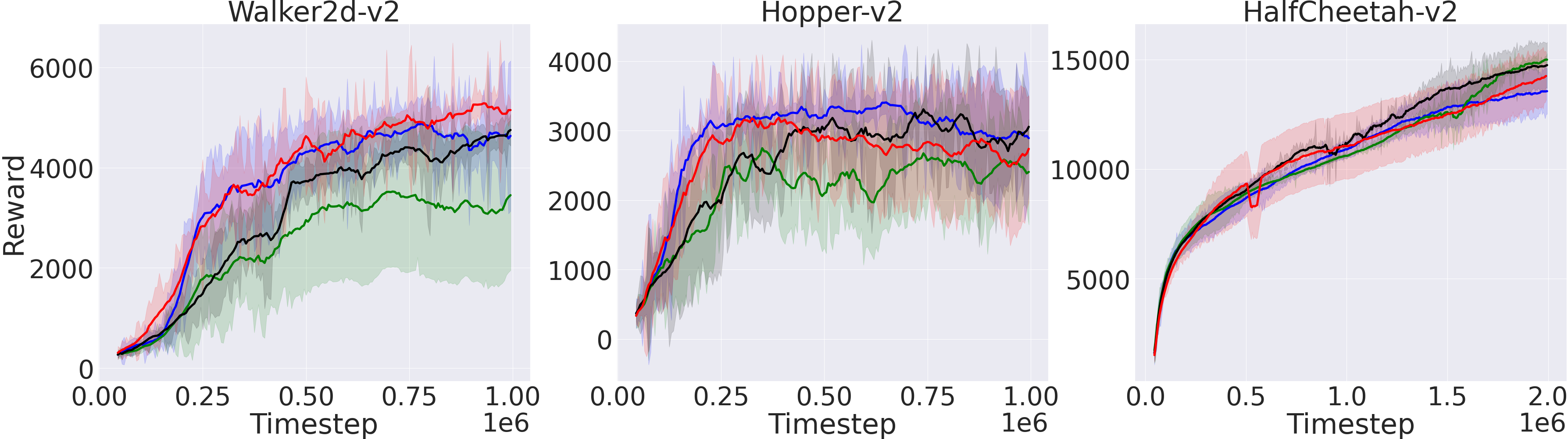}
    \caption{Performance of ReMERT, ReMERN with SAC and DisCor as baselines on continuous control tasks. }
    \label{fig:mujoco}
\end{figure}

\subsection{Performance on Continuous Control Environments}

In MuJoCo and DMC tasks, ReMERT outperforms baseline methods on four of six tasks and achieves comparable performance in the rest two tasks, i.e. HalfCheetah and Hopper, as shown in Fig.~\ref{fig:mujoco}. The marginal improvement of ReMERT in HalfCheetah mainly comes from the absence of a strong correlation between Q-loss and time step. In HalfCheetah, there is no specific terminal state, so the agent always reaches the max length of the trajectory, which gives a fake "distance to end" for every state. 
In Hopper, there is not much difference of the $|Q_k-Q^*|$ term between all the sampled state-action pairs, as shown in Appendix~\ref{experiments}, so the 
state-action pairs are not sampled very unequally. Besides, Hopper is a relatively easy task, in which prioritizing the samples have minor impact on the overall performance of the RL algorithm. The performance of ReMERN is better than DisCor, but is not as good as ReMERT. This verifies our theory and the existence of large estimation error induced by updating neural network with ADP algorithms. 

The Meta-World benchmark~\cite{metaworld} includes many robotic manipulation tasks. We select 8 tasks for evaluation, and plot the result in Fig.~\ref{fig:metaworld}. The performance of PER can be found in its paper~\cite{per}. Current state-of-the-art off policy RL algorithms such as SAC performs poorly on this benchmark because the goals of tasks have high randomness. Although DisCor~\cite{discor} shows preferable performance in these tasks compared to SAC and PER, ReMERN obtains a significant improvement over DisCor in the training speed and asymptotic performance.  
In this evaluation, we exclude ReMERT for comparison because the randomized target position in Meta-World contradicts its assumption.



\begin{figure}[htb]
    \includegraphics[width=\textwidth]{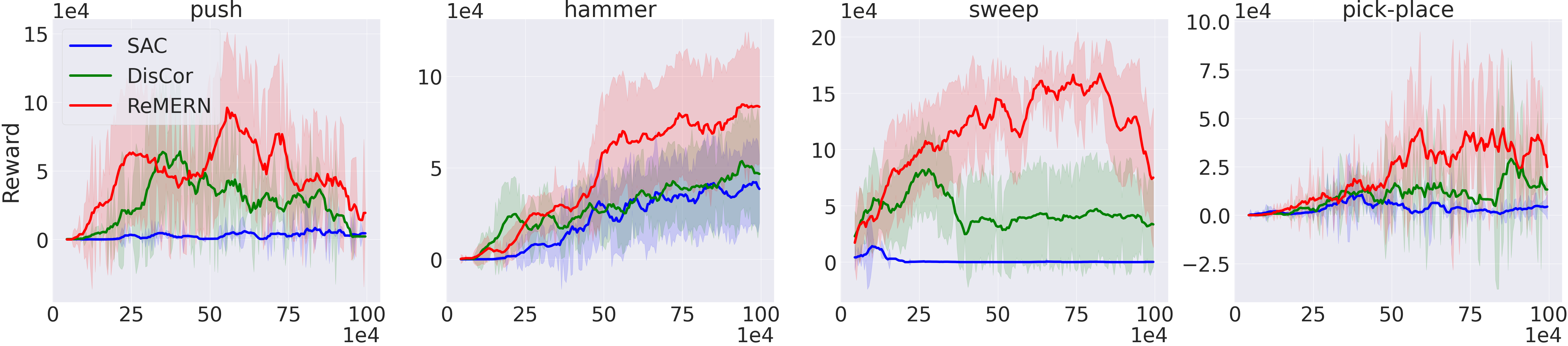}
    \includegraphics[width=\textwidth]{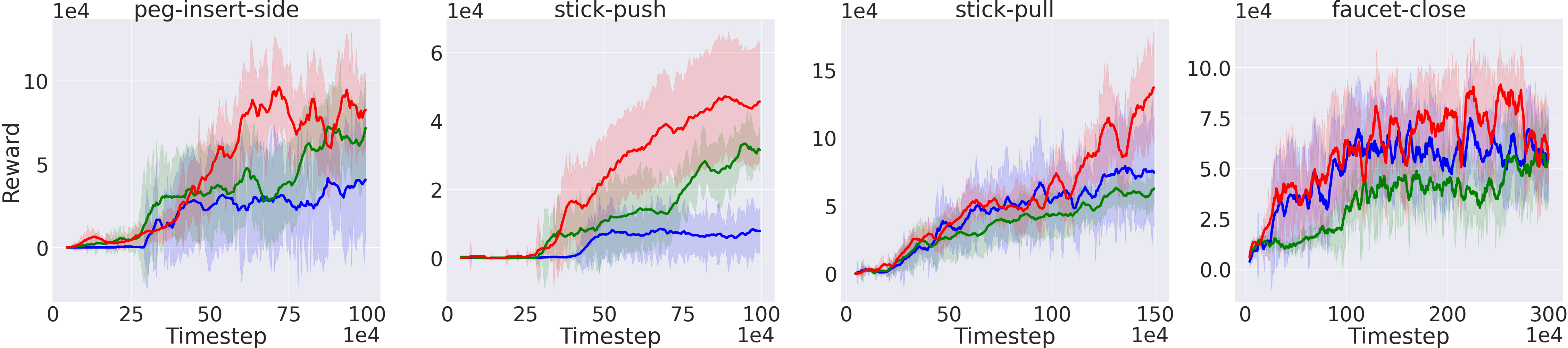}
    \vspace{-3mm}
    \caption{Performance of ReMERN, standard SAC and DisCor in eight Meta-World tasks. From left to right: push, hammer, sweep, peg-insert-side, stick push, stick pull, faucet close.}
    \label{fig:metaworld}
\end{figure}

\subsection{Performance on Arcade Learning Environments}
Atari games are suitable for verifying our theory for MDPs with discrete action space.
	\begin{table}[b]
			\centering
			\caption{DQN vs ReMERT on Atari. DQN (Nature) is the performance in the DQN paper~\cite{dqn}. DQN (Baseline) is the performance of our baseline program~\cite{tianshou}.}
			\begin{tabular}{{lccccc}}
			\toprule[1.25pt]
			Method	 & Enduro & KungFuMaster & Kangaroo & MsPacman & Qbert \\
			\hline
			DQN (Nature)  & 301$\pm$24.6 & 23270$\pm$5955 & 6740$\pm$2959 & 2311$\pm$525 & 10596$\pm$3294\\
\hline
DQN (Baseline) & 1185$\pm$100 & 29147$\pm$7280 & 6210$\pm$1007 & 3318$\pm$647 & 13437$\pm$2537 \\
\hline
ReMERT (Ours) & \textbf{1303}$\pm$258 & \textbf{35544}$\pm$8432 & \textbf{7572}$\pm$1794 & \textbf{3481}$\pm$1351 & \textbf{14511}$\pm$1138 \\
\hline
			\end{tabular}
			\label{tab:atari}
	\end{table}
The tested games have a relatively stable temporal ordering of states because the initial state and the terminal state have little randomness, so that the assumption of ReMERT is satisfied. As shown by Tab.~\ref{tab:atari}, ReMERT outperforms DQN in all the selected games. The results also suggest that ReMERT can be applied to environments with high dimensional state spaces. Results of more Atari games are listed in Appendix~\ref{experiments}. We do not include ReMERN for comparison because DisCor which is a composing part of ReMERN has no open-source code available for discrete action space.

\subsection{Demonstration on Key Properties of ReMERN and ReMERT}

\subsubsection{Influence of Environment Randomness}
Fig.~\ref{fig:mujoco} and Fig.~\ref{fig:metaworld} show that ReMERN has a better performance on Meta-World than on Mujoco tasks. We attribute this to the robustness of our strategy in environments with high randomness. For a highly stochastic environment, the estimation of Q value is difficult. When the estimation of Q value is inaccurate, the target Q value is also inaccurate, leading to a suboptimal update process in off-policy RL algorithms. Thanks to the closer value estimation to oracle principle, ReMERN estimates the Q value more accurately than other methods. However, for less stochastic environments like MuJoCo environments, the accuracy of error network might become the bottleneck of ReMERN. 

To show this empirically, we add Gaussian noise to the reward function in MuJoCo environments. The details of the experimental setup are listed in Appendix~\ref{experiments}. Fig.~\ref{fig:noise} show that: (1) ReMERN and ReMERT perform better than SAC in stochastic environments, which verifies our analysis. (2) Though ReMERT suffers statistical error of temporal ordering, it is robust to the randomness of reward because the temporal property is not affected by the noise.

\begin{figure}[t]
    \centering
    \includegraphics[width=0.8\textwidth]{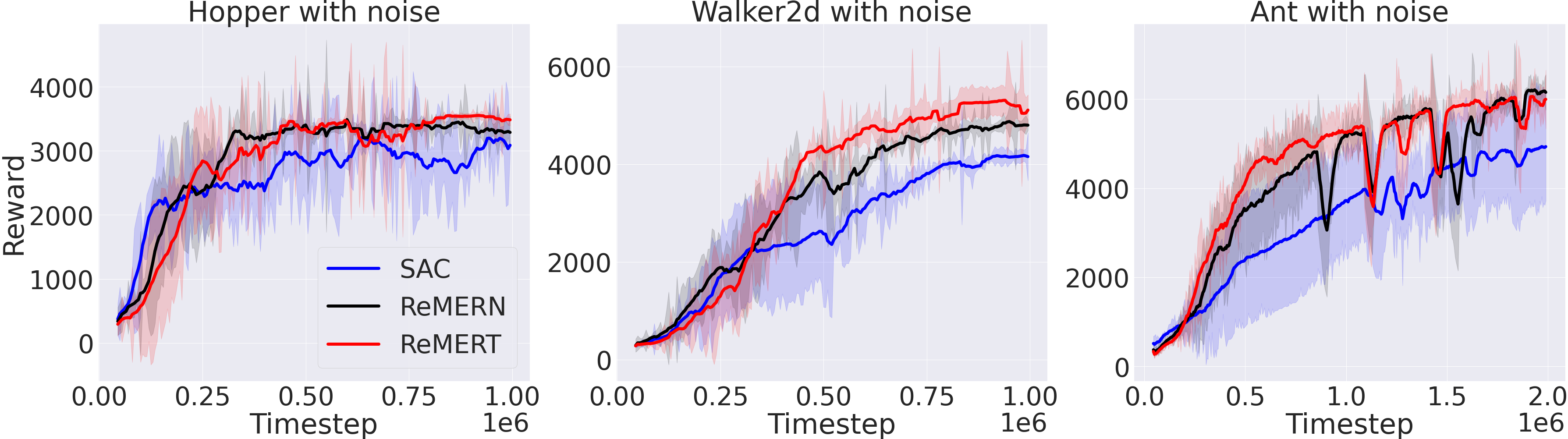}
    \vspace{-0.5mm}
    \caption{Performance of ReMERN, ReMERT and SAC on three continuous control tasks with reward noise.}
    \label{fig:noise}
\end{figure}

\subsubsection{Analysis of TCE on Deterministic Tabular Environments}

\begin{wrapfigure}[15]{r}{0.4\textwidth}
    \centering
    \vspace{-3mm}
    \includegraphics[width=0.4\textwidth]{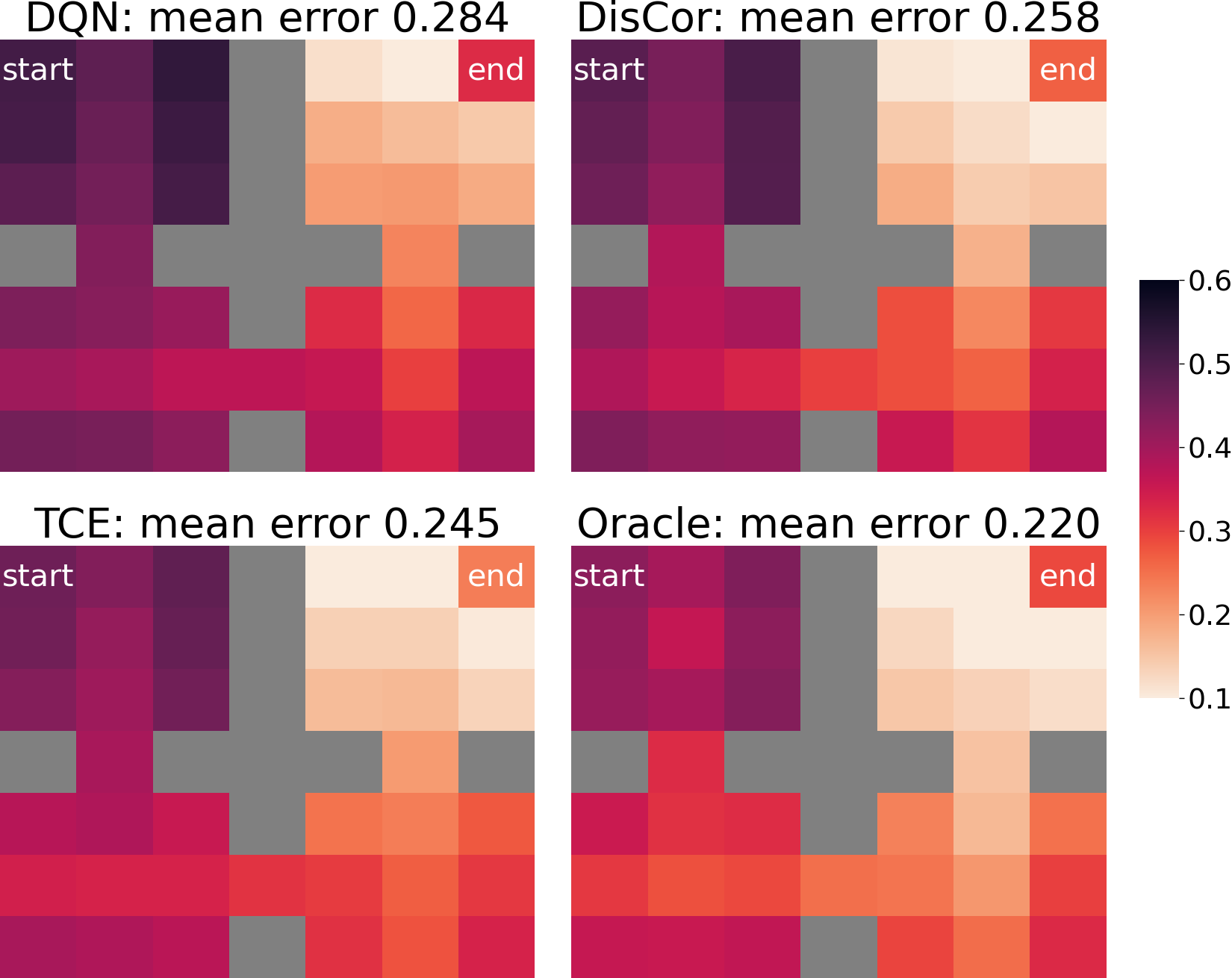}
    \caption{TCE and DisCor in Gridworld}
    \label{fig:TCE}
\end{wrapfigure}

To analyze the effect of the principle behind TCE, we evaluate the Q error in Gridworld with image input. We plot the $|Q_k-Q^*|$ error of standard DQN, DQN with DisCor, DQN with TCE and DQN with oracle at some time in the training process in Fig.~\ref{fig:TCE}.  TCE is combined with DQN to estimate term (c) in Eq.~(\ref{eq_discrete_theorem}) , and the other terms are ignored to compute $w_k$. DQN with oracle uses the ground-truth error $|Q_k-Q^*|$ to calculate the prioritization weight. The result shows that DQN with TCE achieves a more accurate Q value estimator than those of standard DQN and DQN with DisCor, while DQN with oracle $|Q_k-Q^*|$ achieves the most accurate Q value estimator. The lower efficiency of DQN with DisCor is due to the slower convergence speed of the error network. This proves the principle behind our theory effective, and TCE is a decent approximation of $|Q_k-Q^*|$. 

\section{Conclusion and Future Work}
In this work, we first revisit the existing methods of prioritized sampling and point out that the objectives of these methods are different from the objective of RL, which can lead to a suboptimal training process. To solve this issue, we analyze the prioritization strategy from the perspective of 
regret minimization, which is equivalent to return maximization in RL. Our analysis gives a theoretical explanation for some prioritization methods, including PER, LFIW and DisCor. Based on our theoretical analysis, we propose two practical prioritization strategies, ReMERN and ReMERT, that directly aims to improve the policy. ReMERN is robust to the randomness of environments, while ReMERT is more computational efficient and more accurate in environments with a stable temporal ordering of states. 
Our approaches obtain superior results compared to previous prioritized sampling methods. Future work can be conducted in the following two directions. First, the framework to obtain the optimal distribution in off-policy RL can be generalized to model-based RL and offline RL. Second, the two proposed algorithms are suitable for different kinds of MDP, so finding a unified prioritization method for all MDPs can further improve the performance.

\section*{Acknowledgements and Disclosure of Funding}
We thank Xintong Qi and Xiaolong Yin for helpful discussions. We would also like to thank two groups of anonymous  reviewers  for  their  valuable  comments  on our paper. This work is supported by National Key Research and Development Program of China (2020AAA0107200) and NSFC(61876077).

\bibliographystyle{unsrt}
\bibliography{neurips2021}
\newpage
\appendix
\section{Proof of Theorem~\ref{thm_optimization}}\label{proof}
In this section, we present detailed proofs for the theoretical derivation of Thm.~\ref{thm_optimization}, which aims to solve the following optimization problem:
\begin{equation}\label{eq_regret}
\begin{aligned}
    &\min_{w_k} \qquad \qquad \eta(\pi^*)-\eta(\pi_{k})\\
    &~\textnormal{s.t.}\quad Q_k=\argmin_{Q\in\mathcal{Q}}~\mathbb E_{\mu}[w_k(s,a)\cdot(Q-\mathcal B^* Q_{k-1})^2(s,a)],\\
    &\qquad \ \ \mathbb E_\mu[w_k(s,a)]=1, \quad w_k(s,a)\geq 0,
\end{aligned}
\end{equation}

The problem is equivalent to:

\begin{equation}\label{eq_pk}
\begin{aligned}
    &\min_{p_k} \qquad  \eta(\pi^*)-\eta(\pi_k)\\
    &~\textnormal{s.t.} \quad Q_k=\argmin_{Q\in \mathcal{Q}}\mathbb E_{p_k}[(Q-\mathcal B^\pi Q_{k-1})^2(s,a)]\\
    &\qquad \ \ \sum_{s,a}p_k(s,a)=1, \quad p_k(s,a)\geq 0,
\end{aligned}
\end{equation}

The desired $w_k(s,a)$ is $\frac{p_k(s,a)}{\mu(s,a)}$, where $p_k(s,a)$ is the solution to the problem~\ref{eq_pk}.

To solve Problem ~\ref{eq_pk},
we need to give the definition of \textit{total variation distance}, \textit{Wasserstein metric} and the diameter of a set, and introduce some mild assumptions.

\begin{definition}[total variation distance]
 The total variation (TV) distance of distribution $P$ and $Q$ is defined as
 $$D_{\textnormal{TV}}(P, Q)=\frac{1}{2}\left\|P-Q\right\|_1$$
\end{definition}

\begin{definition}[Wasserstein metric]
 For $F, G$ two c.d.fs over the reals, the Wasserstein metric is defined as
 $$
d_{p}(F, G):=\inf _{U, V}\|U-V\|_{p}
$$
where the infimum is taken over all pairs of random variables $(U, V)$ with respective cumulative distributions $F$ and $G$.
\end{definition}

\begin{definition}
 The diameter of a set $A$ is defined as
 $$\textnormal{diam}(A)=\sup_{x,y\in A}m(x,y)$$
 where $m$ is the metric on $A$.
\end{definition}

\begin{assumption}\label{assump_1}
The state space $\mathcal{S}$ and action space $\mathcal{A}$ are metric spaces with a metric $m$.
\end{assumption}

\begin{assumption}\label{assump_2}
The Q function is continuous with respect to $\mathcal{S}\times \mathcal{A}$. 
\end{assumption}

\begin{assumption}\label{assump_3}
The transition function $T$ is continuous with respect to $\mathcal{S}\times \mathcal{A}$ in the sense of Wasserstein metric, i.e., 
$$\lim_{(s,a)\to (s_0,a_0)}d_p(T(\cdot|s,a),T(\cdot|s_0,a_0))=0,$$
where $d_p$ denote the Wasserstein metric.
\end{assumption}

These assumptions are not strong and can be satisfied in most of environments includes MuJoCo, Atari games and so on. 

Let $d_i^\pi(s)$ denote the discounted state distribution, where the state is visited by the agent for the i-th time. that is
$$d^\pi_i(s)=(1-\gamma)\sum_{t_i=0}^{\infty}\gamma^{t_i}\textnormal{Pr}(s_{t_k}=s, \forall k\in [i]),$$
where $[k]=\{j\in \mathbb N_+: j\leq k\}$.
Notably, 
\begin{align}
\label{eq_sum} &d^\pi(s)=\sum_{i=1}^\infty d_i^\pi(s)\\
\label{eq_next} &d^\pi_i(s)=\sum_{t=1}^\infty\rho^\pi(s, \pi(s), t)\gamma^t d^\pi_{i-1}(s),
\end{align}
where $\rho^\pi(s, \pi(s), t)$ is the shorthand for $\mathbb E_{a\sim \pi}\rho^\pi(s, a, t)$.

The standard definitions of Q function, value function and advantage function is:
\begin{align*}
    &Q^\pi(s, a)=\mathbb E_\pi[\sum_{t\geq 0}\gamma^t r(s_t, a_t)|s_0=s, a_0=a].\\
    &V^\pi(s)=\mathbb E_\pi[\sum_{t\geq 0}\gamma^t r(s_t, a_t)|s_0=s].\\
    &A^\pi(s, a)=Q^\pi(s, a)-V^\pi(s).\\
\end{align*}

In the follows, Lemma \ref{lemma_expectation} is a technique used in Lemma \ref{lemma_dpik}. Lemma \ref{lemma_dpik} shows that $\left|\frac{\partial d^\pi(s)}{\partial \pi(s)}\right|$ is a small quantity. 

\begin{lemma}\label{lemma_expectation}
Let $f$ be an Lebesgue integrable function, $P$ and $Q$ are two probability distributions, $|f|\leq C$, then
\begin{equation}
    \left|\mathbb E_{P(x)}f(x)-\mathbb E_{Q(x)}f(x)\right|\leq CD_{\textnormal{TV}}(P, Q)
\end{equation}
\end{lemma}
\begin{proof}
\begin{align*}
    \left|\mathbb E_{P(x)}f(x)-\mathbb E_{Q(x)}f(x)\right|&=\left|\sum_x [P(x)f(x)-Q(x)f(x)]\right|\\
    &=\left|\sum_x [P(x)f(x)-Q(x)f(x)]\mathbb I[P(x)>Q(x)]\right.\\
    &\quad \left.-\sum_x [P(x)f(x)-Q(x)f(x)]\mathbb I[P(x)<Q(x)]\right|\\
    &\leq CD_{\textnormal{TV}}(P, Q)
\end{align*}
\end{proof}

\begin{lemma}\label{lemma_dpik}
Let $\epsilon_\pi=\sup_{s,a}\sum_{t=1}^\infty \gamma^t\rho^\pi(s,a,t)$, we have
\begin{equation}
\left|\frac{\partial d^\pi(s)}{\partial \pi(s)}\right|\leq \epsilon_\pi d_1^\pi(s)
\end{equation}
and $\epsilon_\pi\leq 1$.

\end{lemma}
\begin{proof}
 The definition of $\rho^\pi(s,a,t)$ implies 
 $$0\leq \sum_{t=1}^\infty \gamma^t\rho^\pi(s,a,t)\leq \epsilon_\pi\leq 1, \qquad \forall a\in \mathcal{A}$$
 Based on this fact, we have
 $$\left|\sum_{t=1}^\infty \gamma^t\left(\rho^\pi(s,a_1,t)-\rho^\pi(s,a_2,t)\right)\right|\leq \epsilon_\pi, \qquad \forall a_1, a_2\in \mathcal{A}$$
 
 Let $\rho^\pi(s, \pi(s), t)$ be a shorthand for $\mathbb E_{a\sim \pi(s)}\rho^\pi(s, a, t)$. 

If $\pi$ changes a little and becomes $\pi'$, and $\delta_a=D_{\text{TV}}(\pi(s),\pi'(s))$, then we have
\begin{equation}\label{eq_all}
\begin{aligned}
&\quad \ \left|\sum_{t=1}^\infty\gamma^t\left(\rho^\pi(s, \pi(s), t)-\rho^\pi(s, \pi'(s), t)\right)\right|\\
&= \left|\mathbb E_{a_1\sim \pi}\sum_{t=1}^\infty \gamma^t\rho^\pi(s,a_1,t)-\mathbb E_{a_2\sim \pi'}\sum_{t=1}^\infty \gamma^t\rho^\pi(s,a_1,t)\right|\\
&\leq \epsilon_\pi\delta_a
\end{aligned}
\end{equation}
This inequality comes from Lemma \ref{lemma_expectation}.

We denote the difference between $d_2^\pi(s)$ and $d_2^{\pi'}(s)$ as $\Delta d_2(s)$, which can be bounded as follows:
\begin{align*}
    \Delta d_2(s)&=|d^\pi_2(s)-d_2^{\pi'}(s)|\\
    &= \left|\sum_{t=1}^\infty\gamma^t\left(\rho^\pi(s, \pi(s), t)-\rho^{\pi}(s, \pi'(s), t)\right) d^\pi_{1}(s)\right|\\
    &= d^\pi_{1}(s)\left|\sum_{t=1}^\infty\gamma^t\left(\rho^\pi(s, \pi(s), t)-\rho^{\pi}(s, \pi'(s), t)\right)\right|\\
    &\leq \epsilon_\pi\delta_ad_1^\pi(s)
\end{align*}
Recursively, we have 
$$\Delta d_i(s)\leq\epsilon_\pi^{i-1}\delta_a^{i-1}d_1^\pi(s)$$

Obviously, the change of $\pi$ at state $s$ won't change $d_1^\pi(s)$. According to Eq. (\ref{eq_sum}), 
\begin{align*}
   \Delta d(s)&\leq \sum_{i=1}^\infty \Delta d_i(s)\\ 
   &\leq \sum_{i=2}^\infty(\epsilon_\pi\delta_a)^{i-1} d_1^\pi(s)\\
   &=\frac{\epsilon_\pi\delta_a}{1-\epsilon_\pi\delta_a}d_1^\pi(s)
\end{align*}

According to $\frac{\partial d^\pi(s)}{\partial \pi(s)}=\lim_{\delta_a\to 0}\frac{\Delta d(s)}{\delta_a}$, we have 
$$\left|\frac{\partial d^\pi(s)}{\partial \pi(s)}\right|\leq \epsilon_\pi d_1^\pi(s)$$

This concludes the proof.
\end{proof}

\begin{lemma}\label{lemma_softmax}
Given two policy $\pi_1$ and $\pi_2$, where $\pi_1(a|s)=\frac{\exp(Q_1(s,a))}{\sum_{a'}\exp(Q_1(s,a'))}$. Then 
$$\mathbb E_{a\sim \pi_2}Q_1(s,a)-\mathbb E_{a\sim \pi_1}Q_1(s,a)\leq 1 $$
\end{lemma}

\begin{proof}
Suppose there are two actions $a_1$, $a_2$ under state $s$, and let $Q_1(s,a_1)=u$, $Q_1(s,a_2)=v$. Without loss of generality, let $u\leq v$.
\begin{align*}
    \mathbb E_{a\sim \pi_2}Q_1(s,a)-\mathbb E_{a\sim \pi_1}Q_1(s,a)&\leq v-\frac{ue^u+ve^v}{e^u+e^v}\\
    &=v-\frac{u+ve^{v-u}}{1+e^{v-u}}\\
    &=v-u--\frac{(v-u)e^{v-u}}{1+e^{v-u}}
\end{align*}

Let $f(z)=z-\frac{ze^z}{1+e^z}$, the maximum point $z_0$ of $f(z)$ satisfies $f'(z_0)=0$ where $f'$ is the derivative of $f$, i.e., $\frac{e^{z_0}(1+z_0+e^{z_0}}{(1+e^{z_0})^2}-1=0$. This implies $1+e^{z_0}=z_0e^{z_0}$ and $z_0\in (1,2)$. We have

\begin{align*}
    \mathbb E_{a\sim \pi_2}Q_1(s,a)-\mathbb E_{a\sim \pi_1}Q_1(s,a)&\leq f(v-u)\leq z_0-1 \leq 1
\end{align*}

If the number of action is more than 2 and $Q_1(s,a_1)\geq Q_1(s,a_2)\geq \cdots Q_1(s,a_n)$, let $b_1$ represents $a_1$ and $b_2$ represents all other actions. Then $Q_1(s,b_1)=Q_1(s,a_1)$ and $Q_1(s,b_2)=\sum_{j=2}^n\frac{Q_1(s,a_j)\exp(Q_1(s,a_j)}{\sum_{k=2}^n\exp(Q_1(s,a_k))}$. In this way, we can derive the upper bound of $\mathbb E_{a\sim \pi_2}Q_1(s,a)-\mathbb E_{a\sim \pi_1}Q_1(s,a)$ as above. 
\end{proof}

The following lemma is proposed by Kakade, 
\begin{lemma}[Lemma 6.1 in~\cite{kakade}]\label{lemma_kakade}
For any policy $\tilde \pi$ and $\pi$, 
\begin{equation}
    \eta(\tilde \pi)-\eta(\pi)=\frac{1}{1-\gamma}\mathbb E_{d^{\tilde \pi}(s,a)}[A_\pi(s,a)]
\end{equation}
\end{lemma}

\begin{lemma}\label{lemma_main}
In discrete MDPs, let $\epsilon_{\pi_k}=\sup_{s,a}\sum_{t=1}^\infty \gamma^t\rho^{\pi_k}(s,a,t)$, the optimal solution $p_k$ to a relaxation of optimization problem \ref{eq_pk} satisfies the following relationship:
\begin{equation}\label{eq_sgn}
    p_k(s,a )=\frac{1}{Z^*}\left( D_k(s,a)+\epsilon_k(s,a)\right)
\end{equation}
where $D_k(s,a)=d^{\pi_k}(s,a)(2-\pi_{k}(a|s))\exp \left(-\left|Q_{k}-Q^{*}\right|(s, a)\right) \left|Q_{k}-\mathcal{B}^* Q_{k-1}\right|(s, a)$, $Z^*$ is the normalization constant and $\frac{\epsilon_k(s,a)}{D_k(s,a)}\leq \epsilon_{\pi_k}$.
\end{lemma}

\begin{proof}

Suppose $a^*\sim \pi^*(s)$. Let $\pi=\pi_k$, $\tilde \pi=\pi^*$ in Lemma~\ref{lemma_kakade}, we have
\begin{equation}\label{eq_bound}
\begin{aligned}
    &\quad \ \eta(\pi^*)-\eta(\pi_k)\\
    &=-\frac{1}{1-\gamma}\mathbb E_{d^{\pi_k}(s,a)}A_{\pi^*}(s, a)\\
    &=\frac{1}{1-\gamma}\mathbb E_{d^{\pi_k}(s,a)}(V^*(s)-Q^*(s, a))\\
    &= \frac{1}{1-\gamma}\mathbb E_{d^{\pi_k}(s,a)}\Big(V^*(s)-Q_k(s,a^*)+Q_k(s,a^*)-Q_k(s,a)+Q_k(s,a)-Q^*(s,a)\Big)\\
    &\overset{(a)}{\leq} \frac{1}{1-\gamma}\Big(\mathbb E_{d^{\pi_k}(s)}(Q^*(s, a^*)-Q_k(s,a^*))+\mathbb E_{d^{\pi_k}(s,a)}(Q_k(s, a)-Q^*(s, a))+1\Big)\\
    &\leq \frac{1}{1-\gamma}\Big(\mathbb E_{d^{\pi_k}(s)}\left|Q^*(s, a^*)-Q_k(s,a^*)\right|+\mathbb E_{d^{\pi_k}(s,a)}\left|Q_k(s, a)-Q^*(s, a)\right|+1\Big)\\
    &= \frac{2}{1-\gamma}\Big(\mathbb E_{d^{\pi_k, \pi^*}}\left|Q_k(s,a)-Q^*(s,a)\right|+1\Big),
\end{aligned}
\end{equation}
where $d^{\pi_k, \pi^*}(s,a)=d^{\pi_k}(s)\frac{\pi_k(a|s)+\pi^*(a|s)}{2}$ and (a) uses Lemma \ref{lemma_softmax}.

Since both sides of the above equation have the same minimum (here the minima are given by $Q_k = Q^*$), we can replace the objective in Problem \ref{eq_pk} with the upper bound in Eq.~(\ref{eq_bound}) and solve
the relaxed optimization problem.
\begin{align}
    \label{eq_obj} &\min_{p_k}\qquad \mathbb E_{d^{\pi_k}(s,a)}[|Q_k-Q^*|]\\
    \label{eq_constrain} &~\textnormal{s.t.} \quad Q_k=\argmin_{Q\in \mathcal{Q}}\mathbb E_{p_k}[(Q-\mathcal B^\pi Q_{k-1})^2(s,a)],\\
    &\qquad \ \sum_{s,a}p_k(s,a)=1, \quad p_k(s,a)\geq 0.
\end{align}
Here we use $d^{\pi_k}(s,a)$ to replace $d^{\pi_k, \pi^*}$  because we can not access $\pi^*$, and the best surrogate available is $\pi_k$. 

\textbf{Step 1: Jensen's Inequality.} 
The optimization objective can be further relaxed with Jensen's Inequality, based on the fact that  $f(x)=\exp(-x)$ is a convex function.
\begin{equation}\label{eq_jensen}
\mathbb E_{d^{\pi_k}(s,a)}[|Q_k-Q^*|]=-\log\exp(-\mathbb E_{d^{\pi_k}(s,a)}[|Q_k-Q^*|])\leq -\log\mathbb E_{d^{\pi_k}(s,a)}[\exp(-|Q_k-Q^*|)]
\end{equation}

Similarly, both sides of Eq.~(\ref{eq_jensen}) have the same minimum. We obtain the following new optimization problem by replacing the objective with the upper bound in this equation:
\begin{equation}\label{eq_new_obj}
\begin{split}
    &\min_{p_k} \ -\log\mathbb E_{d^{\pi_k}(s,a)}[\exp(-|Q_k-Q^*|)]\\
    &~\textnormal{s.t.} \quad Q_k=\argmin_{Q\in\mathcal{Q}}\mathbb E_{p_k}[(Q-\mathcal{B}^*Q_{k-1})^2],\\
    &\qquad \ \sum_{s,a}p_k(s,a)=1, \quad p_k(s,a)\geq 0.
\end{split}
\end{equation}

\textbf{Step 2: Computing the Lagrangian.}  In order to optimize problem \ref{eq_new_obj}, we follow the standard procedures of Lagrangian multiplier method. The Lagrangian is:
\begin{equation}
\begin{aligned}
    \mathcal{L}(p_k;\lambda, \mu)=-\log\mathbb E_{d^{\pi_k}(s,a)}[\exp(-|Q_k-Q^*|)]&+\lambda(\sum_{s,a}p_k(s,a)-1)-\mu^Tp_k.\\
\end{aligned}
\end{equation}
where  $\lambda$ and $\mu$ are the Lagrange multipliers. 

\textbf{Step 3: IFT gradient used in the Lagrangian.} $\frac{\partial Q_k}{\partial p_k}$ can be computed according to implicit function theorem (IFT). The IFT gradient is given by:
\begin{equation}\label{eq_gradient}
    \left.\frac{\partial Q_{k}}{\partial p_{k}}\right|_{Q_{k}, p_{k}}=-\left[\operatorname{Diag}\left(p_{k}\right)\right]^{-1}\left[\operatorname{Diag}\left(Q_{k}-\mathcal{B}^{*} Q_{k-1}\right)\right]
\end{equation}
The derivation is similar to that in ~\cite{discor}.

\textbf{Step 4: Approximation of the gradient used in the Lagrangian.} We derive an expression for $\frac{\partial d^{\pi_k}(s,a)}{\partial p_k}$, which will be used when computing the gradient of the Lagrangian. We use $\pi_k$ to denote the policy induced by $Q_k$.

\begin{align*}
    \frac{\partial d^{\pi_k}(s,a)}{\partial p_k}&=\frac{\partial d^{\pi_k}(s,a)}{\partial \pi_k}\frac{\partial \pi_k}{\partial Q_k}\frac{\partial Q_k}{\partial p_k}\\
    &= (d^{\pi_k}(s)+\epsilon_2(s))\frac{\partial \pi_k}{\partial Q_k}\frac{\partial Q_k}{\partial p_k}\\
    &\overset{(b)}{=}(d^{\pi_k}(s)+\epsilon_2(s)\pi_k(a|s)\frac{\sum_{a'\neq a}\exp\left(Q_k(s,a')\right)}{\sum_{a'}\exp(Q_k(s,a'))}\frac{\partial Q_k}{\partial p_k}\\
    &\overset{(c)}{=}d^{\pi_k}(s,a)(1-\pi_k(a|s))\frac{\partial Q_k}{\partial p_k}+\epsilon_2(s)\pi_k(a|s)(1-\pi_k(a|s))\frac{\partial Q_k}{\partial p_k}
\end{align*}

where $\epsilon_2(s)=\frac{\partial d^{\pi_k}(s)}{\partial \pi_k(s)}$. (b) and (c) are based on the fact that $\pi_k(a|s)=\frac{\exp\left(Q_k(s,a)\right)}{\sum_{a'}\exp\left(Q_k(s,a')\right)}$. 

\textbf{Step 5: Computing optimal $p_k$.} By KKT conditions, we have
$$\frac{\partial \mathcal{L}(p_k; \lambda, \mu)}{\partial p_k}=0$$

\begin{align*}
    &\quad \ \frac{\partial \mathcal{L}(p_k; \lambda, \mu)}{\partial p_k}\\
    &=\frac{\exp \left(-\left|Q_{k}-Q^{*}\right|(s, a)\right)}{Z} (d^{\pi_k}(s,a)\operatorname{sgn}\left(Q_{k}-Q^{*}\right) \cdot \frac{\partial Q_{k}}{\partial p_{k}}+\cdot \frac{\partial d^{\pi_k}(s,a)}{\partial p_k})+\lambda-\mu_{s, a}
\end{align*}
where $Z=\mathbb E_{s^{\prime}, a^{\prime}\sim d^{\pi_k}(s,a)} \exp \left(-\left|Q_{k}-Q^{*}\right|\left(s^{\prime}, a^{\prime}\right)\right)$. Substituting the expression of $\frac{\partial Q_k}{\partial p_k}$ and $\frac{\partial d^{\pi_k}(s,a)}{\partial p_k}$ with the results obtained in Step.~3 and Step.~4 respectively, and let $Z_{s,a}=Z(\lambda^*-\mu^*_{s,a})$, we obtain
\begin{equation}
\begin{aligned}
p_{k}(s, a) &= \Big(d^{\pi_k}(s,a)(\textnormal{sgn}(Q_k-Q^*)+1-\pi_k(a|s))\exp \left(-\left|Q_{k}-Q^{*}\right|(s, a)\right) \left|Q_{k}-\mathcal{B}^{*} Q_{k-1}\right|(s, a)\\
&\quad\  +\epsilon_2(s)\pi_k(a|s)(1-\pi_k(a|s))\exp \left(-\left|Q_{k}-Q^{*}\right|(s, a)\right) \left|Q_{k}-\mathcal{B}^{*} Q_{k-1}\right|(s, a)\Big)\frac{1}{Z_{s,a}}
\end{aligned}
\end{equation}

Notably, $Q_k\approx Q^{\pi_k}\leq Q^*$. Thus, $\textnormal{sgn}(Q_k-Q^*)$ always is 1 approximately, so we can simplify this relationship as
\begin{equation}\label{eq_sgn}
\begin{aligned}
    p_k(s,a )&= \frac{1}{Z_{s,a}}\Big(d^{\pi_k}(s,a)(2-\pi_{k}(a|s))\exp \left(-\left|Q_{k}-Q^{*}\right|(s, a)\right) \left|Q_{k}-\mathcal{B}^* Q_{k-1}\right|(s, a)\\
    &\quad +\epsilon_2(s)\pi_k(a|s)(1-\pi_k(a|s))\exp \left(-\left|Q_{k}-Q^{*}\right|(s, a)\right) \left|Q_{k}-\mathcal{B}^{*} Q_{k-1}\right|(s, a)\Big)
\end{aligned}
\end{equation}
The first term is always larger or equal to zero. The second term does not influence the sign of the equation because the absolute value of $\epsilon_2(s)$ is smaller than $d^{\pi_k}(s)$ according to Lemma \ref{lemma_dpik}. Note that Eq. (\ref{eq_sgn}) is always larger or equal to zero. If it is larger than zero then $\mu^*=0$ by the KKT condition. If it is equal to zero, we can let $\mu^*=0$ because the value of $\mu^*$ does not influence $w_k(s,a)$. Without loss of generality, we can let $\mu^*=0$. Then $Z_{s,a}=Z^*=Z\lambda^*$ is a constant with respect to different $s$ and $a$. In this way, we can simplify Eq. (\ref{eq_sgn}) as follows:

$$p_k(s,a)=\frac{1}{Z^*} \left(D_k(s,a)+\epsilon_k(s,a)\right)$$

where $D_k(s,a)=d^{\pi_k}(s,a)(2-\pi_{k}(a|s))\exp \left(-\left|Q_{k}-Q^{*}\right|(s, a)\right) \left|Q_{k}-\mathcal{B}^* Q_{k-1}\right|(s, a)$ and $\epsilon_k(s,a)=\epsilon_2(s)\pi_k(a|s)(1-\pi_k(a|s))\exp \left(-\left|Q_{k}-Q^{*}\right|(s, a)\right) \left|Q_{k}-\mathcal{B}^{*} Q_{k-1}\right|(s, a)$. 

Based on the expression of $D_k(s,a)$ and $\epsilon_k(s,a)$, we have $$\frac{\epsilon_k(s,a)}{D_k(s,a)}=\frac{\epsilon_2(s)(1-\pi_k(a|s))}{d^{\pi_k}(s)(2-\pi_k(a|s))}\leq \epsilon_{\pi_k}$$
The inequality is from \ref{lemma_dpik}. This concludes the proof.
\end{proof}

\begin{theorem}[formal]\label{thm_optimization}
Let $\epsilon_{\pi_k}=\sup_{s,a}\sum_{t=1}^\infty \gamma^t\rho^{\pi_k}(s,a,t)$. Under Assumption \ref{assump_1}, \ref{assump_2} and \ref{assump_3}, if $\frac{d^{\pi_{k}}(s,a)}{\mu(s,a)}$ exists, we have in MDPs with discrete action spaces, the solution $w_k$ to the relaxed optimization problem \ref{eq_regret} is
\begin{equation}\label{eq_discrete_theorem}
    w_k(s,a )=\frac{1}{Z_1^*}\left( E_k(s,a)+\epsilon_{k,1}(s,a)\right). 
\end{equation}
In MDPs with continuous action spaces, the solution is
\begin{equation}\label{eq_continuous_theorem}
    w_k(s,a)=\frac{1}{Z_2^*}\left( F_k(s,a)+\epsilon_{k,2}(s,a)\right).  
\end{equation}
where $$E_k(s,a)=\frac{d^{\pi_k}(s,a)}{\mu(s,a)}(2-\pi_{k}(a|s))\exp \left(-\left|Q_{k}-Q^{*}\right|(s, a)\right) \left|Q_{k}-\mathcal{B}^* Q_{k-1}\right|(s, a)$$
$$F_k(s,a)=2\frac{d^{\pi_k}(s,a)}{\mu(s,a)}\exp \left(-\left|Q_{k}-Q^{*}\right|(s, a)\right) \left|Q_{k}-\mathcal{B}^* Q_{k-1}\right|(s, a),$$
 $Z_1^*$, $Z_2^*$ is the normalization constants and $\max\left\{\frac{\epsilon_{k,1}(s,a)}{E_k(s,a)}, \frac{\epsilon_{k,2}(s,a)}{F_k(s,a)}\right\}\leq \epsilon_{\pi_k}$.

\end{theorem}

\begin{proof}
By Lemma \ref{lemma_main}, for MDPs with discrete action space and state space, we have
$$
p_k(s,a )=\frac{1}{Z^*}\left(D_k(s,a) +\epsilon_k(s,a)\right)
$$
Based on the deviation of Problem \ref{eq_pk}, the solution in this situation is 
\begin{equation}\label{eq_tabular}
 w_k(s,a )=\frac{1}{Z^*}\left( \frac{D_k(s,a)}{\mu(s,a)}+\frac{\epsilon_k(s,a)}{\mu(s,a)}\right)
\end{equation}

The existence of $\frac{d^{\pi_{k}}(s,a)}{\mu(s,a)}$ guarantees the existence of $\frac{D_k(s,a)}{\mu(s,a)}$ and $\frac{\epsilon_k(s,a)}{\mu(s,a)}$. 
Let $E_k(s,a)=\frac{D_k(s,a)}{\mu(s,a)}$ and $\epsilon_{k,1}(s,a)=\frac{\epsilon_k(s,a)}{\mu(s,a)}$, we get Eq. (\ref{eq_discrete_theorem}).

We derive the result for continuous action space and state space as follows, the result for continuous state space and discrete action space, and discrete state space and continuous action space can be derived similarly. 

Remember that $\mathcal{B}^*Q_{k-1}(s,a)=r(s,a)+\gamma\max_{a'}\mathbb E_{s'}Q_{k-1}(s',a')$ and $Q_k(s,a)=\argmin_Q (Q(s,a)-\mathcal{B}^*Q_{k-1}(s,a))^2$, if we use $R(s,a)=Q_k(s,a)-\gamma\max_{a'}\mathbb E_{s'}Q_{k-1}(s',a')$ to replace $r(s,a)$, then $Q_k$ is still the desired Q function after the Bellman update. Since the continuity of $Q_k$, $Q_{k-1}$ and $T$ guarantee $R(s,a)$ is continuous, without loss of generality, we assume $r(s,a)$ is continuous. 

We utilize the techniques in reinforcement learning with aggregated states~\cite{aggregate}. Concretely, we can partition the set of all state-action pairs, with each cell representing
an aggregated state. Such a partition can be defined by a function $\phi:\mathcal S\cup \mathcal A\mapsto \hat{\mathcal{S}}\cup \hat{\mathcal{A}}$, where $\hat{\mathcal{S}}$ is the space of aggregated states and $\hat{\mathcal{A}}$ is the space of aggregated actions. With such a partition, we can discretize the continuous spaces. For example, for the continuous space $\{x\in \mathbb R:0\leq x\leq 10\}$, define $\phi(x)=\sum_{i=1}^9\mathbb I(x\leq x_i)$, and then the space of aggregated states becomes $\{0, 1, 2, \dots, 9\}$, which is a discrete space. 

With function $\phi$, we define the transition function and reward function in this new MDP.
For all $\hat{s}, \hat{s}'\in\hat{\mathcal{S}}$, $\hat{a}\in\hat{\mathcal{A}}$

\begin{equation}
\begin{aligned}
&\hat{T}\left(\hat{s}^{\prime}|\hat{s}, \hat{a}\right) =\frac{\sum_{s,a \in \phi^{-1}(\hat{s}, \hat{a})} \mu(s, a) \sum_{s^{\prime} \in \phi^{-1}\left(\hat{s}^{\prime}\right)} T\left(s'|s, a\right)}{\sum_{s,a \in \phi^{-1}(\hat{s}, \hat{a})} \mu(s, a)} \\
&\hat{r}(\hat{s}, \hat{a}) =\frac{\sum_{s,a \in \phi^{-1}(\hat{s}, \hat{a})} \mu(s, a) r(s, a)}{\sum_{s,a \in \phi^{-1}(\hat{s}, \hat{a})} \mu(s, a)}
\end{aligned}
\end{equation}
where $(\phi(s), \phi(a))$ is simplified as $\phi(s,a)$ and $\phi^{-1}(\hat s, \hat a)$ is the preimage of $(\hat s, \hat a)$.

In this way, Eq. (\ref{eq_tabular}) holds for aggregated state space:
\begin{equation}
    \hat{w}_k(\phi(s,a) )=\frac{1}{\hat{Z}^*}\left( \frac{\hat{D}_k(\phi(s,a))}{\hat{\mu}(\phi(s,a))}+\frac{\hat{\epsilon}_k(\phi(s,a))}{\hat{\mu}(\phi(s,a))}\right)
\end{equation}


Suppose $\hat{\mathcal{S}}$ and $\hat{\mathcal{A}}$ is equipped with metric $m'$, we construct a sequence of functions $\phi_h$, which satisfies

(i) If $m(u_1-u_2)\leq m(u_1-u_3)$, then $m'(\phi_h(u_1)-\phi_h(u_2))\leq m'(\phi_h(u_1)-\phi_h(u_3))$ for all $u_1, u_2, u_3\in \mathcal{S}$ or $u_1, u_2, u_3\in \mathcal{A}$.

(ii) $\lim_{h\to \infty}\textnormal{diam}(\phi_h^{-1}(c))=0$ for all $c\in \mathcal{S}'\cup \mathcal{A}'$.

Based on the two conditions on $\phi_h$ and the continuous of reward function and transition function, for all $ s, s'\in \mathcal{S}$ and $a\in \mathcal{A}$,
\begin{equation}
\begin{aligned}
&\lim_{h\to \infty}\left|\hat{r}(\phi_h(s,a))-r(s,a)\right|=0\\
&\lim_{h\to \infty}\left|\hat{T}(\phi_h(s')|\phi_h(s,a))-T(s'|s,a)\right|=0
\end{aligned}
\end{equation}

This means the constructed MDP approaches the original MDP as $h$ tends to infinity. 

With the Lemma 3 in \cite{abstract},

\begin{align*}
&\lim_{h\to\infty}\mathcal{B}^*\hat{Q}_{k-1}(\phi_h(s,a))=\mathcal{B}^*Q_{k-1}(s,a)\\
&\lim_{h\to\infty}\mathcal{B}^*\hat{Q}^*(\phi_h(s,a))=\mathcal{B}^*Q^*(s,a)
\end{align*}

Note that $Q_k(s,a)=\argmin_Q (Q-\mathcal{B}^*Q_{k-1}(s,a))^2$, $\hat{Q}_k(\phi_h(s,a))=\argmin_Q (Q-\mathcal{B}^*Q_{k-1}(\phi_h(s,a)))^2$, $Q^*(s,a)=\argmin_Q (Q-\mathcal{B}^*Q^*(s,a))^2$ and $\hat{Q}^*(\phi_h(s,a))=\argmin_Q (Q-\mathcal{B}^*Q^*(\phi_h(s,a)))^2$, 
\begin{align*}
    &\lim_{h\to \infty}\hat{Q}_k(\phi_h(s,a))=Q_k(s,a)\\
    &\lim_{h\to \infty}\hat{Q}^*(\phi_h(s,a))=Q^*(s,a)
\end{align*}

Because $\pi(a|s)=\frac{\exp(Q(s,a))}{\sum_{a'}\exp(Q(s,a'))}$, $\pi$ is continuous with respect to $Q$, then we have

\begin{equation*}\label{eq_lim1}
    \lim_{h\to \infty}\hat{\pi}_k(\phi_h(a)|\phi_h(s))=\pi_k(a|s)
\end{equation*}

The continuity of $\pi$ and transition function $T$ guarantees 

$$\lim_{h\to \infty}\hat{d}^{\hat{\pi}_k}(\phi_h(s,a))=d^{\pi_k}(s,a)$$

Therefore, 
\begin{equation}\label{eq_lim2}
\begin{aligned}
    &\lim_{h\to \infty}|\hat{Q}_k-\hat{Q}^*|(\phi((s,a)))=\left|Q_{k}-Q^{*}\right|(s, a)\\
    &\lim_{h\to \infty}|\hat{Q}_k-\hat{\mathcal{B}}^*\hat{Q}^*|(\phi((s,a)))=\left|Q_{k}-\mathcal{B}^* Q_{k-1}\right|(s, a)\\
    &\lim_{h\to \infty} \frac{d^{\hat{\pi}_k}(\phi_h(s,a))}{\hat{\mu}(\phi_h(s,a))}=\frac{d^{\pi_k}(s,a)}{\mu(s,a)}\\
\end{aligned}
\end{equation}

Notably, $\epsilon_2(s)\pi_k(a|s)\leq d^{\pi_k}(s,a)$, the existence of $\frac{d^{\pi_k}(s,a)}{\mu(s,a)}$ implies the existence of $\frac{\epsilon_2(s)\pi_k(a|s)}{\mu(s,a)}$.

\begin{equation}\label{eq_epsilon}
    \lim_{h\to \infty}\frac{\hat{\epsilon}_k(\phi(s,a))}{\hat{\mu}(\phi(s,a))}=\epsilon_{k,1}(s,a)
\end{equation}
where $\epsilon_{k,1}=\frac{\epsilon_k(s)\pi_k(a|s)}{\mu(s,a)}(1-\pi_{k}(a|s))\exp \left(-\left|Q_{k}-Q^{*}\right|(s, a)\right) \left|Q_{k}-\mathcal{B}^* Q_{k-1}\right|(s, a)$.

Using the Eq. (\ref{eq_lim1}), (\ref{eq_lim2}) and (\ref{eq_epsilon}), we have

$$w_k(s,a )=\frac{1}{Z_1^*}\left( E_k(s,a)+\epsilon_{k,1}(s,a)\right). $$

If the action space is continuous, $\pi_k(a|s)=0$, then we have

$$w_k(s,a)=\frac{1}{Z_2^*}\left( F_k(s,a)+\epsilon_{k,2}(s,a)\right)$$

The upper bound of $\frac{\epsilon_{k,1}(s,a)}{E_k(s,a)}$ and $\frac{\epsilon_{k,2}(s,a)}{F_k(s,a)}$ can be derived directly from Lemma \ref{lemma_main}.
This concludes our proof.
\end{proof}

\section{Detailed Proof of Theorem \ref{thm_step}}\label{proof2}

Let $(\mathcal{B}Q)_{k}(s,a)$ denote $|Q_k(s,a)-\mathcal{B}^*Q_k(s,a)|$. We first introduce an assumption.
\begin{assumption}\label{assumption}
At iteration $k$, $(\mathcal{B}Q)_k(s,a)$ is independent of $(\mathcal{B}Q)_k(s',a')$ if $(s,a)\neq (s',a')$  for all $k>0$.
\end{assumption}
This assumption is not strong. If we use a table to represent Q function, it holds apparently. Notably, though we need this assumption in our proof, we can also apply our method on the situation where this assumption doesn't hold. With this assumption, we have the following theorem.

\begin{lemma}\label{thm_cumulative}
Consider a MDP, trajectories $\tau_i=\{s_t^i, a_t^i\}_{t=0}^{T_i}$, $i=0, 1, \dots$ is generated by a policy $\pi$ under this MDP, then we have
\begin{equation}
\begin{aligned}
|Q_k(s,a)-Q^*(s,a)|\leq  &|Q_k(s_t,a_t)-\mathcal{B}^* Q_{k-1}(s_t,a_t)|\\
&+\mathbb E_{\tau}\bigg(\sum_{t'=1}^{{h_\tau^{\pi_k}(s,a)}}\gamma^{t'}\Big((\mathcal{B}Q)_{k-1}(s_{t'},a_{t'})+c\Big)+\gamma^{h_\tau^{\pi_k}(s,a)+1}c\bigg)
\end{aligned}
\end{equation}
where $(\mathcal{B}Q)_k(s_{h_\tau^{\pi_k}(s,a)},a_{h_\tau^{\pi_k}(s,a)})=|Q_k(s_{h_\tau^{\pi_k}(s,a)},a_{h_\tau^{\pi_k}(s,a)})-r(s_{h_\tau^{\pi_k}(s,a)},a_{h_\tau^{\pi_k}(s,a)})|$, $c=\max_{s, a}\big(Q^*(s,a^*)-Q^*(s,a)\big)$, and $(s_{t'}, a_{t'})$ is the $t'$-th state-action pair behind $(s,a)$.
\end{lemma}

\begin{proof}

\begin{align*}
    &\quad \ |Q_k(s_t,a_t)-Q^*(s_t, a_t)|\\
    &=|Q_k(s_t, a_t)-\mathcal{B}^* Q_{k-1}(s_t,a_t)+\mathcal{B}^* Q_{k-1}(s_t,a_t)-\mathcal{B}^* Q^*(s_t,a_t)]|\\
    &\overset{(a)}{\leq}|Q_k(s_t,a_t)-\mathcal{B}^* Q_{k-1}(s_t,a_t)|\\
    &\quad +\gamma |\mathbb E_{p(\tau)}[Q_{k-1}(s_{t+1}, a_{t+1})-Q^*(s_{t+1}, a_{t+1})+Q^*(s_{t+1}, a_{t+1})-Q^*(s_{t+1}, a^*)]|\\
    &\overset{(b)}{\leq} |Q_k(s_t,a_t)-\mathcal{B}^* Q_{k-1}(s_t,a_t)|+\gamma c+\gamma \mathbb E_{\tau}[|Q_{k-1}(s_{t+1}, a_{t+1})-Q^*(s_{t+1}, a_{t+1})|] 
\end{align*}
where the expectation is taken over $s'\sim P(s'|s,a)$, $a'\sim \pi(a'|s')$. (a) uses triangle inequality, (b) is because $f(x)=|x|$ is convex function and using Jensen's Inequality.

Similarly, we have
\begin{align*}
    &\qquad |Q_{k-1}(s_{t+1}, a_{t+1})-Q^*(s_{t+1}, a_{t+1})|\\
    &=|Q_{k-1}(s_{t+1}, a_{t+1})-\mathcal{B}^* Q_{k-1}(s_{t+1},a_{t+1})+\mathcal{B}^* Q_{k-1}(s_{t+1},a_{t+1})-\mathcal{B}^* Q^*(s_{t+1},a_{t+1})]|\\
    &\leq (\mathcal{B}Q)_{k-1}(s_{t+1},a_{t+1})+\gamma c +\gamma \mathbb E_{\tau}[|Q_{k-1}(s_{t+2}, a_{t+2})-Q^*(s_{t+2}, a_{t+2})|] 
\end{align*}
Recursively, 
\begin{equation}
\begin{aligned}
&\quad \ |Q_k(s,a)-Q^*(s,a)|\\
&\leq |Q_k(s_t,a_t)-\mathcal{B}^* Q_{k-1}(s_t,a_t)|+\sum_{t'=1}^{{h_\tau^{\pi_k}(s,a)}}\gamma^{t'}\Big((\mathcal{B}Q)_{k-1}(s_{t'},a_{t'})+c\Big)+\gamma^{h_\tau^{\pi_k}(s,a)+1}c\\
\end{aligned}
\end{equation}
where  $(\mathcal{B}Q)_{k-1}(s_{h_\tau^{\pi_k}(s,a)},a_{h_\tau^{\pi_k}(s,a)})=|Q_{k-1}(s_{h_\tau^{\pi_k}(s,a)},a_{h_\tau^{\pi_k}(s,a)})-r(s_{h_\tau^{\pi_k}(s,a)},a_{h_\tau^{\pi_k}(s,a)})|$.
\end{proof}

This theorem shows that the cumulative Bellman error with a constant $c$ is an upper bound of $|Q_k-Q^*|$, so we can use Bellman error with the constant to estimate this quantity. 

Suppose the Q function is equipped with a learning rate $\alpha$, i.e., $Q_k=\alpha(\mathcal{B}^*Q_{k-1}-Q_{k-1})+(1-\alpha)Q_{k-1} $, we have the following lemma,

\begin{lemma}\label{lemma_B}
\begin{equation}
\begin{aligned}
    &\left\|\mathcal{B}^*Q_k-Q_k\right\|_\infty\leq (\alpha\gamma+1-\alpha)^k\left\|\mathcal{B}^*Q_0-Q_0\right\|_\infty\\
    &\left\|\mathcal{B}^\ast Q_{k-1}-Q_k\right\|_\infty\leq (1-\alpha)(\alpha\gamma+1-\alpha)^{k-1}\left\|\mathcal{B}^\ast Q_0-Q_0\right\|_\infty
\end{aligned}
\end{equation}
\end{lemma}
\begin{proof}
\begin{align*}
    &\quad \quad Q_k=Q_{k-1}+\alpha (\mathcal{B}^*Q_{k-1}-Q_{k-1})\\
    &\Longrightarrow \mathcal{B}^*Q_{k-1}-Q_k=\frac{1-\alpha}{\alpha}(Q_k-Q_{k-1})\\
\end{align*}
\begin{equation}\label{eq_B}
\begin{aligned}
    \left\|\mathcal{B}^*Q_{k}-Q_k\right\|_\infty&\leq \left\|\mathcal{B}^*Q_{k}-\mathcal{B}^*Q_{k-1}\right\|_\infty+\left\|\mathcal{B}^*Q_{k-1}-Q_k\right\|_\infty\\
    &\leq \gamma \left\|Q_k-Q_{k-1}\right\|_\infty+\left\|\mathcal{B}^*Q_{k-1}-Q_k\right\|_\infty\\
    &\leq (\gamma+\frac{1-\alpha}{\alpha})\left\|Q_k-Q_{k-1}\right\|_\infty\\
    &\leq (\alpha\gamma+1-\alpha) \left\|\mathcal{B}^*Q_{k-1}-Q_{k-1}\right\|_\infty
\end{aligned}
\end{equation}

\begin{equation}\label{eq_A}
\begin{aligned}
\left\|Q_k-\mathcal{B}^\ast Q_{k-1}\right\|_\infty&\leq (1-\alpha)\left\|Q_{k-1}-\mathcal{B}^\ast Q_{k-1}\right\|_\infty\\
&\leq (1-\alpha)\left(\left\|Q_{k-1}-\mathcal{B}^\ast Q_{k-2}\right\|_\infty+\left\|\mathcal{B}^\ast Q_{k-2}-\mathcal{B}^\ast Q_{k-1}\right\|_\infty\right)\\
&\leq (1-\alpha)\left(\gamma\left\|Q_{k-2}-Q_{k-1}\right\|_\infty+\left\|Q_{k-1}-\mathcal{B}^\ast Q_{k-2}\right\|_\infty\right)\\
&\overset{(a)}{\leq} (1-\alpha)(\gamma+\frac{1-\alpha}{\alpha})\left\|Q_{k-1}-Q_{k-2}\right\|_\infty\\
&\leq (1-\alpha)(\alpha\gamma+1-\alpha)\left\|\mathcal{B}^\ast Q_{k-2}-Q_{k-2}\right\|_\infty
\end{aligned}
\end{equation} 

Then we can finish the proof by recursively applying Eq. (\ref{eq_B}) and (\ref{eq_A}).
\end{proof}

\begin{lemma}[Azuma]\label{Azuma}
Let $X_0,X_1, \dots$ be a martingale such that, for all $k\geq 1$,
$|X_k-X_{k-1}|\leq c_k$,
Then
\begin{equation}
    \textnormal{Pr}[|X_n-X_0|\geq t]\leq 2\exp(-\frac{t^2}{2\sum^n_{k=1}c^2_k}).
\end{equation}
\end{lemma}

In the follows, we denote $\sum_{t=1}^{h^{\pi_k}_\tau(s,a)}\gamma^{t}(\mathcal{B}Q)_k(s_{t},a_{t})$ as $\mathcal{B}(s, a, k)$.

\begin{lemma}\label{thm_expectation}
Let $\phi_k=(\alpha\gamma+1-\alpha)^k||\mathcal{B}^*Q_0-Q_0||_\infty$, $f(t)=\frac{\gamma-\gamma^{t+1}}{1-\gamma}$ and $\epsilon_{\pi_k}=\sup_{s,a}\sum_{t=1}^{\infty} \gamma^t\rho^{\pi_k}(s,a,t)$.
Under Assumption \ref{assumption},  with probability at least $1-\delta$,

\begin{equation}\label{eq_confidence}
|\mathcal{B}(s,a,k)-f(h^{\pi_k}_\tau(s,a))\mathbb E[(\mathcal{B}Q)_k(s_{t}, a_{t})]|
\leq \sqrt{2f(h^{\pi_k}_\tau(s,a))^2(1+\epsilon_{\pi_k})^2\phi_k^2 \log \frac{2}{\delta}}.
\end{equation}
\end{lemma}

\begin{proof}

Let 
$\mathcal{F}_h=\sigma_t(s_0, a_0, r_0, \dots, s_{h-1}, a_{h-1}, r_{h-1})$
be the $\sigma$-field summarising the information available just before $s_t$ is observed.

Define $Y_h=\mathbb E[\mathcal{B}(s, a, k)|\mathcal{F}_h]$, then $Y_h$ is a martingale because
\begin{align*}
\mathbb E[Y_h|\mathcal{F}_{h-1}]=\mathbb E[\mathbb E[\mathcal{B}(s, a, k)|\mathcal{F}_h]|\mathcal{F}_{h-1}]=\mathbb E[\mathcal{B}(s, a, k)|\mathcal{F}_{h-1}]=Y_{h-1}
\end{align*}

$$
\begin{aligned}
|Y_h-Y_{h-1}|&\leq \gamma^h(1+\epsilon_{\pi_k})\left\|\mathcal{B}^*Q_{k}-Q_k\right\|_\infty\\
&\leq \gamma^h(1+\epsilon_{\pi_k})(\alpha\gamma+1-\alpha)^k\left\|\mathcal{B}^*Q_0-Q_0\right\|_\infty=\gamma^h(1+\epsilon_{\pi_k})\phi_k
\end{aligned}
$$
By Azuma's lemma, 

\begin{equation*}
    \textnormal{Pr}\bigg(|\mathcal{B}(s, a, k)-\mathbb E[\mathcal{B}(s, a, k)]|\geq \sqrt{2\Big(\frac{\gamma-\gamma^{h_\tau^{\pi_k}+1}}{1-\gamma}\Big)^2(1+\epsilon_{\pi_k})^2\phi_k^2 \log \frac{2}{\delta}}\bigg)\leq \delta
\end{equation*}


\end{proof}
Since $(\alpha\gamma+1-\alpha)$ is less than 1, $\phi_k$ decreases exponentially as $k$ increases. This theorem shows that we can use the average Bellman error as a surrogate of Bellman error at specific state-action pair without losing too much accuracy. In this way, $|Q_k-Q^*|(s,a)$ is merely related to the distance to end of the state-action pair. 

\begin{theorem}[formal]
Under Assumption \ref{assumption}, with probability at least $1-\delta$, we have
\begin{equation}
\begin{aligned}
&\quad \ |Q_k(s,a)-Q^*(s,a)|\\
&\leq \mathbb E_{\tau}\bigg(f(h^{\pi_k}_\tau(s,a))\big(\mathbb E[(\mathcal{B}Q)_k(s_{t'}, a_{t'})]+c\big)+\gamma^{h^{\pi_k}_\tau(s,a)+1}c\bigg)+g(k, \delta)
\end{aligned}
\end{equation}
where $g(k, \delta)=(1-\alpha)\phi_{k-1}+\sqrt{2f(h^{\pi_k}_\tau(s,a))^2(1+\epsilon_{\pi_k})^2\phi_k^2 \log \frac{2}{\delta}}$ .
\end{theorem}
\begin{proof}

According to Lemma \ref{thm_cumulative}, we have
\begin{equation}\label{eq_final}
\begin{aligned}
|Q_k(s,a)-Q^*(s,a)|\leq  &|Q_k(s_t,a_t)-\mathcal{B}^* Q_{k-1}(s_t,a_t)|\\
&+\mathbb E_{\tau}\bigg(\sum_{t'=1}^{{h_\tau^{\pi_k}(s,a)}}\gamma^{t'}\Big((\mathcal{B}Q)_{k-1}(s_{t'},a_{t'})+c\Big)+\gamma^{h_\tau^{\pi_k}(s,a)+1}c\bigg)
\end{aligned}
\end{equation}
Using Lemma \ref{lemma_B}, we can upper bound $|Q_k(s_t,a_t)-\mathcal{B}^* Q_{k-1}(s_t,a_t)|$ as $(1-\alpha)\phi_{k-1}$. 
With Lemma \ref{thm_expectation}, $\sum_{t=1}^{h^{pi_k}_\tau(s,a)}\gamma^{t}(\mathcal{B}Q)_k(s_{t},a_{t})$ can be bounded by right hand side of Eq. (\ref{eq_confidence}) with probability $1-\delta$. Substitute the bounds into Eq. (\ref{eq_final}), we have
\begin{equation*}
\begin{aligned}
|Q_k(s,a)-Q^*(s,a)|&\leq  (1-\alpha)\phi_{k-1}+\sqrt{2f(h^{\pi_k}_\tau(s,a))^2(1+\epsilon_{\pi_k})^2\phi_k^2 \log \frac{2}{\delta}}\\
&\quad \ +\mathbb E_{\tau}\bigg(f(h^{\pi_k}_\tau(s,a))\big(\mathbb E[(\mathcal{B}Q)_k(s_{t'}, a_{t'})]+c\big)+\gamma^{h^{\pi_k}_\tau(s,a)+1}c\bigg)\\
&\leq g(k,\delta)\\
&\quad \ +\mathbb E_{\tau}\bigg(f(h^{\pi_k}_\tau(s,a))\big(\mathbb E[(\mathcal{B}Q)_k(s_{t'}, a_{t'})]+c\big)+\gamma^{h^{\pi_k}_\tau(s,a)+1}c\bigg)
\end{aligned}
\end{equation*}
\end{proof}

\section{Algorithms}\label{algorithms}
\begin{algorithm}[htb!]
\caption{ReMERN}
\label{alg:discor}
\begin{algorithmic}[1]
\STATE Initialize Q-values $Q_\theta(s, a)$, a replay buffer $\mu$, an {\color{red} error model $\Delta_\phi(s, a)$}, and a {\color{red}weight model $\kappa_\psi$}.
\FOR{step $k$ in $\{1,\dots , N\}$}
\STATE Collect $M$ samples using $\pi_k$, add them to replay buffer $\mu$, sample $\{(s_i, a_i)\}_{i=1}^N\sim \mu$.
\STATE Evaluate $Q_\theta(s,a)$, $\Delta_\phi(s,a)$ and $\kappa_\psi(s,a)$ on samples $(s_i,a_i)$.
\STATE Compute target values for $Q$ and $\Delta$ on samples:\\
       $y_i=r_i+\gamma\max_{a'}Q_{k-1}(s'_i,a')$.\\
       $\hat{a}_i=\argmax_aQ_{k-1}(s'_i,a)$.\\
       $\hat{\Delta}=|Q_\theta(s,a)-y_i|+\gamma\Delta_{k-1}(s'_i,\hat{a}_i)$.
\STATE {\color{red}Optimize $\kappa_\psi$ using \\
$$
L_{\kappa}(\psi):=\mathbb{E}_{\mathcal{D}_{\mathrm{s}}}\left[f^{*}\left(f^{\prime}\left(\kappa_{\psi}(s, a)\right)\right)\right]-\mathbb{E}_{\mathcal{D}_{\mathrm{f}}}\left[f^{\prime}\left(\kappa_{\psi}(s, a)\right)\right].
$$}
\STATE {\color{red}Compute $w_k$ using\\
$$
w_k(s, a)\propto \frac{d^{\pi_k}(s, a)}{\mu(s, a)}\exp \left(-\gamma\left[P^{\pi^{w_{k-1}}} \Delta_{k-1}\right](s, a)\right).
$$}
\STATE Minimize Bellman error for $Q_\theta$ weighted by $w_k$.\\
       $\theta_{k+1} \leftarrow \underset{\theta}{\operatorname{argmin}} \frac{1}{N} \sum_{i}^{N} {\color{red}w_{k}\left(s_{i}, a_{i}\right)}\left(Q_{\theta}\left(s_{i}, a_{i}\right)-y_{i}\right)^{2}$.
\STATE {\color{red}Minimize ADP error for training $\phi$.\\
       $\phi_{k+1} \leftarrow \underset{\phi}{\operatorname{argmin}} \frac{1}{N} \sum_{i=1}^{N}\left(\Delta_{\phi}\left(s_{i}, a_{i}\right)-\hat{\Delta}_{i}\right)^{2}$.}
\ENDFOR
\end{algorithmic}
\end{algorithm}
\newpage

\begin{algorithm}[htb!]
\caption{ReMERT}
\label{alg:TCE}
\begin{algorithmic}[1]
\STATE Initialize Q-values $Q_\theta(s, a)$, a replay buffer $\mu$, and a {\color{red}weight model $\kappa_\psi$}.
\FOR{step $k$ in $\{1,\dots , N\}$}
\STATE Collect $M$ samples using $\pi_k$, add them to replay buffer $\mu$, sample $\{(s_i, a_i)\}_{i=1}^N\sim \mu$.
\STATE Evaluate $Q_\theta(s,a)$ and $\kappa_\psi(s,a)$ on samples $(s_i,a_i)$.
\STATE Compute target values for $Q$ on samples:\\
       $y_i=r_i+\gamma\max_{a'}Q_{k-1}(s'_i,a')$.\\
       $\hat{a}_i=\argmax_aQ_{k-1}(s'_i,a)$.
\STATE {\color{red}Optimize $\kappa_\psi$ using\\
$$
L_{\kappa}(\psi):=\mathbb{E}_{\mathcal{D}_{\mathrm{s}}}\left[f^{*}\left(f^{\prime}\left(\kappa_{\psi}(s, a)\right)\right)\right]-\mathbb{E}_{\mathcal{D}_{\mathrm{f}}}\left[f^{\prime}\left(\kappa_{\psi}(s, a)\right)\right].
$$}
\STATE {\color{red}Compute $w_k$ using\\
$$
    w_k(s,a)\propto \frac{d^{\pi_k}(s, a)}{\mu(s, a)}\exp \Big(-\mathbb E_{q_{k-1}(\tau)}\text {TCE}_c(s,a)\Big).
    $$}
\STATE Minimize Bellman error for $Q_\theta$ weighted by $w_k$.\\
       $\theta_{k+1} \leftarrow \underset{\theta}{\operatorname{argmin}} \frac{1}{N} \sum_{i}^{N} {\color{red}w_{k}\left(s_{i}, a_{i}\right)}\left(Q_{\theta}\left(s_{i}, a_{i}\right)-y_{i}\right)^{2}$.
\ENDFOR
\end{algorithmic}
\end{algorithm}
\section{Experiments}\label{experiments}
We now present some additional experimental results and experiment details which we could not present due to shortage of space in the main body.
\subsection{Cumulative Recurring Probability on Atari Games}
\begin{table}[h]\label{tab_recurrence}
\centering
\caption{The value of $\epsilon_\pi$ with different policies in Atari games. }
\begin{tabular}{lccc}
\hline
\multicolumn{1}{c}{} & Initial (Random) policy & Policy at timestep 100k & Policy at timestep 200k \\ \hline
Pong                 & 0.00             & 0.00                       & 0.00                       \\ \hline
Breakout             & 0.00             & 0.00                       & 0.00                       \\ \hline
Kangaroo             & 0.44            & 0.32                      & 0.15                      \\ \hline
KungFuMaster         & 0.66            & 0.06                       & 0.01                       \\ \hline
MsPacman             & 0.44            & 0.04                       & 0.00                       \\ \hline
Qbert                & 0.02             & 0.05                       & 0.00                       \\ \hline
Enduro               & 0.00             & 0.00                       & 0.00                       \\ \hline
\end{tabular}
\end{table}
In Pong, Breakout and Enduro, $\epsilon_\pi$ keeps zero, so there is no error terms in such environments. For KungFuMaster and MsPacman, though $\epsilon_\pi$ is high for the initial policy, its value decreases rapidly as the policy updates. The error term in Kangaroo induces some error but $\epsilon_\pi$ is still much smaller than one. 
The experiment results imply we can ignore the error term in most reinforcement learning environments.

\subsection{Illustrations on Stable Temporal Structure}
We conduct an extra experiment in the GridWorld environment to support our claim that the trajectories have a stable temporal ordering of states. Fig.~\ref{fig:ass2} shows an empirical verification of the stable temporal ordering of states property. 

The result shows that the variance of distance to end in one state is not large and decreases fast in training process. This means the property is not a strong assumption and can be satisfied in many environments. 

\begin{figure}[h]
    \centering
    \includegraphics[width=0.5\textwidth]{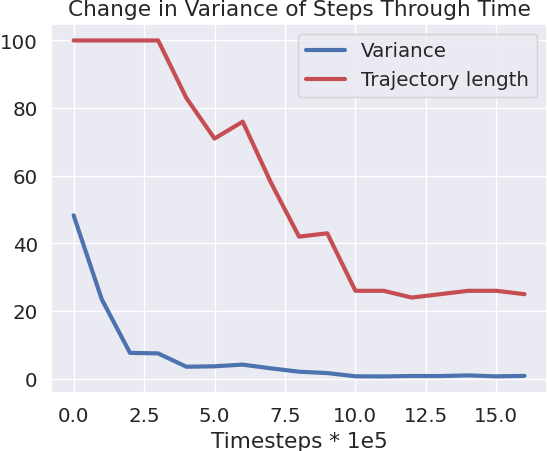}
    \caption{Change in variance of distance to end through time. 
    For each timestep, the red line shows the average trajectory length in the last 500 states. The blue line shows the average variance of the last 500 states, where the variance for each state is calculated from its positions in their corresponding trajectories.}
    \label{fig:ass2}
\end{figure}


\subsection{Description of Involved Environments}
The Meta-World benchmark~\cite{metaworld} includes a series of robotic manipulation tasks. These tasks differ from traditional goal-based ones in that the target objects of the robot. For example, the screw in the hammer task has randomized positions and can not be observed by RL agents. Therefore, Meta-World suite can be highly challenging for current state-of-the-art off policy RL algorithms. Visual descriptions for the Meta-World tasks are shown in Fig.~\ref{fig:metaworld4}. DisCor \cite{discor} showed preferable performance on some Meta-World tasks compared to SAC and PER \cite{per}, but the learning process is slow and unstable.

\begin{figure}[h]
    \centering
    \includegraphics[width=\textwidth]{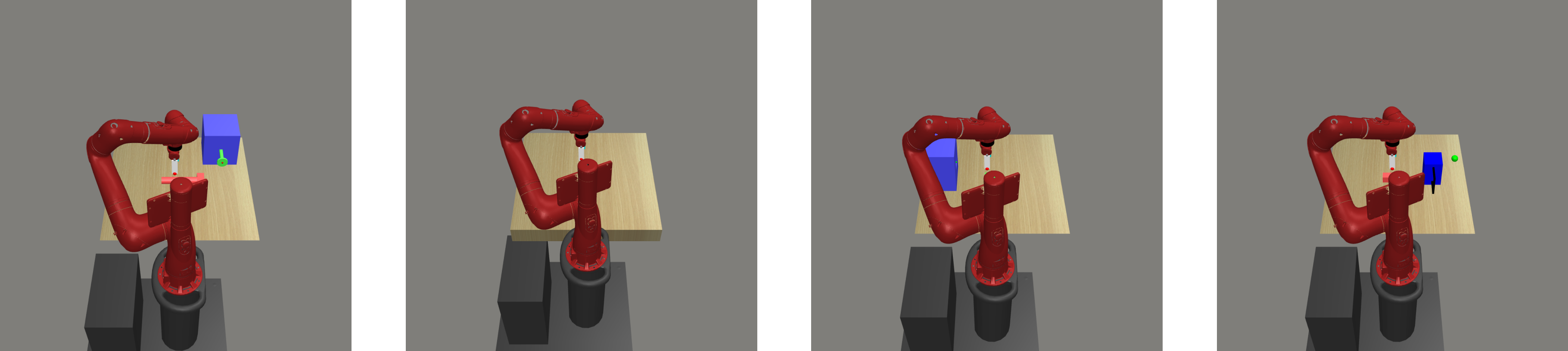}
    \caption{Pictures for Meta-World tasks hammer, sweep, peg-insert-side and stick-push.}
    \label{fig:metaworld4}
\end{figure}

\subsection{Extended Results on Atari Environment}
We evaluate ReMERN on an extended collection of Atari environments. As is shown in Tab.~\ref{tab:atari_extend}, ReMERN outperforms baseline methods in most of the environments.

	\begin{table}[b]
			\centering
			\caption{Extended experiments on Atari.}
			\begin{tabular}{{lcccc}}
			\toprule[1.25pt]
			Environments	 & DQN(Nature) & DQN(Baseline) & PER(rank-b.) & ReMERT(Ours)  \\
			\hline
            Assault & 3395$\pm$775 & 8260$\pm$2274 & 3081 & \textbf{9952}$\pm$3249\\
            \hline
            BankHeist & 429$\pm$650 & 1116$\pm$34 & 824 & \textbf{1166}$\pm$82\\
            \hline
            BeamRider & 6846$\pm$1619 & 5410$\pm$1178 & \textbf{12042} & 5542$\pm$1577\\ 
            \hline
            Breakout & 401$\pm$27 & 242$\pm$79 & \textbf{481} & 223 $\pm$79\\
            \hline
            Enduro & 302$\pm$25 & 1185$\pm$100 & 1266 & \textbf{1303}$\pm$258\\
            \hline
            Kangaroo & 6740$\pm$2959 & 6210$\pm$1007 & \textbf{9053} & 7572$\pm$1794\\
            \hline
            KungFuMaster & 23270$\pm$5955 & 29147$\pm$7280 & 20181 & \textbf{35544}$\pm$8432 \\
            \hline
            MsPacman & 2311$\pm$525 & 3318$\pm$647 & 964.7 & \textbf{3481}$\pm$1350\\
            \hline
            Riverraid & 8316$\pm$1049 & 9609$\pm$1293 & 10205 & \textbf{10215}$\pm$1815 \\
            \hline
            SpaceInvaders & \textbf{1976}$\pm$893 & 925$\pm$371 & 1697 & 877$\pm$249\\
            \hline
            UpNDown & 8456$\pm$3162 & 134502$\pm$68727 & 16627 & \textbf{145235}$\pm$94643 \\
            \hline
            Qbert & 10596$\pm$3294 & 13437$\pm$2537 & 12741 & \textbf{14511}$\pm$1138\\
            \hline
            Zaxxon & 4977$\pm$1235 & 5070$\pm$997 & \textbf{5901} & 5738$\pm$1296 \\
            \hline
			\end{tabular}
			\label{tab:atari_extend}
	\end{table}
	
\subsection{Extended Evaluation on Gridworld}\label{sec_gridworld}
Aside from the FourRooms environment in Gridworld, we also conduct comparative evaluation on the Maze environment. The results are shown in Fig.~\ref{fig:allgrid}. The Maze environment perfectly fits for our TCE-based prioritization, and TCE achieves the best performance among other methods.

\begin{figure}[h]
    \centering
    \includegraphics[width=\textwidth]{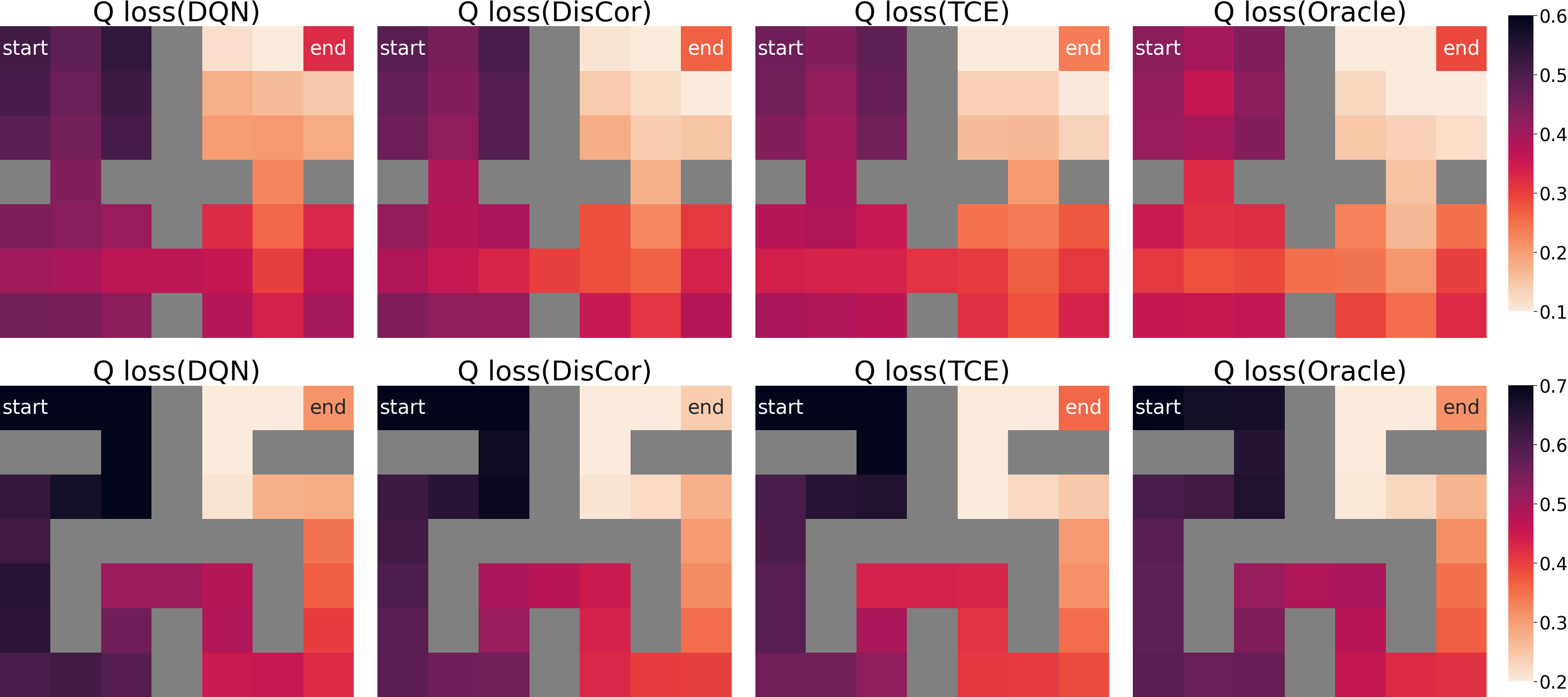}
    \caption{Extended evaluation results on Gridworld.}
    \label{fig:allgrid}
\end{figure}




\subsection{The Relation Between Distance to End and $|Q_k-Q^*|$}
\label{sec:eq_q}

\begin{figure}[h]
    \includegraphics[width=\textwidth]{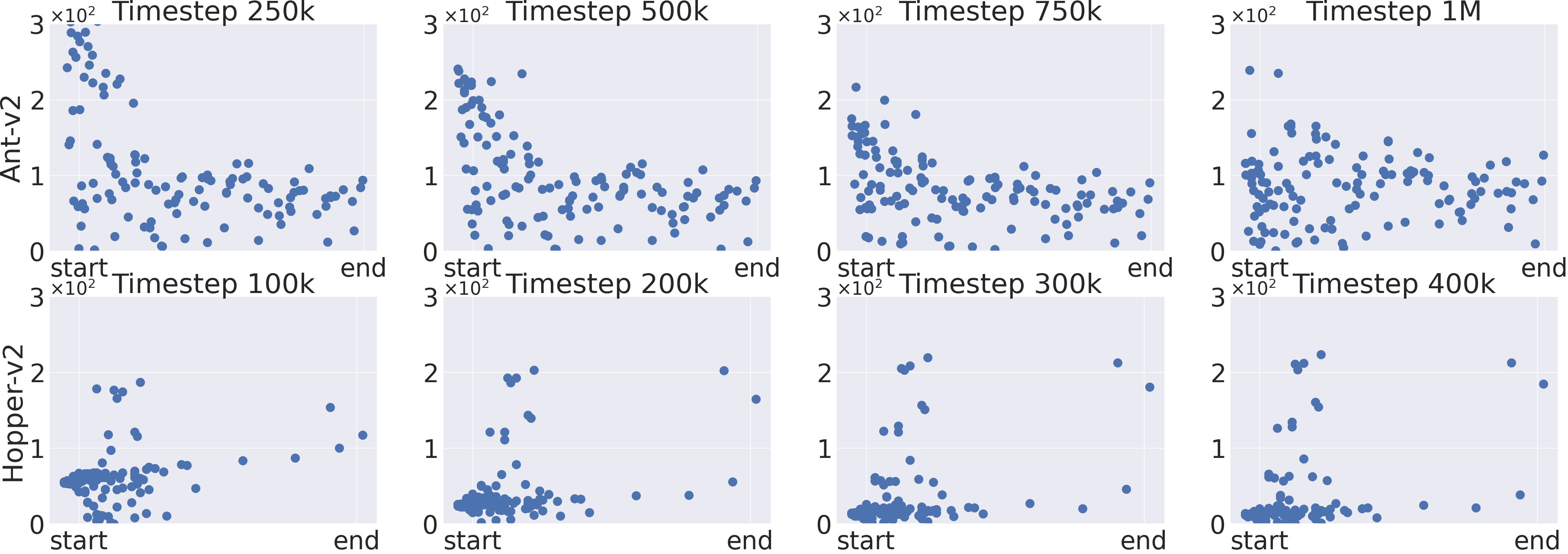}
    \caption{The relationship between $|Q_k-Q^*|$ and distance to end in two MuJoCo tasks (Ant and Hopper).}
    \label{fig:relation}
\end{figure}

In section \ref{sec_gridworld}, the relationship between $|Q_k-Q^*|$ and distance to end has been shown in tabular environments. In this section, we explore the relationship in environments with continuous state and action spaces, i.e., Ant and Hopper tasks of MuJoCo environment. Since $Q^*$ is inaccessible in these complex continuous control tasks, we approximate it by doing Monte-Carlo rollout using the best policy during training. The results are shown in Fig.~\ref{fig:relation}. 

The negative correlation between the two quantities is obvious in Ant-v2, but vague in Hopper-v2. It is because Hopper is a relatively easy task so that all state-action pair have small Q loss and don't have such correlation. The performance of ReMERT shown in Section 4 accords with this observation. ReMERT outperforms other algorithms in environments with a high correlation between the two quantities, and has a relatively poor performance in environments without such correlation. 

\subsection{Implementation Details}

\subsubsection{Algorithm Details}
\paragraph{Weight Normalization} To stabilize the prioritization, we apply normalization to the estimation of two terms: $\frac{d^{\pi_k}(s,a)}{\mu(s,a)}$ and $\exp(-|Q_k-Q^*|)$. 

First, we introduce the normalization in calculating $\frac{d^{\pi_k}(s,a)}{\mu(s,a)}$, which aims to address the finite sample size issue. The normalization is:
$$
\tilde{\kappa}_\psi(s,a):=\frac{\kappa_\psi(s,a)^{1/T}}{\mathbb E_{\mathcal{D}_s}[\kappa_\psi(s,a)^{1/T}]}
$$
where $\mathcal{D}_s$ is the slow buffer and $T$ is temperature.

ReMERN uses $\Delta_\phi$ to fit the discounted cumulative Bellman error. However, the Bellman error has different scales in various environments, leading to erroneous weight. We normalize it by dividing a moving average of Bellman error. The divisor is denoted as $\tau$. Then the estimation of $\exp(-|Q_k-Q^*|)$ becomes 
$$\exp \left(-\frac{\gamma\left[P^{\pi^{w_{k-1}}} \Delta_{k-1}\right](s, a)}{\tau}\right)
$$
\paragraph{Truncated TCE} TCE may suffer from a big deviation when $h^\pi_\tau(s,a)$ is too large or too small. To tackle this issue and improve the stability of the prioritization, we clip the output of TCE into $[b_1, b_2]$, where $b_1$ and $b_2$ are regarded as hyperparameters.

\paragraph{Baselines}
For the ReMERN and ReMERT algorithms in continuous action spaces with sensory observation, we alter the re-weighting strategy to $\frac{d^{\pi_k}(s,a)}{\mu(s,a)}$ and TCE approximation based on the source code provided by DisCor\footnote{https://github.com/ku2482/discor.pytorch}. For the algorithms in discrete action spaces with pixel observation, we employ the baseline Tianshou\footnote{https://github.com/thu-ml/tianshou}~\cite{tianshou} and add corresponding components.

\subsubsection{Hyperparameter Details}
The hyperparameters of our ReMERN and ReMERT algorithms include network architectures, learning rates, temperatures in on-policy reweight and DisCor, and the lower and upper bound in TCE algorithm. They are specified as follows:
\begin{itemize}[leftmargin=*]
    \item \textbf{Network architectures}~~We use standard Q and policy network in MuJoCo benchmark with hidden network sizes [256, 256]. In Meta-World we add an extra layer and the hidden network sizes are [256, 256, 256]. The networks computing $\Delta$ and $\kappa$  have one extra layer than the corresponding Q and policy network.
    \item \textbf{Learning rates}~~The learning rate for continuous control tasks, including Meta-World, MuJoCo and DMC, is set to be 3e-4 for Q and policy networks alike. For Atari games, the learning rate is set to be 1e-4 and fixed across all environments.
    
    \item \textbf{Temperatures}~~The temperature for weights related with $\frac{d^{\pi_k}(s,a)}{\mu(s,a)}$ is 7.5 and fixed across different environments. Also, DisCor has a temperature hyperparameter related to the output normalization of the error network. We keep it unchanged in the Meta-World and DMC benchmark, and divide it by 20 in MuJoCo environments to make it compatible with on-policy prioritization weights.
    \item \textbf{Bounds in TCE}~~We select time-adaptive lower and upper bounds for TCE. The lower bound rises from 0.4 when training begins to 0.9 when it ends, and the upper bound drops from 1.6 to 1.1 accordingly. The bounds are fixed across different environments.
    \item \textbf{Random Seeds}~~In MuJoCo, Meta-World and DMC benchmarks, we run each experiment with four random seeds. The results are plotted with the mean of the four experiments. In Atari games, we run experiments with three random seeds and select the one with max return.
\end{itemize}

\end{document}


\appendix

\section{Proof of Theorem~\ref{thm_optimization}}\label{proof}
In this section, we present detailed proofs for the theoretical derivation of Thm.~\ref{thm_optimization}, which aims to solve the following optimization problem:
\begin{equation}\label{eq_regret}
\begin{aligned}
    &\min_{w_k} \qquad \qquad \eta(\pi^*)-\eta(\pi_{k})\\
    &~\textnormal{s.t.}\quad Q_k=\argmin_{Q\in\mathcal{Q}}~\mathbb E_{\mu}[w_k(s,a)\cdot(Q-\mathcal B^* Q_{k-1})^2(s,a)],\\
    &\qquad \ \ \mathbb E_\mu[w_k(s,a)]=1, \quad w_k(s,a)\geq 0,
\end{aligned}
\end{equation}

The problem is equivalent to:

\begin{equation}\label{eq_pk}
\begin{aligned}
    &\min_{p_k} \qquad  \eta(\pi^*)-\eta(\pi_k)\\
    &~\textnormal{s.t.} \quad Q_k=\argmin_{Q\in \mathcal{Q}}\mathbb E_{p_k}[(Q-\mathcal B^\pi Q_{k-1})^2(s,a)]\\
    &\qquad \ \ \sum_{s,a}p_k(s,a)=1, \quad p_k(s,a)\geq 0,
\end{aligned}
\end{equation}

The desired $w_k(s,a)$ is $\frac{p_k(s,a)}{\mu(s,a)}$, where $p_k(s,a)$ is the solution to the problem~\ref{eq_pk}.

To solve Problem ~\ref{eq_pk},
we need to give the definition of \textit{total variation distance}, \textit{Wasserstein metric} and the diameter of a set, and introduce some mild assumptions.

\begin{definition}[total variation distance]
 The total variation (TV) distance of distribution $P$ and $Q$ is defined as
 $$D_{\textnormal{TV}}(P, Q)=\frac{1}{2}\left\|P-Q\right\|_1$$
\end{definition}

\begin{definition}[Wasserstein metric]
 For $F, G$ two c.d.fs over the reals, the Wasserstein metric is defined as
 $$
d_{p}(F, G):=\inf _{U, V}\|U-V\|_{p}
$$
where the infimum is taken over all pairs of random variables $(U, V)$ with respective cumulative distributions $F$ and $G$.
\end{definition}

\begin{definition}
 The diameter of a set $A$ is defined as
 $$\textnormal{diam}(A)=\sup_{x,y\in A}m(x,y)$$
 where $m$ is the metric on $A$.
\end{definition}

\begin{assumption}\label{assump_1}
The state space $\mathcal{S}$ and action space $\mathcal{A}$ are metric spaces with a metric $m$.
\end{assumption}

\begin{assumption}\label{assump_2}
The Q function is continuous with respect to $\mathcal{S}\times \mathcal{A}$. 
\end{assumption}

\begin{assumption}\label{assump_3}
The transition function $T$ is continuous with respect to $\mathcal{S}\times \mathcal{A}$ in the sense of Wasserstein metric, i.e., 
$$\lim_{(s,a)\to (s_0,a_0)}d_p(T(\cdot|s,a),T(\cdot|s_0,a_0))=0,$$
where $d_p$ denote the Wasserstein metric.
\end{assumption}

These assumptions are not strong and can be satisfied in most of environments includes MuJoCo, Atari games and so on. 

Let $d_i^\pi(s)$ denote the discounted state distribution, where the state is visited by the agent for the i-th time. that is
$$d^\pi_i(s)=(1-\gamma)\sum_{t_i=0}^{\infty}\gamma^{t_i}\textnormal{Pr}(s_{t_k}=s, \forall k\in [i]),$$
where $[k]=\{j\in \mathbb N_+: j\leq k\}$.
Notably, 
\begin{align}
\label{eq_sum} &d^\pi(s)=\sum_{i=1}^\infty d_i^\pi(s)\\
\label{eq_next} &d^\pi_i(s)=\sum_{t=1}^\infty\rho^\pi(s, \pi(s), t)\gamma^t d^\pi_{i-1}(s),
\end{align}
where $\rho^\pi(s, \pi(s), t)$ is the shorthand for $\mathbb E_{a\sim \pi}\rho^\pi(s, a, t)$.

The standard definitions of Q function, value function and advantage function is:
\begin{align*}
    &Q^\pi(s, a)=\mathbb E_\pi[\sum_{t\geq 0}\gamma^t r(s_t, a_t)|s_0=s, a_0=a].\\
    &V^\pi(s)=\mathbb E_\pi[\sum_{t\geq 0}\gamma^t r(s_t, a_t)|s_0=s].\\
    &A^\pi(s, a)=Q^\pi(s, a)-V^\pi(s).\\
\end{align*}

In the follows, Lemma \ref{lemma_expectation} is a technique used in Lemma \ref{lemma_dpik}. Lemma \ref{lemma_dpik} shows that $\left|\frac{\partial d^\pi(s)}{\partial \pi(s)}\right|$ is a small quantity. 

\begin{lemma}\label{lemma_expectation}
Let $f$ be an Lebesgue integrable function, $P$ and $Q$ are two probability distributions, $|f|\leq C$, then
\begin{equation}
    \left|\mathbb E_{P(x)}f(x)-\mathbb E_{Q(x)}f(x)\right|\leq CD_{\textnormal{TV}}(P, Q)
\end{equation}
\end{lemma}
\begin{proof}
\begin{align*}
    \left|\mathbb E_{P(x)}f(x)-\mathbb E_{Q(x)}f(x)\right|&=\left|\sum_x [P(x)f(x)-Q(x)f(x)]\right|\\
    &=\left|\sum_x [P(x)f(x)-Q(x)f(x)]\mathbb I[P(x)>Q(x)]\right.\\
    &\quad \left.-\sum_x [P(x)f(x)-Q(x)f(x)]\mathbb I[P(x)<Q(x)]\right|\\
    &\leq CD_{\textnormal{TV}}(P, Q)
\end{align*}
\end{proof}

\begin{lemma}\label{lemma_dpik}
Let $\epsilon_\pi=\sup_{s,a}\sum_{t=1}^\infty \gamma^t\rho^\pi(s,a,t)$, we have
\begin{equation}
\left|\frac{\partial d^\pi(s)}{\partial \pi(s)}\right|\leq \epsilon_\pi d_1^\pi(s)
\end{equation}
and $\epsilon_\pi\leq 1$.

\end{lemma}
\begin{proof}
 The definition of $\rho^\pi(s,a,t)$ implies 
 $$0\leq \sum_{t=1}^\infty \gamma^t\rho^\pi(s,a,t)\leq \epsilon_\pi\leq 1, \qquad \forall a\in \mathcal{A}$$
 Based on this fact, we have
 $$\left|\sum_{t=1}^\infty \gamma^t\left(\rho^\pi(s,a_1,t)-\rho^\pi(s,a_2,t)\right)\right|\leq \epsilon_\pi, \qquad \forall a_1, a_2\in \mathcal{A}$$
 
 Let $\rho^\pi(s, \pi(s), t)$ be a shorthand for $\mathbb E_{a\sim \pi(s)}\rho^\pi(s, a, t)$. 

If $\pi$ changes a little and becomes $\pi'$, and $\delta_a=D_{\text{TV}}(\pi(s),\pi'(s))$, then we have
\begin{equation}\label{eq_all}
\begin{aligned}
&\quad \ \left|\sum_{t=1}^\infty\gamma^t\left(\rho^\pi(s, \pi(s), t)-\rho^\pi(s, \pi'(s), t)\right)\right|\\
&= \left|\mathbb E_{a_1\sim \pi}\sum_{t=1}^\infty \gamma^t\rho^\pi(s,a_1,t)-\mathbb E_{a_2\sim \pi'}\sum_{t=1}^\infty \gamma^t\rho^\pi(s,a_1,t)\right|\\
&\leq \epsilon_\pi\delta_a
\end{aligned}
\end{equation}
This inequality comes from Lemma \ref{lemma_expectation}.

We denote the difference between $d_2^\pi(s)$ and $d_2^{\pi'}(s)$ as $\Delta d_2(s)$, which can be bounded as follows:
\begin{align*}
    \Delta d_2(s)&=|d^\pi_2(s)-d_2^{\pi'}(s)|\\
    &= \left|\sum_{t=1}^\infty\gamma^t\left(\rho^\pi(s, \pi(s), t)-\rho^{\pi}(s, \pi'(s), t)\right) d^\pi_{1}(s)\right|\\
    &= d^\pi_{1}(s)\left|\sum_{t=1}^\infty\gamma^t\left(\rho^\pi(s, \pi(s), t)-\rho^{\pi}(s, \pi'(s), t)\right)\right|\\
    &\leq \epsilon_\pi\delta_ad_1^\pi(s)
\end{align*}
Recursively, we have 
$$\Delta d_i(s)\leq\epsilon_\pi^{i-1}\delta_a^{i-1}d_1^\pi(s)$$

Obviously, the change of $\pi$ at state $s$ won't change $d_1^\pi(s)$. According to Eq. (\ref{eq_sum}), 
\begin{align*}
   \Delta d(s)&\leq \sum_{i=1}^\infty \Delta d_i(s)\\ 
   &\leq \sum_{i=2}^\infty(\epsilon_\pi\delta_a)^{i-1} d_1^\pi(s)\\
   &=\frac{\epsilon_\pi\delta_a}{1-\epsilon_\pi\delta_a}d_1^\pi(s)
\end{align*}

According to $\frac{\partial d^\pi(s)}{\partial \pi(s)}=\lim_{\delta_a\to 0}\frac{\Delta d(s)}{\delta_a}$, we have 
$$\left|\frac{\partial d^\pi(s)}{\partial \pi(s)}\right|\leq \epsilon_\pi d_1^\pi(s)$$

This concludes the proof.
\end{proof}

\begin{lemma}\label{lemma_softmax}
Given two policy $\pi_1$ and $\pi_2$, where $\pi_1(a|s)=\frac{\exp(Q_1(s,a))}{\sum_{a'}\exp(Q_1(s,a'))}$. Then 
$$\mathbb E_{a\sim \pi_2}Q_1(s,a)-\mathbb E_{a\sim \pi_1}Q_1(s,a)\leq 1 $$
\end{lemma}

\begin{proof}
Suppose there are two actions $a_1$, $a_2$ under state $s$, and let $Q_1(s,a_1)=u$, $Q_1(s,a_2)=v$. Without loss of generality, let $u\leq v$.
\begin{align*}
    \mathbb E_{a\sim \pi_2}Q_1(s,a)-\mathbb E_{a\sim \pi_1}Q_1(s,a)&\leq v-\frac{ue^u+ve^v}{e^u+e^v}\\
    &=v-\frac{u+ve^{v-u}}{1+e^{v-u}}\\
    &=v-u--\frac{(v-u)e^{v-u}}{1+e^{v-u}}
\end{align*}

Let $f(z)=z-\frac{ze^z}{1+e^z}$, the maximum point $z_0$ of $f(z)$ satisfies $f'(z_0)=0$ where $f'$ is the derivative of $f$, i.e., $\frac{e^{z_0}(1+z_0+e^{z_0}}{(1+e^{z_0})^2}-1=0$. This implies $1+e^{z_0}=z_0e^{z_0}$ and $z_0\in (1,2)$. We have

\begin{align*}
    \mathbb E_{a\sim \pi_2}Q_1(s,a)-\mathbb E_{a\sim \pi_1}Q_1(s,a)&\leq f(v-u)\leq z_0-1 \leq 1
\end{align*}

If the number of action is more than 2 and $Q_1(s,a_1)\geq Q_1(s,a_2)\geq \cdots Q_1(s,a_n)$, let $b_1$ represents $a_1$ and $b_2$ represents all other actions. Then $Q_1(s,b_1)=Q_1(s,a_1)$ and $Q_1(s,b_2)=\sum_{j=2}^n\frac{Q_1(s,a_j)\exp(Q_1(s,a_j)}{\sum_{k=2}^n\exp(Q_1(s,a_k))}$. In this way, we can derive the upper bound of $\mathbb E_{a\sim \pi_2}Q_1(s,a)-\mathbb E_{a\sim \pi_1}Q_1(s,a)$ as above. 
\end{proof}

The following lemma is proposed by Kakade, 
\begin{lemma}[Lemma 6.1 in~\cite{kakade}]\label{lemma_kakade}
For any policy $\tilde \pi$ and $\pi$, 
\begin{equation}
    \eta(\tilde \pi)-\eta(\pi)=\frac{1}{1-\gamma}\mathbb E_{d^{\tilde \pi}(s,a)}[A_\pi(s,a)]
\end{equation}
\end{lemma}

\begin{lemma}\label{lemma_main}
In discrete MDPs, let $\epsilon_{\pi_k}=\sup_{s,a}\sum_{t=1}^\infty \gamma^t\rho^{\pi_k}(s,a,t)$, the optimal solution $p_k$ to a relaxation of optimization problem \ref{eq_pk} satisfies the following relationship:
\begin{equation}\label{eq_sgn}
    p_k(s,a )=\frac{1}{Z^*}\left( D_k(s,a)+\epsilon_k(s,a)\right)
\end{equation}
where $D_k(s,a)=d^{\pi_k}(s,a)(2-\pi_{k}(a|s))\exp \left(-\left|Q_{k}-Q^{*}\right|(s, a)\right) \left|Q_{k}-\mathcal{B}^* Q_{k-1}\right|(s, a)$, $Z^*$ is the normalization constant and $\frac{\epsilon_k(s,a)}{D_k(s,a)}\leq \epsilon_{\pi_k}$.
\end{lemma}

\begin{proof}

Suppose $a^*\sim \pi^*(s)$. Let $\pi=\pi_k$, $\tilde \pi=\pi^*$ in Lemma~\ref{lemma_kakade}, we have
\begin{equation}\label{eq_bound}
\begin{aligned}
    &\quad \ \eta(\pi^*)-\eta(\pi_k)\\
    &=-\frac{1}{1-\gamma}\mathbb E_{d^{\pi_k}(s,a)}A_{\pi^*}(s, a)\\
    &=\frac{1}{1-\gamma}\mathbb E_{d^{\pi_k}(s,a)}(V^*(s)-Q^*(s, a))\\
    &= \frac{1}{1-\gamma}\mathbb E_{d^{\pi_k}(s,a)}\Big(V^*(s)-Q_k(s,a^*)+Q_k(s,a^*)-Q_k(s,a)+Q_k(s,a)-Q^*(s,a)\Big)\\
    &\overset{(a)}{\leq} \frac{1}{1-\gamma}\Big(\mathbb E_{d^{\pi_k}(s)}(Q^*(s, a^*)-Q_k(s,a^*))+\mathbb E_{d^{\pi_k}(s,a)}(Q_k(s, a)-Q^*(s, a))+1\Big)\\
    &\leq \frac{1}{1-\gamma}\Big(\mathbb E_{d^{\pi_k}(s)}\left|Q^*(s, a^*)-Q_k(s,a^*)\right|+\mathbb E_{d^{\pi_k}(s,a)}\left|Q_k(s, a)-Q^*(s, a)\right|+1\Big)\\
    &= \frac{2}{1-\gamma}\Big(\mathbb E_{d^{\pi_k, \pi^*}}\left|Q_k(s,a)-Q^*(s,a)\right|+1\Big),
\end{aligned}
\end{equation}
where $d^{\pi_k, \pi^*}(s,a)=d^{\pi_k}(s)\frac{\pi_k(a|s)+\pi^*(a|s)}{2}$ and (a) uses Lemma \ref{lemma_softmax}.

Since both sides of the above equation have the same minimum (here the minima are given by $Q_k = Q^*$), we can replace the objective in Problem \ref{eq_pk} with the upper bound in Eq.~(\ref{eq_bound}) and solve
the relaxed optimization problem.
\begin{align}
    \label{eq_obj} &\min_{p_k}\qquad \mathbb E_{d^{\pi_k}(s,a)}[|Q_k-Q^*|]\\
    \label{eq_constrain} &~\textnormal{s.t.} \quad Q_k=\argmin_{Q\in \mathcal{Q}}\mathbb E_{p_k}[(Q-\mathcal B^\pi Q_{k-1})^2(s,a)],\\
    &\qquad \ \sum_{s,a}p_k(s,a)=1, \quad p_k(s,a)\geq 0.
\end{align}
Here we use $d^{\pi_k}(s,a)$ to replace $d^{\pi_k, \pi^*}$  because we can not access $\pi^*$, and the best surrogate available is $\pi_k$. 

\textbf{Step 1: Jensen's Inequality.} 
The optimization objective can be further relaxed with Jensen's Inequality, based on the fact that  $f(x)=\exp(-x)$ is a convex function.
\begin{equation}\label{eq_jensen}
\mathbb E_{d^{\pi_k}(s,a)}[|Q_k-Q^*|]=-\log\exp(-\mathbb E_{d^{\pi_k}(s,a)}[|Q_k-Q^*|])\leq -\log\mathbb E_{d^{\pi_k}(s,a)}[\exp(-|Q_k-Q^*|)]
\end{equation}

Similarly, both sides of Eq.~(\ref{eq_jensen}) have the same minimum. We obtain the following new optimization problem by replacing the objective with the upper bound in this equation:
\begin{equation}\label{eq_new_obj}
\begin{split}
    &\min_{p_k} \ -\log\mathbb E_{d^{\pi_k}(s,a)}[\exp(-|Q_k-Q^*|)]\\
    &~\textnormal{s.t.} \quad Q_k=\argmin_{Q\in\mathcal{Q}}\mathbb E_{p_k}[(Q-\mathcal{B}^*Q_{k-1})^2],\\
    &\qquad \ \sum_{s,a}p_k(s,a)=1, \quad p_k(s,a)\geq 0.
\end{split}
\end{equation}

\textbf{Step 2: Computing the Lagrangian.}  In order to optimize problem \ref{eq_new_obj}, we follow the standard procedures of Lagrangian multiplier method. The Lagrangian is:
\begin{equation}
\begin{aligned}
    \mathcal{L}(p_k;\lambda, \mu)=-\log\mathbb E_{d^{\pi_k}(s,a)}[\exp(-|Q_k-Q^*|)]&+\lambda(\sum_{s,a}p_k(s,a)-1)-\mu^Tp_k.\\
\end{aligned}
\end{equation}
where  $\lambda$ and $\mu$ are the Lagrange multipliers. 

\textbf{Step 3: IFT gradient used in the Lagrangian.} $\frac{\partial Q_k}{\partial p_k}$ can be computed according to implicit function theorem (IFT). The IFT gradient is given by:
\begin{equation}\label{eq_gradient}
    \left.\frac{\partial Q_{k}}{\partial p_{k}}\right|_{Q_{k}, p_{k}}=-\left[\operatorname{Diag}\left(p_{k}\right)\right]^{-1}\left[\operatorname{Diag}\left(Q_{k}-\mathcal{B}^{*} Q_{k-1}\right)\right]
\end{equation}
The derivation is similar to that in ~\cite{discor}.

\textbf{Step 4: Approximation of the gradient used in the Lagrangian.} We derive an expression for $\frac{\partial d^{\pi_k}(s,a)}{\partial p_k}$, which will be used when computing the gradient of the Lagrangian. We use $\pi_k$ to denote the policy induced by $Q_k$.

\begin{align*}
    \frac{\partial d^{\pi_k}(s,a)}{\partial p_k}&=\frac{\partial d^{\pi_k}(s,a)}{\partial \pi_k}\frac{\partial \pi_k}{\partial Q_k}\frac{\partial Q_k}{\partial p_k}\\
    &= (d^{\pi_k}(s)+\epsilon_2(s))\frac{\partial \pi_k}{\partial Q_k}\frac{\partial Q_k}{\partial p_k}\\
    &\overset{(b)}{=}(d^{\pi_k}(s)+\epsilon_2(s)\pi_k(a|s)\frac{\sum_{a'\neq a}\exp\left(Q_k(s,a')\right)}{\sum_{a'}\exp(Q_k(s,a'))}\frac{\partial Q_k}{\partial p_k}\\
    &\overset{(c)}{=}d^{\pi_k}(s,a)(1-\pi_k(a|s))\frac{\partial Q_k}{\partial p_k}+\epsilon_2(s)\pi_k(a|s)(1-\pi_k(a|s))\frac{\partial Q_k}{\partial p_k}
\end{align*}

where $\epsilon_2(s)=\frac{\partial d^{\pi_k}(s)}{\partial \pi_k(s)}$. (b) and (c) are based on the fact that $\pi_k(a|s)=\frac{\exp\left(Q_k(s,a)\right)}{\sum_{a'}\exp\left(Q_k(s,a')\right)}$. 

\textbf{Step 5: Computing optimal $p_k$.} By KKT conditions, we have
$$\frac{\partial \mathcal{L}(p_k; \lambda, \mu)}{\partial p_k}=0$$

\begin{align*}
    &\quad \ \frac{\partial \mathcal{L}(p_k; \lambda, \mu)}{\partial p_k}\\
    &=\frac{\exp \left(-\left|Q_{k}-Q^{*}\right|(s, a)\right)}{Z} (d^{\pi_k}(s,a)\operatorname{sgn}\left(Q_{k}-Q^{*}\right) \cdot \frac{\partial Q_{k}}{\partial p_{k}}+\cdot \frac{\partial d^{\pi_k}(s,a)}{\partial p_k})+\lambda-\mu_{s, a}
\end{align*}
where $Z=\mathbb E_{s^{\prime}, a^{\prime}\sim d^{\pi_k}(s,a)} \exp \left(-\left|Q_{k}-Q^{*}\right|\left(s^{\prime}, a^{\prime}\right)\right)$. Substituting the expression of $\frac{\partial Q_k}{\partial p_k}$ and $\frac{\partial d^{\pi_k}(s,a)}{\partial p_k}$ with the results obtained in Step.~3 and Step.~4 respectively, and let $Z_{s,a}=Z(\lambda^*-\mu^*_{s,a})$, we obtain
\begin{equation}
\begin{aligned}
p_{k}(s, a) &= \Big(d^{\pi_k}(s,a)(\textnormal{sgn}(Q_k-Q^*)+1-\pi_k(a|s))\exp \left(-\left|Q_{k}-Q^{*}\right|(s, a)\right) \left|Q_{k}-\mathcal{B}^{*} Q_{k-1}\right|(s, a)\\
&\quad\  +\epsilon_2(s)\pi_k(a|s)(1-\pi_k(a|s))\exp \left(-\left|Q_{k}-Q^{*}\right|(s, a)\right) \left|Q_{k}-\mathcal{B}^{*} Q_{k-1}\right|(s, a)\Big)\frac{1}{Z_{s,a}}
\end{aligned}
\end{equation}

Notably, $Q_k\approx Q^{\pi_k}\leq Q^*$. Thus, $\textnormal{sgn}(Q_k-Q^*)$ always is 1 approximately, so we can simplify this relationship as
\begin{equation}\label{eq_sgn}
\begin{aligned}
    p_k(s,a )&= \frac{1}{Z_{s,a}}\Big(d^{\pi_k}(s,a)(2-\pi_{k}(a|s))\exp \left(-\left|Q_{k}-Q^{*}\right|(s, a)\right) \left|Q_{k}-\mathcal{B}^* Q_{k-1}\right|(s, a)\\
    &\quad +\epsilon_2(s)\pi_k(a|s)(1-\pi_k(a|s))\exp \left(-\left|Q_{k}-Q^{*}\right|(s, a)\right) \left|Q_{k}-\mathcal{B}^{*} Q_{k-1}\right|(s, a)\Big)
\end{aligned}
\end{equation}
The first term is always larger or equal to zero. The second term does not influence the sign of the equation because the absolute value of $\epsilon_2(s)$ is smaller than $d^{\pi_k}(s)$ according to Lemma \ref{lemma_dpik}. Note that Eq. (\ref{eq_sgn}) is always larger or equal to zero. If it is larger than zero then $\mu^*=0$ by the KKT condition. If it is equal to zero, we can let $\mu^*=0$ because the value of $\mu^*$ does not influence $w_k(s,a)$. Without loss of generality, we can let $\mu^*=0$. Then $Z_{s,a}=Z^*=Z\lambda^*$ is a constant with respect to different $s$ and $a$. In this way, we can simplify Eq. (\ref{eq_sgn}) as follows:

$$p_k(s,a)=\frac{1}{Z^*} \left(D_k(s,a)+\epsilon_k(s,a)\right)$$

where $D_k(s,a)=d^{\pi_k}(s,a)(2-\pi_{k}(a|s))\exp \left(-\left|Q_{k}-Q^{*}\right|(s, a)\right) \left|Q_{k}-\mathcal{B}^* Q_{k-1}\right|(s, a)$ and $\epsilon_k(s,a)=\epsilon_2(s)\pi_k(a|s)(1-\pi_k(a|s))\exp \left(-\left|Q_{k}-Q^{*}\right|(s, a)\right) \left|Q_{k}-\mathcal{B}^{*} Q_{k-1}\right|(s, a)$. 

Based on the expression of $D_k(s,a)$ and $\epsilon_k(s,a)$, we have $$\frac{\epsilon_k(s,a)}{D_k(s,a)}=\frac{\epsilon_2(s)(1-\pi_k(a|s))}{d^{\pi_k}(s)(2-\pi_k(a|s))}\leq \epsilon_{\pi_k}$$
The inequality is from \ref{lemma_dpik}. This concludes the proof.
\end{proof}

\begin{theorem}[formal]\label{thm_optimization}
Let $\epsilon_{\pi_k}=\sup_{s,a}\sum_{t=1}^\infty \gamma^t\rho^{\pi_k}(s,a,t)$. Under Assumption \ref{assump_1}, \ref{assump_2} and \ref{assump_3}, if $\frac{d^{\pi_{k}}(s,a)}{\mu(s,a)}$ exists, we have in MDPs with discrete action spaces, the solution $w_k$ to the relaxed optimization problem \ref{eq_regret} is
\begin{equation}\label{eq_discrete_theorem}
    w_k(s,a )=\frac{1}{Z_1^*}\left( E_k(s,a)+\epsilon_{k,1}(s,a)\right). 
\end{equation}
In MDPs with continuous action spaces, the solution is
\begin{equation}\label{eq_continuous_theorem}
    w_k(s,a)=\frac{1}{Z_2^*}\left( F_k(s,a)+\epsilon_{k,2}(s,a)\right).  
\end{equation}
where $$E_k(s,a)=\frac{d^{\pi_k}(s,a)}{\mu(s,a)}(2-\pi_{k}(a|s))\exp \left(-\left|Q_{k}-Q^{*}\right|(s, a)\right) \left|Q_{k}-\mathcal{B}^* Q_{k-1}\right|(s, a)$$
$$F_k(s,a)=2\frac{d^{\pi_k}(s,a)}{\mu(s,a)}\exp \left(-\left|Q_{k}-Q^{*}\right|(s, a)\right) \left|Q_{k}-\mathcal{B}^* Q_{k-1}\right|(s, a),$$
 $Z_1^*$, $Z_2^*$ is the normalization constants and $\max\left\{\frac{\epsilon_{k,1}(s,a)}{E_k(s,a)}, \frac{\epsilon_{k,2}(s,a)}{F_k(s,a)}\right\}\leq \epsilon_{\pi_k}$.

\end{theorem}

\begin{proof}
By Lemma \ref{lemma_main}, for MDPs with discrete action space and state space, we have
$$
p_k(s,a )=\frac{1}{Z^*}\left(D_k(s,a) +\epsilon_k(s,a)\right)
$$
Based on the deviation of Problem \ref{eq_pk}, the solution in this situation is 
\begin{equation}\label{eq_tabular}
 w_k(s,a )=\frac{1}{Z^*}\left( \frac{D_k(s,a)}{\mu(s,a)}+\frac{\epsilon_k(s,a)}{\mu(s,a)}\right)
\end{equation}

The existence of $\frac{d^{\pi_{k}}(s,a)}{\mu(s,a)}$ guarantees the existence of $\frac{D_k(s,a)}{\mu(s,a)}$ and $\frac{\epsilon_k(s,a)}{\mu(s,a)}$. 
Let $E_k(s,a)=\frac{D_k(s,a)}{\mu(s,a)}$ and $\epsilon_{k,1}(s,a)=\frac{\epsilon_k(s,a)}{\mu(s,a)}$, we get Eq. (\ref{eq_discrete_theorem}).

We derive the result for continuous action space and state space as follows, the result for continuous state space and discrete action space, and discrete state space and continuous action space can be derived similarly. 

Remember that $\mathcal{B}^*Q_{k-1}(s,a)=r(s,a)+\gamma\max_{a'}\mathbb E_{s'}Q_{k-1}(s',a')$ and $Q_k(s,a)=\argmin_Q (Q(s,a)-\mathcal{B}^*Q_{k-1}(s,a))^2$, if we use $R(s,a)=Q_k(s,a)-\gamma\max_{a'}\mathbb E_{s'}Q_{k-1}(s',a')$ to replace $r(s,a)$, then $Q_k$ is still the desired Q function after the Bellman update. Since the continuity of $Q_k$, $Q_{k-1}$ and $T$ guarantee $R(s,a)$ is continuous, without loss of generality, we assume $r(s,a)$ is continuous. 

We utilize the techniques in reinforcement learning with aggregated states~\cite{aggregate}. Concretely, we can partition the set of all state-action pairs, with each cell representing
an aggregated state. Such a partition can be defined by a function $\phi:\mathcal S\cup \mathcal A\mapsto \hat{\mathcal{S}}\cup \hat{\mathcal{A}}$, where $\hat{\mathcal{S}}$ is the space of aggregated states and $\hat{\mathcal{A}}$ is the space of aggregated actions. With such a partition, we can discretize the continuous spaces. For example, for the continuous space $\{x\in \mathbb R:0\leq x\leq 10\}$, define $\phi(x)=\sum_{i=1}^9\mathbb I(x\leq x_i)$, and then the space of aggregated states becomes $\{0, 1, 2, \dots, 9\}$, which is a discrete space. 

With function $\phi$, we define the transition function and reward function in this new MDP.
For all $\hat{s}, \hat{s}'\in\hat{\mathcal{S}}$, $\hat{a}\in\hat{\mathcal{A}}$

\begin{equation}
\begin{aligned}
&\hat{T}\left(\hat{s}^{\prime}|\hat{s}, \hat{a}\right) =\frac{\sum_{s,a \in \phi^{-1}(\hat{s}, \hat{a})} \mu(s, a) \sum_{s^{\prime} \in \phi^{-1}\left(\hat{s}^{\prime}\right)} T\left(s'|s, a\right)}{\sum_{s,a \in \phi^{-1}(\hat{s}, \hat{a})} \mu(s, a)} \\
&\hat{r}(\hat{s}, \hat{a}) =\frac{\sum_{s,a \in \phi^{-1}(\hat{s}, \hat{a})} \mu(s, a) r(s, a)}{\sum_{s,a \in \phi^{-1}(\hat{s}, \hat{a})} \mu(s, a)}
\end{aligned}
\end{equation}
where $(\phi(s), \phi(a))$ is simplified as $\phi(s,a)$ and $\phi^{-1}(\hat s, \hat a)$ is the preimage of $(\hat s, \hat a)$.

In this way, Eq. (\ref{eq_tabular}) holds for aggregated state space:
\begin{equation}
    \hat{w}_k(\phi(s,a) )=\frac{1}{\hat{Z}^*}\left( \frac{\hat{D}_k(\phi(s,a))}{\hat{\mu}(\phi(s,a))}+\frac{\hat{\epsilon}_k(\phi(s,a))}{\hat{\mu}(\phi(s,a))}\right)
\end{equation}


Suppose $\hat{\mathcal{S}}$ and $\hat{\mathcal{A}}$ is equipped with metric $m'$, we construct a sequence of functions $\phi_h$, which satisfies

(i) If $m(u_1-u_2)\leq m(u_1-u_3)$, then $m'(\phi_h(u_1)-\phi_h(u_2))\leq m'(\phi_h(u_1)-\phi_h(u_3))$ for all $u_1, u_2, u_3\in \mathcal{S}$ or $u_1, u_2, u_3\in \mathcal{A}$.

(ii) $\lim_{h\to \infty}\textnormal{diam}(\phi_h^{-1}(c))=0$ for all $c\in \mathcal{S}'\cup \mathcal{A}'$.

Based on the two conditions on $\phi_h$ and the continuous of reward function and transition function, for all $ s, s'\in \mathcal{S}$ and $a\in \mathcal{A}$,
\begin{equation}
\begin{aligned}
&\lim_{h\to \infty}\left|\hat{r}(\phi_h(s,a))-r(s,a)\right|=0\\
&\lim_{h\to \infty}\left|\hat{T}(\phi_h(s')|\phi_h(s,a))-T(s'|s,a)\right|=0
\end{aligned}
\end{equation}

This means the constructed MDP approaches the original MDP as $h$ tends to infinity. 

With the Lemma 3 in \cite{abstract},

\begin{align*}
&\lim_{h\to\infty}\mathcal{B}^*\hat{Q}_{k-1}(\phi_h(s,a))=\mathcal{B}^*Q_{k-1}(s,a)\\
&\lim_{h\to\infty}\mathcal{B}^*\hat{Q}^*(\phi_h(s,a))=\mathcal{B}^*Q^*(s,a)
\end{align*}

Note that $Q_k(s,a)=\argmin_Q (Q-\mathcal{B}^*Q_{k-1}(s,a))^2$, $\hat{Q}_k(\phi_h(s,a))=\argmin_Q (Q-\mathcal{B}^*Q_{k-1}(\phi_h(s,a)))^2$, $Q^*(s,a)=\argmin_Q (Q-\mathcal{B}^*Q^*(s,a))^2$ and $\hat{Q}^*(\phi_h(s,a))=\argmin_Q (Q-\mathcal{B}^*Q^*(\phi_h(s,a)))^2$, 
\begin{align*}
    &\lim_{h\to \infty}\hat{Q}_k(\phi_h(s,a))=Q_k(s,a)\\
    &\lim_{h\to \infty}\hat{Q}^*(\phi_h(s,a))=Q^*(s,a)
\end{align*}

Because $\pi(a|s)=\frac{\exp(Q(s,a))}{\sum_{a'}\exp(Q(s,a'))}$, $\pi$ is continuous with respect to $Q$, then we have

\begin{equation*}\label{eq_lim1}
    \lim_{h\to \infty}\hat{\pi}_k(\phi_h(a)|\phi_h(s))=\pi_k(a|s)
\end{equation*}

The continuity of $\pi$ and transition function $T$ guarantees 

$$\lim_{h\to \infty}\hat{d}^{\hat{\pi}_k}(\phi_h(s,a))=d^{\pi_k}(s,a)$$

Therefore, 
\begin{equation}\label{eq_lim2}
\begin{aligned}
    &\lim_{h\to \infty}|\hat{Q}_k-\hat{Q}^*|(\phi((s,a)))=\left|Q_{k}-Q^{*}\right|(s, a)\\
    &\lim_{h\to \infty}|\hat{Q}_k-\hat{\mathcal{B}}^*\hat{Q}^*|(\phi((s,a)))=\left|Q_{k}-\mathcal{B}^* Q_{k-1}\right|(s, a)\\
    &\lim_{h\to \infty} \frac{d^{\hat{\pi}_k}(\phi_h(s,a))}{\hat{\mu}(\phi_h(s,a))}=\frac{d^{\pi_k}(s,a)}{\mu(s,a)}\\
\end{aligned}
\end{equation}

Notably, $\epsilon_2(s)\pi_k(a|s)\leq d^{\pi_k}(s,a)$, the existence of $\frac{d^{\pi_k}(s,a)}{\mu(s,a)}$ implies the existence of $\frac{\epsilon_2(s)\pi_k(a|s)}{\mu(s,a)}$.

\begin{equation}\label{eq_epsilon}
    \lim_{h\to \infty}\frac{\hat{\epsilon}_k(\phi(s,a))}{\hat{\mu}(\phi(s,a))}=\epsilon_{k,1}(s,a)
\end{equation}
where $\epsilon_{k,1}=\frac{\epsilon_k(s)\pi_k(a|s)}{\mu(s,a)}(1-\pi_{k}(a|s))\exp \left(-\left|Q_{k}-Q^{*}\right|(s, a)\right) \left|Q_{k}-\mathcal{B}^* Q_{k-1}\right|(s, a)$.

Using the Eq. (\ref{eq_lim1}), (\ref{eq_lim2}) and (\ref{eq_epsilon}), we have

$$w_k(s,a )=\frac{1}{Z_1^*}\left( E_k(s,a)+\epsilon_{k,1}(s,a)\right). $$

If the action space is continuous, $\pi_k(a|s)=0$, then we have

$$w_k(s,a)=\frac{1}{Z_2^*}\left( F_k(s,a)+\epsilon_{k,2}(s,a)\right)$$

The upper bound of $\frac{\epsilon_{k,1}(s,a)}{E_k(s,a)}$ and $\frac{\epsilon_{k,2}(s,a)}{F_k(s,a)}$ can be derived directly from Lemma \ref{lemma_main}.
This concludes our proof.
\end{proof}

\section{Detailed Proof of Theorem \ref{thm_step}}\label{proof2}

Let $(\mathcal{B}Q)_{k}(s,a)$ denote $|Q_k(s,a)-\mathcal{B}^*Q_k(s,a)|$. We first introduce an assumption.
\begin{assumption}\label{assumption}
At iteration $k$, $(\mathcal{B}Q)_k(s,a)$ is independent of $(\mathcal{B}Q)_k(s',a')$ if $(s,a)\neq (s',a')$  for all $k>0$.
\end{assumption}
This assumption is not strong. If we use a table to represent Q function, it holds apparently. Notably, though we need this assumption in our proof, we can also apply our method on the situation where this assumption doesn't hold. With this assumption, we have the following theorem.

\begin{lemma}\label{thm_cumulative}
Consider a MDP, trajectories $\tau_i=\{s_t^i, a_t^i\}_{t=0}^{T_i}$, $i=0, 1, \dots$ is generated by a policy $\pi$ under this MDP, then we have
\begin{equation}
\begin{aligned}
|Q_k(s,a)-Q^*(s,a)|\leq  &|Q_k(s_t,a_t)-\mathcal{B}^* Q_{k-1}(s_t,a_t)|\\
&+\mathbb E_{\tau}\bigg(\sum_{t'=1}^{{h_\tau^{\pi_k}(s,a)}}\gamma^{t'}\Big((\mathcal{B}Q)_{k-1}(s_{t'},a_{t'})+c\Big)+\gamma^{h_\tau^{\pi_k}(s,a)+1}c\bigg)
\end{aligned}
\end{equation}
where $(\mathcal{B}Q)_k(s_{h_\tau^{\pi_k}(s,a)},a_{h_\tau^{\pi_k}(s,a)})=|Q_k(s_{h_\tau^{\pi_k}(s,a)},a_{h_\tau^{\pi_k}(s,a)})-r(s_{h_\tau^{\pi_k}(s,a)},a_{h_\tau^{\pi_k}(s,a)})|$, $c=\max_{s, a}\big(Q^*(s,a^*)-Q^*(s,a)\big)$, and $(s_{t'}, a_{t'})$ is the $t'$-th state-action pair behind $(s,a)$.
\end{lemma}

\begin{proof}

\begin{align*}
    &\quad \ |Q_k(s_t,a_t)-Q^*(s_t, a_t)|\\
    &=|Q_k(s_t, a_t)-\mathcal{B}^* Q_{k-1}(s_t,a_t)+\mathcal{B}^* Q_{k-1}(s_t,a_t)-\mathcal{B}^* Q^*(s_t,a_t)]|\\
    &\overset{(a)}{\leq}|Q_k(s_t,a_t)-\mathcal{B}^* Q_{k-1}(s_t,a_t)|\\
    &\quad +\gamma |\mathbb E_{p(\tau)}[Q_{k-1}(s_{t+1}, a_{t+1})-Q^*(s_{t+1}, a_{t+1})+Q^*(s_{t+1}, a_{t+1})-Q^*(s_{t+1}, a^*)]|\\
    &\overset{(b)}{\leq} |Q_k(s_t,a_t)-\mathcal{B}^* Q_{k-1}(s_t,a_t)|+\gamma c+\gamma \mathbb E_{\tau}[|Q_{k-1}(s_{t+1}, a_{t+1})-Q^*(s_{t+1}, a_{t+1})|] 
\end{align*}
where the expectation is taken over $s'\sim P(s'|s,a)$, $a'\sim \pi(a'|s')$. (a) uses triangle inequality, (b) is because $f(x)=|x|$ is convex function and using Jensen's Inequality.

Similarly, we have
\begin{align*}
    &\qquad |Q_{k-1}(s_{t+1}, a_{t+1})-Q^*(s_{t+1}, a_{t+1})|\\
    &=|Q_{k-1}(s_{t+1}, a_{t+1})-\mathcal{B}^* Q_{k-1}(s_{t+1},a_{t+1})+\mathcal{B}^* Q_{k-1}(s_{t+1},a_{t+1})-\mathcal{B}^* Q^*(s_{t+1},a_{t+1})]|\\
    &\leq (\mathcal{B}Q)_{k-1}(s_{t+1},a_{t+1})+\gamma c +\gamma \mathbb E_{\tau}[|Q_{k-1}(s_{t+2}, a_{t+2})-Q^*(s_{t+2}, a_{t+2})|] 
\end{align*}
Recursively, 
\begin{equation}
\begin{aligned}
&\quad \ |Q_k(s,a)-Q^*(s,a)|\\
&\leq |Q_k(s_t,a_t)-\mathcal{B}^* Q_{k-1}(s_t,a_t)|+\sum_{t'=1}^{{h_\tau^{\pi_k}(s,a)}}\gamma^{t'}\Big((\mathcal{B}Q)_{k-1}(s_{t'},a_{t'})+c\Big)+\gamma^{h_\tau^{\pi_k}(s,a)+1}c\\
\end{aligned}
\end{equation}
where  $(\mathcal{B}Q)_{k-1}(s_{h_\tau^{\pi_k}(s,a)},a_{h_\tau^{\pi_k}(s,a)})=|Q_{k-1}(s_{h_\tau^{\pi_k}(s,a)},a_{h_\tau^{\pi_k}(s,a)})-r(s_{h_\tau^{\pi_k}(s,a)},a_{h_\tau^{\pi_k}(s,a)})|$.
\end{proof}

This theorem shows that the cumulative Bellman error with a constant $c$ is an upper bound of $|Q_k-Q^*|$, so we can use Bellman error with the constant to estimate this quantity. 

Suppose the Q function is equipped with a learning rate $\alpha$, i.e., $Q_k=\alpha(\mathcal{B}^*Q_{k-1}-Q_{k-1})+(1-\alpha)Q_{k-1} $, we have the following lemma,

\begin{lemma}\label{lemma_B}
\begin{equation}
\begin{aligned}
    &\left\|\mathcal{B}^*Q_k-Q_k\right\|_\infty\leq (\alpha\gamma+1-\alpha)^k\left\|\mathcal{B}^*Q_0-Q_0\right\|_\infty\\
    &\left\|\mathcal{B}^\ast Q_{k-1}-Q_k\right\|_\infty\leq (1-\alpha)(\alpha\gamma+1-\alpha)^{k-1}\left\|\mathcal{B}^\ast Q_0-Q_0\right\|_\infty
\end{aligned}
\end{equation}
\end{lemma}
\begin{proof}
\begin{align*}
    &\quad \quad Q_k=Q_{k-1}+\alpha (\mathcal{B}^*Q_{k-1}-Q_{k-1})\\
    &\Longrightarrow \mathcal{B}^*Q_{k-1}-Q_k=\frac{1-\alpha}{\alpha}(Q_k-Q_{k-1})\\
\end{align*}
\begin{equation}\label{eq_B}
\begin{aligned}
    \left\|\mathcal{B}^*Q_{k}-Q_k\right\|_\infty&\leq \left\|\mathcal{B}^*Q_{k}-\mathcal{B}^*Q_{k-1}\right\|_\infty+\left\|\mathcal{B}^*Q_{k-1}-Q_k\right\|_\infty\\
    &\leq \gamma \left\|Q_k-Q_{k-1}\right\|_\infty+\left\|\mathcal{B}^*Q_{k-1}-Q_k\right\|_\infty\\
    &\leq (\gamma+\frac{1-\alpha}{\alpha})\left\|Q_k-Q_{k-1}\right\|_\infty\\
    &\leq (\alpha\gamma+1-\alpha) \left\|\mathcal{B}^*Q_{k-1}-Q_{k-1}\right\|_\infty
\end{aligned}
\end{equation}

\begin{equation}\label{eq_A}
\begin{aligned}
\left\|Q_k-\mathcal{B}^\ast Q_{k-1}\right\|_\infty&\leq (1-\alpha)\left\|Q_{k-1}-\mathcal{B}^\ast Q_{k-1}\right\|_\infty\\
&\leq (1-\alpha)\left(\left\|Q_{k-1}-\mathcal{B}^\ast Q_{k-2}\right\|_\infty+\left\|\mathcal{B}^\ast Q_{k-2}-\mathcal{B}^\ast Q_{k-1}\right\|_\infty\right)\\
&\leq (1-\alpha)\left(\gamma\left\|Q_{k-2}-Q_{k-1}\right\|_\infty+\left\|Q_{k-1}-\mathcal{B}^\ast Q_{k-2}\right\|_\infty\right)\\
&\overset{(a)}{\leq} (1-\alpha)(\gamma+\frac{1-\alpha}{\alpha})\left\|Q_{k-1}-Q_{k-2}\right\|_\infty\\
&\leq (1-\alpha)(\alpha\gamma+1-\alpha)\left\|\mathcal{B}^\ast Q_{k-2}-Q_{k-2}\right\|_\infty
\end{aligned}
\end{equation} 

Then we can finish the proof by recursively applying Eq. (\ref{eq_B}) and (\ref{eq_A}).
\end{proof}

\begin{lemma}[Azuma]\label{Azuma}
Let $X_0,X_1, \dots$ be a martingale such that, for all $k\geq 1$,
$|X_k-X_{k-1}|\leq c_k$,
Then
\begin{equation}
    \textnormal{Pr}[|X_n-X_0|\geq t]\leq 2\exp(-\frac{t^2}{2\sum^n_{k=1}c^2_k}).
\end{equation}
\end{lemma}

In the follows, we denote $\sum_{t=1}^{h^{\pi_k}_\tau(s,a)}\gamma^{t}(\mathcal{B}Q)_k(s_{t},a_{t})$ as $\mathcal{B}(s, a, k)$.

\begin{lemma}\label{thm_expectation}
Let $\phi_k=(\alpha\gamma+1-\alpha)^k||\mathcal{B}^*Q_0-Q_0||_\infty$, $f(t)=\frac{\gamma-\gamma^{t+1}}{1-\gamma}$ and $\epsilon_{\pi_k}=\sup_{s,a}\sum_{t=1}^{\infty} \gamma^t\rho^{\pi_k}(s,a,t)$.
Under Assumption \ref{assumption},  with probability at least $1-\delta$,

\begin{equation}\label{eq_confidence}
|\mathcal{B}(s,a,k)-f(h^{\pi_k}_\tau(s,a))\mathbb E[(\mathcal{B}Q)_k(s_{t}, a_{t})]|
\leq \sqrt{2f(h^{\pi_k}_\tau(s,a))^2(1+\epsilon_{\pi_k})^2\phi_k^2 \log \frac{2}{\delta}}.
\end{equation}
\end{lemma}

\begin{proof}

Let 
$\mathcal{F}_h=\sigma_t(s_0, a_0, r_0, \dots, s_{h-1}, a_{h-1}, r_{h-1})$
be the $\sigma$-field summarising the information available just before $s_t$ is observed.

Define $Y_h=\mathbb E[\mathcal{B}(s, a, k)|\mathcal{F}_h]$, then $Y_h$ is a martingale because
\begin{align*}
\mathbb E[Y_h|\mathcal{F}_{h-1}]=\mathbb E[\mathbb E[\mathcal{B}(s, a, k)|\mathcal{F}_h]|\mathcal{F}_{h-1}]=\mathbb E[\mathcal{B}(s, a, k)|\mathcal{F}_{h-1}]=Y_{h-1}
\end{align*}

$$
\begin{aligned}
|Y_h-Y_{h-1}|&\leq \gamma^h(1+\epsilon_{\pi_k})\left\|\mathcal{B}^*Q_{k}-Q_k\right\|_\infty\\
&\leq \gamma^h(1+\epsilon_{\pi_k})(\alpha\gamma+1-\alpha)^k\left\|\mathcal{B}^*Q_0-Q_0\right\|_\infty=\gamma^h(1+\epsilon_{\pi_k})\phi_k
\end{aligned}
$$
By Azuma's lemma, 

\begin{equation*}
    \textnormal{Pr}\bigg(|\mathcal{B}(s, a, k)-\mathbb E[\mathcal{B}(s, a, k)]|\geq \sqrt{2\Big(\frac{\gamma-\gamma^{h_\tau^{\pi_k}+1}}{1-\gamma}\Big)^2(1+\epsilon_{\pi_k})^2\phi_k^2 \log \frac{2}{\delta}}\bigg)\leq \delta
\end{equation*}


\end{proof}
Since $(\alpha\gamma+1-\alpha)$ is less than 1, $\phi_k$ decreases exponentially as $k$ increases. This theorem shows that we can use the average Bellman error as a surrogate of Bellman error at specific state-action pair without losing too much accuracy. In this way, $|Q_k-Q^*|(s,a)$ is merely related to the distance to end of the state-action pair. 

\begin{theorem}[formal]
Under Assumption \ref{assumption}, with probability at least $1-\delta$, we have
\begin{equation}
\begin{aligned}
&\quad \ |Q_k(s,a)-Q^*(s,a)|\\
&\leq \mathbb E_{\tau}\bigg(f(h^{\pi_k}_\tau(s,a))\big(\mathbb E[(\mathcal{B}Q)_k(s_{t'}, a_{t'})]+c\big)+\gamma^{h^{\pi_k}_\tau(s,a)+1}c\bigg)+g(k, \delta)
\end{aligned}
\end{equation}
where $g(k, \delta)=(1-\alpha)\phi_{k-1}+\sqrt{2f(h^{\pi_k}_\tau(s,a))^2(1+\epsilon_{\pi_k})^2\phi_k^2 \log \frac{2}{\delta}}$ .
\end{theorem}
\begin{proof}

According to Lemma \ref{thm_cumulative}, we have
\begin{equation}\label{eq_final}
\begin{aligned}
|Q_k(s,a)-Q^*(s,a)|\leq  &|Q_k(s_t,a_t)-\mathcal{B}^* Q_{k-1}(s_t,a_t)|\\
&+\mathbb E_{\tau}\bigg(\sum_{t'=1}^{{h_\tau^{\pi_k}(s,a)}}\gamma^{t'}\Big((\mathcal{B}Q)_{k-1}(s_{t'},a_{t'})+c\Big)+\gamma^{h_\tau^{\pi_k}(s,a)+1}c\bigg)
\end{aligned}
\end{equation}
Using Lemma \ref{lemma_B}, we can upper bound $|Q_k(s_t,a_t)-\mathcal{B}^* Q_{k-1}(s_t,a_t)|$ as $(1-\alpha)\phi_{k-1}$. 
With Lemma \ref{thm_expectation}, $\sum_{t=1}^{h^{pi_k}_\tau(s,a)}\gamma^{t}(\mathcal{B}Q)_k(s_{t},a_{t})$ can be bounded by right hand side of Eq. (\ref{eq_confidence}) with probability $1-\delta$. Substitute the bounds into Eq. (\ref{eq_final}), we have
\begin{equation*}
\begin{aligned}
|Q_k(s,a)-Q^*(s,a)|&\leq  (1-\alpha)\phi_{k-1}+\sqrt{2f(h^{\pi_k}_\tau(s,a))^2(1+\epsilon_{\pi_k})^2\phi_k^2 \log \frac{2}{\delta}}\\
&\quad \ +\mathbb E_{\tau}\bigg(f(h^{\pi_k}_\tau(s,a))\big(\mathbb E[(\mathcal{B}Q)_k(s_{t'}, a_{t'})]+c\big)+\gamma^{h^{\pi_k}_\tau(s,a)+1}c\bigg)\\
&\leq g(k,\delta)\\
&\quad \ +\mathbb E_{\tau}\bigg(f(h^{\pi_k}_\tau(s,a))\big(\mathbb E[(\mathcal{B}Q)_k(s_{t'}, a_{t'})]+c\big)+\gamma^{h^{\pi_k}_\tau(s,a)+1}c\bigg)
\end{aligned}
\end{equation*}
\end{proof}

\section{Algorithms}\label{algorithms}
\begin{algorithm}[htb!]
\caption{ReMERN}
\label{alg:discor}
\begin{algorithmic}[1]
\STATE Initialize Q-values $Q_\theta(s, a)$, a replay buffer $\mu$, an {\color{red} error model $\Delta_\phi(s, a)$}, and a {\color{red}weight model $\kappa_\psi$}.
\FOR{step $k$ in $\{1,\dots , N\}$}
\STATE Collect $M$ samples using $\pi_k$, add them to replay buffer $\mu$, sample $\{(s_i, a_i)\}_{i=1}^N\sim \mu$.
\STATE Evaluate $Q_\theta(s,a)$, $\Delta_\phi(s,a)$ and $\kappa_\psi(s,a)$ on samples $(s_i,a_i)$.
\STATE Compute target values for $Q$ and $\Delta$ on samples:\\
       $y_i=r_i+\gamma\max_{a'}Q_{k-1}(s'_i,a')$.\\
       $\hat{a}_i=\argmax_aQ_{k-1}(s'_i,a)$.\\
       $\hat{\Delta}=|Q_\theta(s,a)-y_i|+\gamma\Delta_{k-1}(s'_i,\hat{a}_i)$.
\STATE {\color{red}Optimize $\kappa_\psi$ using \\
$$
L_{\kappa}(\psi):=\mathbb{E}_{\mathcal{D}_{\mathrm{s}}}\left[f^{*}\left(f^{\prime}\left(\kappa_{\psi}(s, a)\right)\right)\right]-\mathbb{E}_{\mathcal{D}_{\mathrm{f}}}\left[f^{\prime}\left(\kappa_{\psi}(s, a)\right)\right].
$$}
\STATE {\color{red}Compute $w_k$ using\\
$$
w_k(s, a)\propto \frac{d^{\pi_k}(s, a)}{\mu(s, a)}\exp \left(-\gamma\left[P^{\pi^{w_{k-1}}} \Delta_{k-1}\right](s, a)\right).
$$}
\STATE Minimize Bellman error for $Q_\theta$ weighted by $w_k$.\\
       $\theta_{k+1} \leftarrow \underset{\theta}{\operatorname{argmin}} \frac{1}{N} \sum_{i}^{N} {\color{red}w_{k}\left(s_{i}, a_{i}\right)}\left(Q_{\theta}\left(s_{i}, a_{i}\right)-y_{i}\right)^{2}$.
\STATE {\color{red}Minimize ADP error for training $\phi$.\\
       $\phi_{k+1} \leftarrow \underset{\phi}{\operatorname{argmin}} \frac{1}{N} \sum_{i=1}^{N}\left(\Delta_{\phi}\left(s_{i}, a_{i}\right)-\hat{\Delta}_{i}\right)^{2}$.}
\ENDFOR
\end{algorithmic}
\end{algorithm}
\newpage

\begin{algorithm}[htb!]
\caption{ReMERT}
\label{alg:TCE}
\begin{algorithmic}[1]
\STATE Initialize Q-values $Q_\theta(s, a)$, a replay buffer $\mu$, and a {\color{red}weight model $\kappa_\psi$}.
\FOR{step $k$ in $\{1,\dots , N\}$}
\STATE Collect $M$ samples using $\pi_k$, add them to replay buffer $\mu$, sample $\{(s_i, a_i)\}_{i=1}^N\sim \mu$.
\STATE Evaluate $Q_\theta(s,a)$ and $\kappa_\psi(s,a)$ on samples $(s_i,a_i)$.
\STATE Compute target values for $Q$ on samples:\\
       $y_i=r_i+\gamma\max_{a'}Q_{k-1}(s'_i,a')$.\\
       $\hat{a}_i=\argmax_aQ_{k-1}(s'_i,a)$.
\STATE {\color{red}Optimize $\kappa_\psi$ using\\
$$
L_{\kappa}(\psi):=\mathbb{E}_{\mathcal{D}_{\mathrm{s}}}\left[f^{*}\left(f^{\prime}\left(\kappa_{\psi}(s, a)\right)\right)\right]-\mathbb{E}_{\mathcal{D}_{\mathrm{f}}}\left[f^{\prime}\left(\kappa_{\psi}(s, a)\right)\right].
$$}
\STATE {\color{red}Compute $w_k$ using\\
$$
    w_k(s,a)\propto \frac{d^{\pi_k}(s, a)}{\mu(s, a)}\exp \Big(-\mathbb E_{q_{k-1}(\tau)}\text {TCE}_c(s,a)\Big).
    $$}
\STATE Minimize Bellman error for $Q_\theta$ weighted by $w_k$.\\
       $\theta_{k+1} \leftarrow \underset{\theta}{\operatorname{argmin}} \frac{1}{N} \sum_{i}^{N} {\color{red}w_{k}\left(s_{i}, a_{i}\right)}\left(Q_{\theta}\left(s_{i}, a_{i}\right)-y_{i}\right)^{2}$.
\ENDFOR
\end{algorithmic}
\end{algorithm}
\section{Experiments}\label{experiments}
We now present some additional experimental results and experiment details which we could not present due to shortage of space in the main body.
\subsection{Cumulative Recurring Probability on Atari Games}
\begin{table}[h]\label{tab_recurrence}
\centering
\caption{The value of $\epsilon_\pi$ with different policies in Atari games. }
\begin{tabular}{lccc}
\hline
\multicolumn{1}{c}{} & Initial (Random) policy & Policy at timestep 100k & Policy at timestep 200k \\ \hline
Pong                 & 0.00             & 0.00                       & 0.00                       \\ \hline
Breakout             & 0.00             & 0.00                       & 0.00                       \\ \hline
Kangaroo             & 0.44            & 0.32                      & 0.15                      \\ \hline
KungFuMaster         & 0.66            & 0.06                       & 0.01                       \\ \hline
MsPacman             & 0.44            & 0.04                       & 0.00                       \\ \hline
Qbert                & 0.02             & 0.05                       & 0.00                       \\ \hline
Enduro               & 0.00             & 0.00                       & 0.00                       \\ \hline
\end{tabular}
\end{table}
In Pong, Breakout and Enduro, $\epsilon_\pi$ keeps zero, so there is no error terms in such environments. For KungFuMaster and MsPacman, though $\epsilon_\pi$ is high for the initial policy, its value decreases rapidly as the policy updates. The error term in Kangaroo induces some error but $\epsilon_\pi$ is still much smaller than one. 
The experiment results imply we can ignore the error term in most reinforcement learning environments.

\subsection{Illustrations on Stable Temporal Structure}
We conduct an extra experiment in the GridWorld environment to support our claim that the trajectories have a stable temporal ordering of states. Fig.~\ref{fig:ass2} shows an empirical verification of the stable temporal ordering of states property. 

The result shows that the variance of distance to end in one state is not large and decreases fast in training process. This means the property is not a strong assumption and can be satisfied in many environments. 

\begin{figure}[h]
    \centering
    \includegraphics[width=0.5\textwidth]{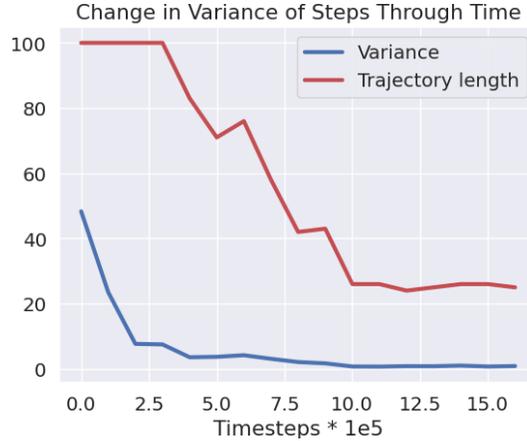}
    \caption{Change in variance of distance to end through time. 
    For each timestep, the red line shows the average trajectory length in the last 500 states. The blue line shows the average variance of the last 500 states, where the variance for each state is calculated from its positions in their corresponding trajectories.}
    \label{fig:ass2}
\end{figure}


\subsection{Description of Involved Environments}
The Meta-World benchmark~\cite{metaworld} includes a series of robotic manipulation tasks. These tasks differ from traditional goal-based ones in that the target objects of the robot. For example, the screw in the hammer task has randomized positions and can not be observed by RL agents. Therefore, Meta-World suite can be highly challenging for current state-of-the-art off policy RL algorithms. Visual descriptions for the Meta-World tasks are shown in Fig.~\ref{fig:metaworld4}. DisCor \cite{discor} showed preferable performance on some Meta-World tasks compared to SAC and PER \cite{per}, but the learning process is slow and unstable.

\begin{figure}[h]
    \centering
    \includegraphics[width=\textwidth]{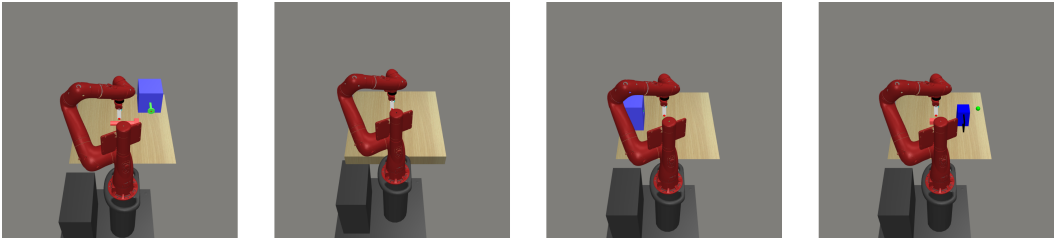}
    \caption{Pictures for Meta-World tasks hammer, sweep, peg-insert-side and stick-push.}
    \label{fig:metaworld4}
\end{figure}

\subsection{Extended Results on Atari Environment}
We evaluate ReMERN on an extended collection of Atari environments. As is shown in Tab.~\ref{tab:atari_extend}, ReMERN outperforms baseline methods in most of the environments.

	\begin{table}[b]
			\centering
			\caption{Extended experiments on Atari.}
			\begin{tabular}{{lcccc}}
			\toprule[1.25pt]
			Environments	 & DQN(Nature) & DQN(Baseline) & PER(rank-b.) & ReMERT(Ours)  \\
			\hline
            Assault & 3395$\pm$775 & 8260$\pm$2274 & 3081 & \textbf{9952}$\pm$3249\\
            \hline
            BankHeist & 429$\pm$650 & 1116$\pm$34 & 824 & \textbf{1166}$\pm$82\\
            \hline
            BeamRider & 6846$\pm$1619 & 5410$\pm$1178 & \textbf{12042} & 5542$\pm$1577\\ 
            \hline
            Breakout & 401$\pm$27 & 242$\pm$79 & \textbf{481} & 223 $\pm$79\\
            \hline
            Enduro & 302$\pm$25 & 1185$\pm$100 & 1266 & \textbf{1303}$\pm$258\\
            \hline
            Kangaroo & 6740$\pm$2959 & 6210$\pm$1007 & \textbf{9053} & 7572$\pm$1794\\
            \hline
            KungFuMaster & 23270$\pm$5955 & 29147$\pm$7280 & 20181 & \textbf{35544}$\pm$8432 \\
            \hline
            MsPacman & 2311$\pm$525 & 3318$\pm$647 & 964.7 & \textbf{3481}$\pm$1350\\
            \hline
            Riverraid & 8316$\pm$1049 & 9609$\pm$1293 & 10205 & \textbf{10215}$\pm$1815 \\
            \hline
            SpaceInvaders & \textbf{1976}$\pm$893 & 925$\pm$371 & 1697 & 877$\pm$249\\
            \hline
            UpNDown & 8456$\pm$3162 & 134502$\pm$68727 & 16627 & \textbf{145235}$\pm$94643 \\
            \hline
            Qbert & 10596$\pm$3294 & 13437$\pm$2537 & 12741 & \textbf{14511}$\pm$1138\\
            \hline
            Zaxxon & 4977$\pm$1235 & 5070$\pm$997 & \textbf{5901} & 5738$\pm$1296 \\
            \hline
			\end{tabular}
			\label{tab:atari_extend}
	\end{table}
	
\subsection{Extended Evaluation on Gridworld}\label{sec_gridworld}
Aside from the FourRooms environment in Gridworld, we also conduct comparative evaluation on the Maze environment. The results are shown in Fig.~\ref{fig:allgrid}. The Maze environment perfectly fits for our TCE-based prioritization, and TCE achieves the best performance among other methods.

\begin{figure}[h]
    \centering
    \includegraphics[width=\textwidth]{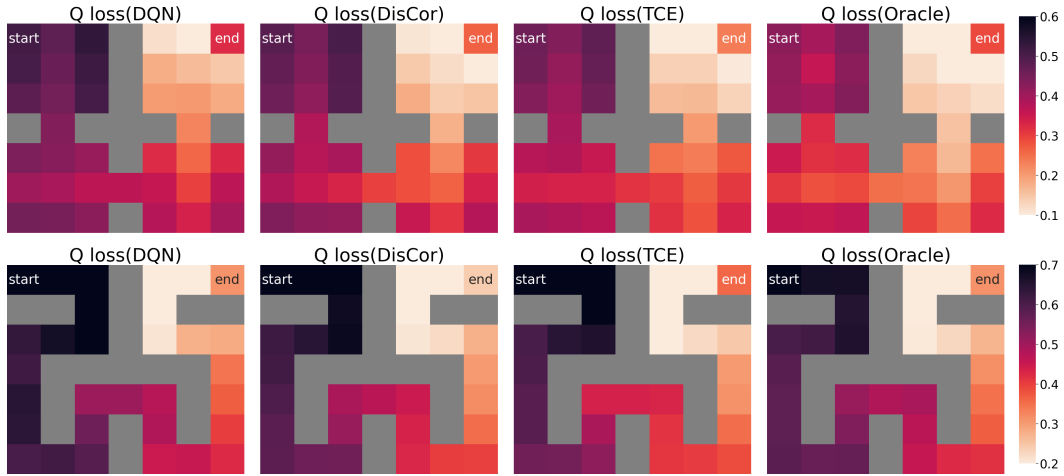}
    \caption{Extended evaluation results on Gridworld.}
    \label{fig:allgrid}
\end{figure}




\subsection{The Relation Between Distance to End and $|Q_k-Q^*|$}
\label{sec:eq_q}

\begin{figure}[h]
    \includegraphics[width=\textwidth]{figs/motivating/mujoco/eval_qloss_8_qstar.png}
    \caption{The relationship between $|Q_k-Q^*|$ and distance to end in two MuJoCo tasks (Ant and Hopper).}
    \label{fig:relation}
\end{figure}

In section \ref{sec_gridworld}, the relationship between $|Q_k-Q^*|$ and distance to end has been shown in tabular environments. In this section, we explore the relationship in environments with continuous state and action spaces, i.e., Ant and Hopper tasks of MuJoCo environment. Since $Q^*$ is inaccessible in these complex continuous control tasks, we approximate it by doing Monte-Carlo rollout using the best policy during training. The results are shown in Fig.~\ref{fig:relation}. 

The negative correlation between the two quantities is obvious in Ant-v2, but vague in Hopper-v2. It is because Hopper is a relatively easy task so that all state-action pair have small Q loss and don't have such correlation. The performance of ReMERT shown in Section 4 accords with this observation. ReMERT outperforms other algorithms in environments with a high correlation between the two quantities, and has a relatively poor performance in environments without such correlation. 

\subsection{Implementation Details}

\subsubsection{Algorithm Details}
\paragraph{Weight Normalization} To stabilize the prioritization, we apply normalization to the estimation of two terms: $\frac{d^{\pi_k}(s,a)}{\mu(s,a)}$ and $\exp(-|Q_k-Q^*|)$. 

First, we introduce the normalization in calculating $\frac{d^{\pi_k}(s,a)}{\mu(s,a)}$, which aims to address the finite sample size issue. The normalization is:
$$
\tilde{\kappa}_\psi(s,a):=\frac{\kappa_\psi(s,a)^{1/T}}{\mathbb E_{\mathcal{D}_s}[\kappa_\psi(s,a)^{1/T}]}
$$
where $\mathcal{D}_s$ is the slow buffer and $T$ is temperature.

ReMERN uses $\Delta_\phi$ to fit the discounted cumulative Bellman error. However, the Bellman error has different scales in various environments, leading to erroneous weight. We normalize it by dividing a moving average of Bellman error. The divisor is denoted as $\tau$. Then the estimation of $\exp(-|Q_k-Q^*|)$ becomes 
$$\exp \left(-\frac{\gamma\left[P^{\pi^{w_{k-1}}} \Delta_{k-1}\right](s, a)}{\tau}\right)
$$
\paragraph{Truncated TCE} TCE may suffer from a big deviation when $h^\pi_\tau(s,a)$ is too large or too small. To tackle this issue and improve the stability of the prioritization, we clip the output of TCE into $[b_1, b_2]$, where $b_1$ and $b_2$ are regarded as hyperparameters.

\paragraph{Baselines}
For the ReMERN and ReMERT algorithms in continuous action spaces with sensory observation, we alter the re-weighting strategy to $\frac{d^{\pi_k}(s,a)}{\mu(s,a)}$ and TCE approximation based on the source code provided by DisCor\footnote{https://github.com/ku2482/discor.pytorch}. For the algorithms in discrete action spaces with pixel observation, we employ the baseline Tianshou\footnote{https://github.com/thu-ml/tianshou}~\cite{tianshou} and add corresponding components.

\subsubsection{Hyperparameter Details}
The hyperparameters of our ReMERN and ReMERT algorithms include network architectures, learning rates, temperatures in on-policy reweight and DisCor, and the lower and upper bound in TCE algorithm. They are specified as follows:
\begin{itemize}[leftmargin=*]
    \item \textbf{Network architectures}~~We use standard Q and policy network in MuJoCo benchmark with hidden network sizes [256, 256]. In Meta-World we add an extra layer and the hidden network sizes are [256, 256, 256]. The networks computing $\Delta$ and $\kappa$  have one extra layer than the corresponding Q and policy network.
    \item \textbf{Learning rates}~~The learning rate for continuous control tasks, including Meta-World, MuJoCo and DMC, is set to be 3e-4 for Q and policy networks alike. For Atari games, the learning rate is set to be 1e-4 and fixed across all environments.
    
    \item \textbf{Temperatures}~~The temperature for weights related with $\frac{d^{\pi_k}(s,a)}{\mu(s,a)}$ is 7.5 and fixed across different environments. Also, DisCor has a temperature hyperparameter related to the output normalization of the error network. We keep it unchanged in the Meta-World and DMC benchmark, and divide it by 20 in MuJoCo environments to make it compatible with on-policy prioritization weights.
    \item \textbf{Bounds in TCE}~~We select time-adaptive lower and upper bounds for TCE. The lower bound rises from 0.4 when training begins to 0.9 when it ends, and the upper bound drops from 1.6 to 1.1 accordingly. The bounds are fixed across different environments.
    \item \textbf{Random Seeds}~~In MuJoCo, Meta-World and DMC benchmarks, we run each experiment with four random seeds. The results are plotted with the mean of the four experiments. In Atari games, we run experiments with three random seeds and select the one with max return.
\end{itemize}